\pdfoutput=1
\documentclass[conference]{IEEEtran}
\IEEEoverridecommandlockouts

\usepackage{amsmath,graphicx,epstopdf,amssymb,amsthm}
\usepackage{cite}
\usepackage{epstopdf}
\usepackage{amsmath,bm}
\usepackage{verbatim}
\usepackage[utf8]{inputenc}
\usepackage[english]{babel}
\usepackage{subcaption}
\usepackage{caption}
\usepackage{enumitem}
\usepackage[ruled,vlined,linesnumbered]{algorithm2e}
\DeclareCaptionLabelFormat{lc}{\MakeLowercase{#1}~#2}
\captionsetup{labelfont=sc,labelformat=lc}
\usepackage[dvipsnames]{xcolor}

\newenvironment{skproof}{\noindent\textit{Sketch of  Proof:}}{\hfill$\blacksquare$}

\newtheorem{theorem}{Theorem}
\newtheorem{lemma}{Lemma}
\newtheorem{fact}{Fact}
\newtheorem{definition}{Definition}
\newtheorem{proposition}{Proposition}
\newtheorem{corollary}{Corollary}

\newtheorem{remark}{Remark}

\newtheorem{assumption}{Assumption}

\allowdisplaybreaks
\usepackage[protrusion=true,expansion=true]{microtype}

\usepackage[font=small]{caption}

\usepackage{mathtools}

\newcommand{\addFL}[1]{\textcolor{blue}{#1}}
\DeclarePairedDelimiter\floor{\lfloor}{\rfloor}

\newcommand{\nm}[1]{{\color{red} {\bf [NM: #1]}}}
\newcommand{\add}[1]{\textcolor{red}{#1}}

\usepackage{cancel}
   
\usepackage{soul,xcolor}
\DeclareMathOperator*{\argmin}{arg\,min}
\usepackage{bbm, dsfont}
 
\def\BibTeX{{\rm B\kern-.05em{\sc i\kern-.025em b}\kern-.08em
    T\kern-.1667em\lower.7ex\hbox{E}\kern-.125emX}}
\begin{document}
\bstctlcite{IEEEexample:BSTcontrol}
\setulcolor{red}
\setul{red}{2pt}
\setstcolor{red}


\title{Federated Learning Beyond the Star: Local D2D Model Consensus with Global Cluster Sampling}








\author{Frank Po-Chen Lin$^*$, Seyyedali~Hosseinalipour$^*$, Sheikh Shams Azam$^*$,\\ Christopher G. Brinton$^*$, and Nicol\`o Michelusi$^\dagger$
\\
$^*$School of Electrical and Computer Engineering, Purdue University, IN, USA\\$^\dagger$School of Electrical, Computer and Energy Engineering, Arizona State University, AZ, USA \\
\vspace{-1mm}
\thanks{An extended version of this paper is published in IEEE JSAC~\cite{lin2021timescale}. The work of N. Michelusi was supported in part by NSF under grants CNS-1642982 and CNS-2129015. C. Brinton and F. Lin were supported in part by ONR under grant N00014-21-1-2472.}

}
\maketitle

\begin{abstract} Federated learning has emerged as a popular technique for distributing model training across the network edge. Its learning architecture is conventionally a star topology between the devices and a central server. In this paper, we propose \textit{two timescale hybrid federated learning} ({\tt TT-HF}), which migrates to a more distributed topology via device-to-device (D2D) communications. In {\tt TT-HF}, local model training occurs at devices via successive gradient iterations, and the synchronization process occurs at two timescales: (i) macro-scale, where global aggregations are carried out via device-server interactions, and (ii) micro-scale, where local aggregations are carried out via D2D cooperative consensus formation in different device clusters. Our theoretical analysis reveals how device, cluster, and network-level parameters affect the convergence of {\tt TT-HF}, and leads to a set of conditions under which a convergence rate of $\small \mathcal{O}(1/t)$ is guaranteed. Experimental results demonstrate the improvements in convergence and utilization that can be obtained by {\tt TT-HF} over state-of-the-art federated learning baselines.

\end{abstract}

\section{Introduction}



\noindent Efforts to distribute machine learning (ML) model training across contemporary networks have recently focused on federated learning. The conventional architecture that has been proposed in federated learning, depicted in Fig.~\ref{fig:simpleFL}, is a star topology between edge devices and a server. Operationally, there are two steps repeated in sequence: (i) \textit{local updating/learning}, where devices train their local models on their own datasets, often using stochastic gradient descent (SGD); and (ii) \textit{global aggregation}, where a server aggregates the local models into a global model, and synchronizes the devices.

By avoiding raw data transfers over a network, federated learning results in energy, delay, and bandwidth savings, and also alleviates privacy concerns~\cite{frank2020delay,azam2020towards}. However, implementing it over large-scale networks composed of many devices still faces two key challenges: (i) 
sequential uplink transmissions can require prohibitive energy consumption for devices, and (ii) statistical heterogeneity across local device datasets can hinder convergence speed and resulting global model accuracy~\cite{hosseinalipour2020federated}.

We propose addressing these challenges by augmenting federated learning with a third step: \textit{local model aggregations} within local clusters of devices. Facilitated by device-to-device (D2D) communications in 5G-and-beyond wireless~\cite{hosseinalipour2020federated}, we will show how such local aggregations can address statistical heterogeneity while reducing communication resource utilization.

\begin{figure}[t]
\centering
\includegraphics[width=3.5in, height=.55in]{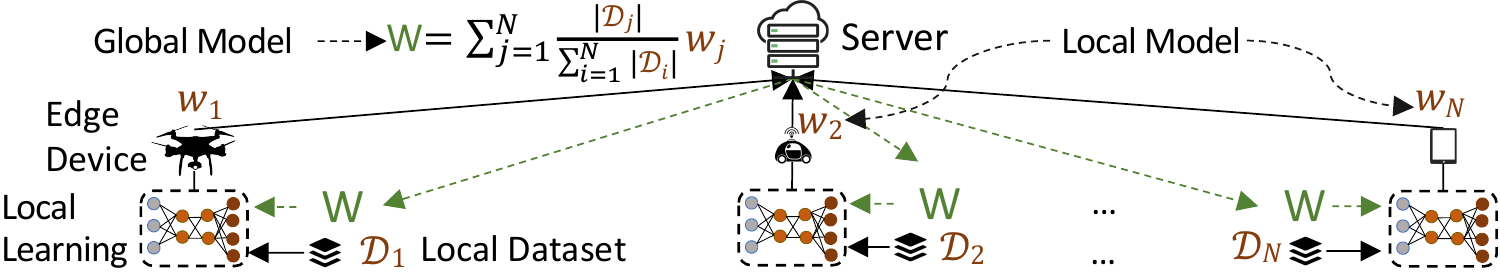}
\caption{Architecture of conventional federated learning.}
\label{fig:simpleFL}
\vspace{-6mm}
\end{figure}

\textbf{Related Work:} There have been a multitude of works on optimizing federated learning in recent years (see e.g.,~\cite{rahman2020survey} for a survey). One of the main techniques for addressing variations in communication resource constraints has been reducing/adapting the frequency of global aggregations, allowing devices to further refine models locally in-between aggregations~\cite{wang2019adaptive,tu2020network,frank2020delay}. Hierarchical system models have also been proposed to further reduce the demand for global aggregations~\cite{liu2020client}. However, the degree of device dataset heterogeneity will impact the extent to which the global aggregation frequency can be reduced without local model overfitting~\cite{wang2019adaptive}. There have also been works on improving model training in the presence of heterogeneous data among the devices~\cite{9149323,zhao2018federated}, typically through data sharing facilitated by the server. However, raw data sharing may suffer from privacy concerns or bandwidth limitations.


The literature on federated learning has mainly focused on the star learning topology depicted in Fig.~\ref{fig:simpleFL}. This has motivated a set of recent works on fully decentralized (server-less) federated learning~\cite{8950073,9154332}. Our work proposes an intermediate between the star topology and fully distributed learning for settings where a server is available. In this respect, our work is most closely related to~\cite{hosseinalipour2020multi}, which augments federated learning with peer-to-peer (P2P) interactions among local devices. Different from~\cite{hosseinalipour2020multi}, we consider the scenario where multiple local SGD iterations are conducted in-between aggregations and where consensus occurs aperiodically, leading to a more complex learning model that we analyze.

\textbf{Summary of Contributions:} We develop \textit{two timescale hybrid federated learning} ({\tt TT-HF}), a novel methodology for improving distributed model training efficiency by blending federated aggregations with cooperative consensus formation among local device clusters at a shorter timescale (Sec. \ref{sec:tthf}). We theoretically analyze the convergence behavior of {\tt TT-HF} and obtain a set of conditions on its parameters for which converge at a rate of $\small \mathcal{O}(1/t)$ is guaranteed (Sec. \ref{sec:convAnalysis}). Our experiments verify that {\tt TT-HF} can outperform federated learning baselines substantially in terms of model and resource metrics (Sec. \ref{sec:experiments}).

\section{System Model and Learning Architecture}
\label{sec:tthf}

\subsection{Edge Network Model}
\label{subsec:syst1}
We consider ML model training over the network architecture depicted in Fig.~\ref{fig2}. It consists of an edge server and $I$ edge devices gathered via the set $\mathcal{I}=\{1,\cdots,I\}$.
We partition the edge devices into $N$ sets of clusters denoted by $\mathcal{S}_1,\cdots,\mathcal{S}_N$, where $\mathcal{S}_c \cap \mathcal{S}_{c'} =\emptyset~\forall c\neq c'$ with $\cup_{c=1}^{N} \mathcal{S}_{c}=\mathcal{I}$.
Cluster $\mathcal{S}_c$ contains $s_c = |\mathcal{S}_c|$ edge devices capable of performing D2D communications with their neighbors.
The D2D communications between neighbors are bidirectional, i.e., $i \in \mathcal{N}_{i'}$ if and only if ${i'}\in{\mathcal N}_i$, $\forall i,i'\in\mathcal{S}_c$, where $\mathcal{N}_i\subset S_c$ denotes the set of neighbors of edge device $i\in \mathcal S_c$.
Consequently, we associate a network graph to each cluster $\mathcal{S}_c$ denoted by $G_c(\mathcal{S}_c, \mathcal E_c)$, where $\mathcal{S}_c$ denotes the set of nodes and $\mathcal E_c$ denotes the set of edges, where $(i,{i'})\in \mathcal E_c$ if and only if $i,i'\in\mathcal{S}_c$ and $i \in \mathcal {N}_{i'}$.

\begin{figure}[t]
    \includegraphics[height=30mm,width=88mm]{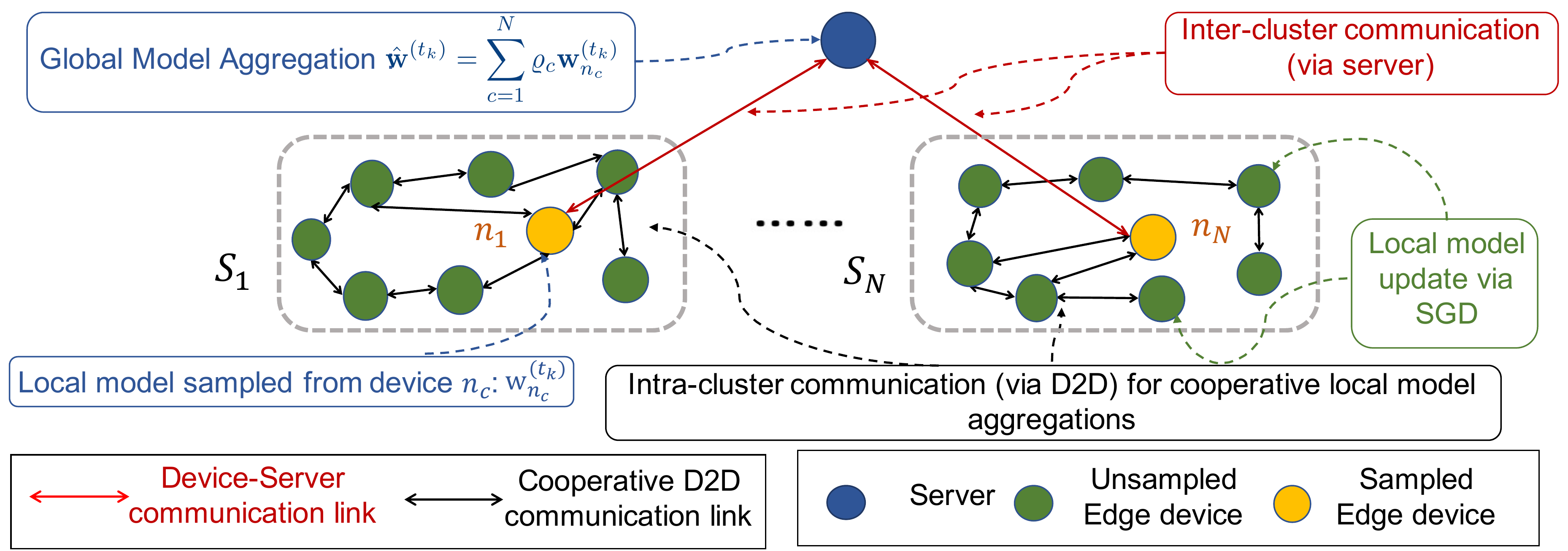}
     \caption{
     Network architecture of D2D-assisted federated learning. Devices cooperatively form cluster topologies using D2D communications, through which they perform consensus on their local models.}
     \label{fig2}
     \vspace{-0.7em}
\end{figure}
 
\subsection{Machine Learning Task} \label{subsec:syst2}
Device $i$ has a local dataset $\mathcal{D}_i$ consisting of $D_i=|\mathcal{D}_i|$ data points. 
Each data point $(\mathbf x,y)\in\mathcal D_i$ consists of an $m$-dimensional feature vector $\mathbf x\in\mathbb R^m$ and a label $y\in \mathbb{R}$. 
Letting $\hat{f}(\mathbf x,y;\mathbf w)$ denote the loss based on the learning model parameter vector $\mathbf w \in \mathbb{R}^M$ associated with $(\mathbf x,y)\in\mathcal D_i$, the local loss function for device $i$ is defined as:
\begin{align}\label{eq:1}
    F_i(\mathbf w)=\frac{1}{D_i}\sum_{(\mathbf x,y)\in\mathcal D_i}
    \hat{f}(\mathbf x,y;\mathbf w).
\end{align}
We define the cluster loss function for $\mathcal{S}_c$ as:
\begin{align}\label{eq:c}
    \hat F_c(\mathbf w)=\sum_{i\in\mathcal{S}_c} \rho_{i,c} F_i(\mathbf w),
\end{align}
where $\rho_{i,c}= 1/s_c$ is the weight associated with device $i\in \mathcal{S}_c$. 
The global loss function across the network is then given by:
\begin{align} \label{eq:2}
    F(\mathbf w)=\sum_{c=1}^{N} \varrho_c \hat F_c(\mathbf w),
\end{align}
where $\varrho_c= s_c/ I$ is the weight associated with cluster $\mathcal{S}_c$ relative to the network. 
The ultimate goal of the ML model task is to find the optimal parameter vector $\mathbf w^*\in \mathbb{R}^M$ that minimizes the global loss function: 
$
    \mathbf w^* = \mathop{\argmin_{\mathbf w \in \mathbb{R}^M} }F(\mathbf w).
$
\begin{assumption}\label{Assump:SmoothStrong}
\label{beta}
We make the following standard assumptions~\cite{chen2019joint} on the global and local loss functions: 
\begin{itemize}[leftmargin=3mm]
\item  \textbf{Strong convexity}:
 $F$ is $\mu$-strongly convex, i.e.,
\begin{equation} \label{eq:11} 
    \hspace{-4.5mm}\resizebox{.98\linewidth}{!}{$
    F(\mathbf w_1) \geq  F(\mathbf w_2)+\nabla F(\mathbf w_2)^\top(\mathbf w_1-\mathbf w_2)
    +\frac{\mu}{2}\big\Vert\mathbf w_1-\mathbf w_2\big\Vert^2 \hspace{-1.2mm},~\hspace{-.5mm}\forall { \mathbf w_1,\mathbf w_2}.
    $}\hspace{-2.mm}
\end{equation}
    \item  \textbf{Smoothness:} $F_i$ is $\beta$-smooth $\forall i$, i.e., 
    \begin{align}
\big\Vert \nabla F_i(\mathbf w_1)-\nabla F_i(\mathbf w_2)\big\Vert \leq & \beta\big\Vert \mathbf w_1-\mathbf w_2 \big\Vert,~\forall i, \mathbf w_1, \mathbf w_2.
 \end{align}
\end{itemize}

\end{assumption}


We measure the statistical heterogeneity across the local datasets using the following gradient diversity metric:
\begin{definition}[Gradient Diversity]\label{gradDiv}
The gradient diversity across the device clusters is measured via $\delta\in\mathbb{R}^+$ that satisfies 
\begin{align} 
    \big\Vert\nabla\hat F_c(\mathbf w)-\nabla F(\mathbf w)\big\Vert
    \leq \delta,~\forall c, \mathbf w.
\end{align}
\end{definition} 

\begin{figure}[t]
\includegraphics[width=0.51\textwidth]{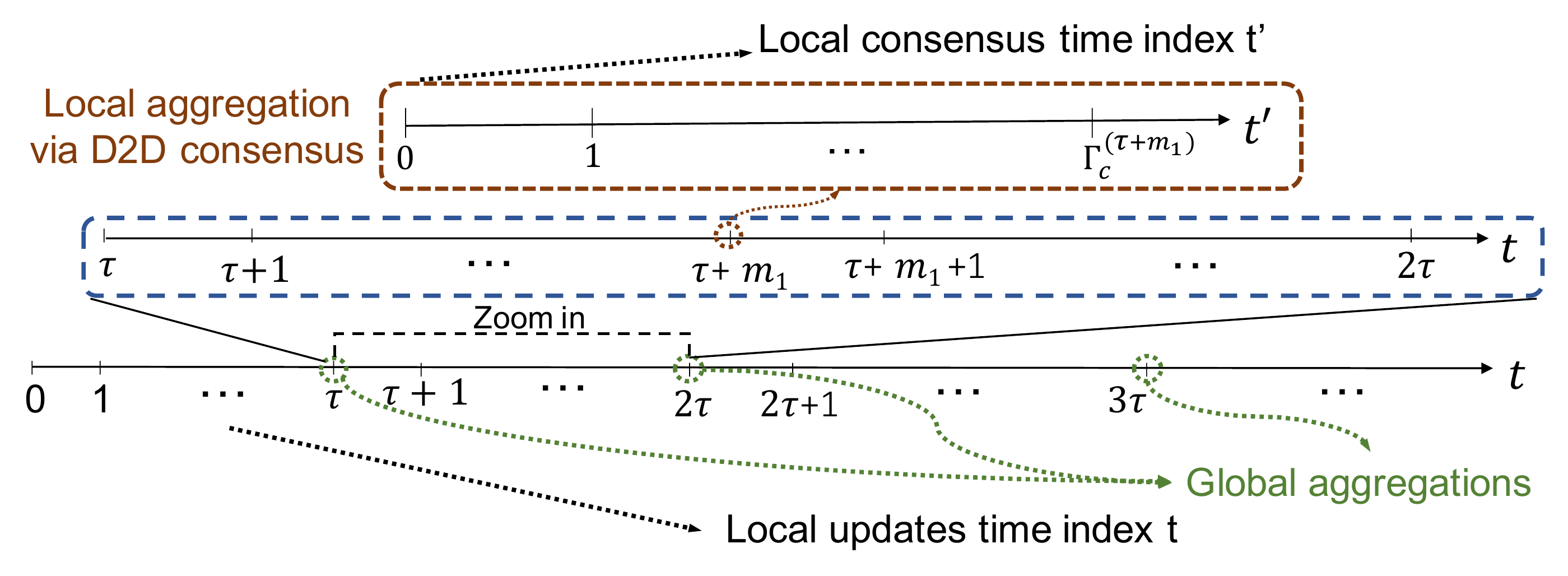}
\centering
\caption{Depiction of timescales in {\tt TT-HF}. Time index $t$ captures the local descent iterations and global aggregations. Time index $t'$ captures the rounds of local D2D communications/consensus.} 
\label{fig:twoTimeScale}
\vspace{-5mm}
\end{figure}

\vspace{-2mm}
\subsection{{\tt TT-HF}: Two Timescale Hybrid Federated Learning} \label{subsec:syst3}
{\tt TT-HF} performs model training over a sequence of global aggregations. During each global aggregation interval, device $i\in \mathcal{S}_{c}$ performs successive local SGD iterations and aperiodically engages in local model aggregations within $\mathcal{S}_{c}$.



Formally, we divide the learning process into a set of discrete time indices $t \in \mathcal{T}$, where $\mathcal{T} = \{0, 1, ...\}$. At each time instance $t$, each device $i$ has a local model denoted by $\mathbf w_i^{(t)}$ and there exists a global model at the server denoted by $\hat{\mathbf{w}}^{(t)}$. The ML model training starts at $t = 0$, where the local models are initialized with $\mathbf w_i^{(0)} = \hat{\mathbf{w}}^{(0)}$ broadcast by the server.

Let $t_k \in \mathcal{T}$ denote the occurrence of the $k$th global aggregation and $\mathcal{T}_k = \{t_{k-1} + 1,...,t_k\}$ denote the $k$th local model training interval, with $\tau = |\mathcal{T}_k|$ as the length of each interval $k$. We have $t_k = k\tau$ and $t_0 = 0$.
At the $k$th global aggregation, the main server samples one device $n_c \in \mathcal{S}_c$ from each cluster $c$, and updates the global model as follows:
\begin{align} \label{15}
    \hat{\mathbf w}^{(t)} &= 
          \sum\limits_{c=1}^N \varrho_c \mathbf w_{n_c}^{(t)}, \;\; t=t_k, ~k=1,2,...
\end{align}
This cluster sampling results in bandwidth and power savings~\cite{hosseinalipour2020federated}. At $t=t_k$, the global model $\hat{\mathbf{w}}^{(t_{k})}$ is broadcast by the server to all devices to override their local models.

\textbf{Local SGD iterations}: 
For $t \in \mathcal{T}_k$, each device $i \in \mathcal{I}$ updates its local model $\mathbf w_i^{(t-1)} \in \mathbb{R}^M$ by randomly sampling a mini-batch $\xi_i^{(t-1)}$ of fixed size from its local dataset $\mathcal D_i$ to calculate the unbiased local gradient estimate as:
\begin{align}\label{eq:SGD}
    \widehat{\mathbf g}_{i}^{(t-1)}=\frac{1}{\vert\xi_i^{(t-1)}\vert}\sum_{(\mathbf x,y)\in\xi_i^{(t-1)}}
    \hat{f}(\mathbf x,y;\mathbf w_i^{(t-1)}).
\end{align}
Using this gradient estimate, each device then computes its intermediate updated local model as:
\begin{align} \label{8}
    {\widetilde{\mathbf{w}}}_i^{(t)} = 
           \mathbf w_i^{(t-1)}-\eta_{t-1} \widehat{\mathbf g}_{i}^{(t-1)},~t\in\mathcal T_k,
\end{align}
where $\eta_{t-1} > 0$ denotes the step size. 
Based on ${\widetilde{\mathbf{w}}}_i^{(t)}$, the updated local model $\mathbf{w}_i^{(t)}$ is computed either through setting it to ${\widetilde{\mathbf{w}}}_i^{(t)}$ or through consensus, as will be explained next.

\textbf{Local model update}: 
If cluster $c$ engages in consensus at time $t$, the nodes conduct $\Gamma^{(t)}_{{c}}$ rounds of D2D communications.
Letting $t'$ index the rounds, each node $i \in \mathcal{S}_c$ carries out the following for $t'=0,...,\Gamma^{(t)}_{{c}}-1$:
    \begin{equation}\label{eq:ConsCenter}
       \textbf{z}_{i}^{(t'+1)}= v_{i,i} \textbf{z}_{i}^{(t')}+ \sum_{j\in \mathcal{N}_i} v_{i,j}\textbf{z}_{j}^{(t')},
  \end{equation}

\vspace{-0.1in}
\noindent 
where $\textbf{z}_{i}^{(0)}=\widetilde{\mathbf{w}}_i^{(t)}$ is the node's intermediate local model from \eqref{8}, and $v_{i,j}\geq 0$, $\forall i,j$ is the consensus weight that node $i$ applies to the vector received from $j$. At the end of this process, node $i$ takes $\mathbf w_i^{(t)} = \textbf{z}_{i}^{(\Gamma^{(t)}_{{c}})}$ as its updated local model. If nodes in cluster $c$ do not engage in consensus, $\mathbf w_i^{(t)}=\widetilde{\mathbf{w}}_i^{(t)}$, $\forall i \in \mathcal{S}_c$.   

\begin{algorithm}[t]
{\footnotesize
\SetAlgoLined
\caption{Two timescale hybrid federated learning} \label{TT-HF}
\KwIn{Number of global aggregations $K$, D2D rounds $\{\Gamma_c^{(t)}\}_{t=1}^{T},~\forall c$, length of local model training interval $\tau$} 
\KwOut{Final global model $\hat{\mathbf w}^{(T)}$}
 // Initialization at the server \\
 Initialize $\hat{\mathbf w}^{(0)}$ and broadcast it along with the indices of the sampled devices ($n_c$, $\forall c$) for the first global aggregation.\\
 \For{$k=1:K$}{
     \For{$t=t_{k-1}+1:t_k$}{
        \For{$c=1:N$}
        {
         // Procedure at the clusters \\
         Each device $i\in\mathcal{S}_c$ performs local SGD update based on~\eqref{eq:SGD} and~\eqref{8} using $\mathbf w_i^{(t-1)}$ to obtain~$\widetilde{\mathbf{w}}_i^{(t)}$.\\
        Devices conduct $\Gamma^{(t)}_{{c}}$ rounds of D2D based on~\eqref{eq:ConsCenter}, initializing  $\textbf{z}_{i}^{(0)}=\widetilde{\mathbf{w}}_i^{(t)}$ and setting $\mathbf w_i^{(t)} = \textbf{z}_{i}^{(\Gamma^{(t)}_{{c}})}$.
      }
      \If{$t=t_k$}{
      // Procedure at the clusters \\
      Each sampled device $n_c$ sends $\mathbf w_{n_c}^{(t_k)}$ to the server.\\
      // Procedure at the server \\
     Compute $\hat{\mathbf w}(t)$ using \eqref{15}, and
      broadcast it along with the indices of the sampled devices ($n_c$, $\forall c$) for the next global aggregation.
      }
 }
}
}
\end{algorithm}

The index $t'$ corresponds to the second timescale in {\tt TT-HF}, referring to the consensus process, as opposed to the index $t$ which captures the time elapsed by the local gradient iterations.
Fig.~\ref{fig:twoTimeScale} illustrates these two timescales, where at certain local iterations $t$ the consensus process $t'$ is run.

\begin{assumption}\label{assump:cons}
The consensus matrix $\mathbf{V}_{{c}}= \left[v_{i,j} \right]_{i,j\in\mathcal{S}_c}\in \mathbb{R}^{s_c \times s_c}$, $\forall c$, satisfies the following conditions~\cite{xiao2004fast}: (i) $\left(\mathbf{V}_{{c}}\right)_{m,n}=0~~\textrm{if}~~ \left({m},{n}\right) \notin \mathcal{E}_{{c}}$; (ii) $\mathbf{V}_{{c}}\textbf{1} = \textbf{1}$, where $\textbf{1}$ denotes the column vector of size $s_c$ with unit elements; (iii) $\mathbf{V}_{{c}} = {\mathbf{V}_{{c}}}^\top$; (iv)~$\rho \big(\mathbf{V}_{{c}}-\frac{\textbf{1} \textbf{1}^\top}{s_c} \big) < 1$, where $\rho(.)$ denotes the largest eigenvalue of the matrix in the argument. 
\end{assumption}

We can write the local parameter at device $i\in \mathcal{S}_c$ as
\begin{align} \label{eq14}
    \mathbf w_i^{(t)} = \bar{\mathbf w}_c^{(t)} + \mathbf e_i^{(t)},
\end{align}
where $\bar{\mathbf w}_c^{(t)} = \sum_{i\in\mathcal{S}_c} \rho_{i,c} \tilde{\mathbf w}_i^{(t)}$ is the average
of the local models in the cluster and $\mathbf e_i^{(t)} \in \mathbb{R}^M$ denotes the consensus error caused by limited D2D rounds (i.e., $\Gamma_c^{(t)} < \infty$) among the devices. We next introduce a definition of the divergence across intermediate updated local models, which we use to derive an upper bound on the consensus error:


\begin{lemma}\label{lemma:cons}
Upon performing $\Gamma^{(t)}_{{c}}$ rounds of D2D in cluster $\mathcal{S}_c$, the consensus error $\mathbf e_i^{(t)}$ is upper-bounded as follows:
\begin{equation}
   \Vert \mathbf e_i^{(t)} \Vert  \hspace{-.5mm}  \leq (\lambda_{{c}})^{\Gamma^{(t)}_{{c}}}
\sqrt{s_c}\underbrace{\max_{j,j'\in\mathcal S_c}\Vert\tilde{\mathbf w}_j^{(t)}-\tilde{\mathbf w}_{j'}^{(t)}\Vert}_{\triangleq \Upsilon^{(t)}_{{c}}},
~ \forall i\in \mathcal{S}_c. 
\end{equation}
where each $\lambda_{{c}}$ is a constant such that $1 > \lambda_{{c}} \geq \rho \big(\mathbf{V}_{{c}}-\frac{\textbf{1} \textbf{1}^\top}{s_c} \big)$.
\end{lemma}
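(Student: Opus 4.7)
The plan is to lift the per-device consensus recursion to a matrix recursion and exploit the spectral structure of $\mathbf{V}_c$ granted by Assumption~\ref{assump:cons}. Stack the intermediate local models as rows of $\widetilde{\mathbf{W}}^{(t)}\in\mathbb{R}^{s_c\times M}$ and iterate \eqref{eq:ConsCenter} to write the post-consensus models as the rows of $\mathbf{V}_c^{\Gamma_c^{(t)}}\widetilde{\mathbf{W}}^{(t)}$. Because $\mathbf{V}_c$ is doubly stochastic (i.e., $\mathbf{V}_c\mathbf{1}=\mathbf{1}$ and $\mathbf{1}^\top\mathbf{V}_c=\mathbf{1}^\top$), the cluster average $\bar{\mathbf{w}}_c^{(t)}$ is invariant under $\mathbf{V}_c$, so the matrix of consensus errors is
\[
\mathbf{E}^{(t)} \;=\; \Big(\mathbf{V}_c^{\Gamma_c^{(t)}} - \tfrac{1}{s_c}\mathbf{1}\mathbf{1}^\top\Big)\widetilde{\mathbf{W}}^{(t)}.
\]

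Next I would establish the algebraic identity
\[
\Big(\mathbf{V}_c - \tfrac{1}{s_c}\mathbf{1}\mathbf{1}^\top\Big)^{k} \;=\; \mathbf{V}_c^{k} - \tfrac{1}{s_c}\mathbf{1}\mathbf{1}^\top,\qquad k\geq 1,
\]
by induction, using $\mathbf{V}_c\mathbf{1}\mathbf{1}^\top=\mathbf{1}\mathbf{1}^\top\mathbf{V}_c=\mathbf{1}\mathbf{1}^\top$ and $(\mathbf{1}\mathbf{1}^\top)^2=s_c\mathbf{1}\mathbf{1}^\top$. This collapses $\mathbf{E}^{(t)}=(\mathbf{V}_c-\tfrac{1}{s_c}\mathbf{1}\mathbf{1}^\top)^{\Gamma_c^{(t)}}\widetilde{\mathbf{W}}^{(t)}$. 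Since $(\mathbf{V}_c-\tfrac{1}{s_c}\mathbf{1}\mathbf{1}^\top)\mathbf{1}=\mathbf{0}$, subtracting a constant rank-one term $\mathbf{1}\,\tilde{\mathbf{w}}_{j'}^{(t)\top}$ (for any fixed reference $j'\in\mathcal{S}_c$) leaves $\mathbf{E}^{(t)}$ unchanged, yielding
\[
\mathbf{E}^{(t)} \;=\; \Big(\mathbf{V}_c - \tfrac{1}{s_c}\mathbf{1}\mathbf{1}^\top\Big)^{\Gamma_c^{(t)}}\big(\widetilde{\mathbf{W}}^{(t)} - \mathbf{1}\,\tilde{\mathbf{w}}_{j'}^{(t)\top}\big).
\]

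To extract a row-wise bound, I would pass to the Frobenius norm. By sub-multiplicativity, $\|\mathbf{E}^{(t)}\|_F\leq \|(\mathbf{V}_c-\tfrac{1}{s_c}\mathbf{1}\mathbf{1}^\top)^{\Gamma_c^{(t)}}\|_2\,\|\widetilde{\mathbf{W}}^{(t)} - \mathbf{1}\,\tilde{\mathbf{w}}_{j'}^{(t)\top}\|_F$. Since $\mathbf{V}_c$ is symmetric, its spectral radius equals its 2-norm, so Assumption~\ref{assump:cons} gives $\|(\mathbf{V}_c-\tfrac{1}{s_c}\mathbf{1}\mathbf{1}^\top)^{\Gamma_c^{(t)}}\|_2\leq \lambda_c^{\Gamma_c^{(t)}}$. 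The Frobenius factor expands as $\sum_{j\in\mathcal{S}_c}\|\tilde{\mathbf{w}}_j^{(t)}-\tilde{\mathbf{w}}_{j'}^{(t)}\|^2 \leq s_c (\Upsilon_c^{(t)})^2$. Finally, for each individual device, $\|\mathbf{e}_i^{(t)}\|\leq \|\mathbf{E}^{(t)}\|_F$, delivering the stated bound.

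The only genuinely delicate step is the identity in the second paragraph, since naively the binomial expansion of $(\mathbf{V}_c-\tfrac{1}{s_c}\mathbf{1}\mathbf{1}^\top)^{k}$ has many cross terms; the clean collapse relies on the doubly-stochastic property (parts (ii)--(iii) of Assumption~\ref{assump:cons}) so that $\mathbf{V}_c$ and $\tfrac{1}{s_c}\mathbf{1}\mathbf{1}^\top$ act as commuting projections on the span of $\mathbf{1}$. Everything else amounts to operator-norm bookkeeping and the $\sqrt{s_c}$ factor emerging from the passage between Frobenius and row-Euclidean norms.
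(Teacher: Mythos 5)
Your proof is correct and follows essentially the same route as the paper's: stack the models into a matrix, express the error matrix through $\mathbf{V}_c^{\Gamma_c^{(t)}}-\tfrac{1}{s_c}\mathbf{1}\mathbf{1}^\top$, bound its action by $\lambda_c^{\Gamma_c^{(t)}}$, and pass from the Frobenius norm to a per-row bound with the $\sqrt{s_c}\,\Upsilon_c^{(t)}$ factor. The only cosmetic difference is that you center the stacked models at a fixed reference row $\tilde{\mathbf w}_{j'}^{(t)}$ while the paper centers at the cluster average $\bar{\mathbf w}_c^{(t)}$ and then bounds $\sum_j\Vert\tilde{\mathbf w}_j^{(t)}-\bar{\mathbf w}_c^{(t)}\Vert^2$ by $\tfrac{1}{s_c}\sum_{j,j'}\Vert\tilde{\mathbf w}_j^{(t)}-\tilde{\mathbf w}_{j'}^{(t)}\Vert^2$; both yield the identical final bound.
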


A summary of the {\tt TT-HF} algorithm is given in Algorithm~\ref{TT-HF}.

\section{Convergence Analysis of {\tt TT-HF}} \label{sec:convAnalysis}


\subsection{Bounding the Dispersion of Models Across Clusters}
\label{ssec:definitions}
We first introduce a standard assumption on SGD noise, and then define an upper bound on the average of consensus error for the cluster:
\begin{assumption} \label{assump:SGD_noise}
    Let ${\mathbf n}_{i}^{(t)}=\widehat{\mathbf g}_{i}^{(t)}-\nabla F_i(\mathbf w_{i}^{(t)})$, $\forall i,t$ denote the SGD noise for device $i$,  $\mathbb{E}[{\mathbf n}_{i}^{(t)}]=0$. We assume a bounded  variance for the noise, where $\exists \sigma>0: \mathbb{E}[\Vert{\mathbf n}_{i}^{(t)}\Vert^2]\leq \sigma^2$, $\forall i,t$.
\end{assumption}

\begin{definition} \label{paraDiv}
During local model training interval $\mathcal{T}_k$, we define $\epsilon_c^{(t)}$ as an upper bound on the average of the consensus errors across the nodes in cluster $c$ at $t \in \mathcal{T}_k$, $\forall k$:
    \begin{align}
        \frac{1}{s_c}\sum\limits_{i\in \mathcal S_c}\Vert \mathbf{e}_i^{(t)}\Vert^2 \leq (\epsilon_c^{(t)})^2.
    \end{align}
    We further define $(\epsilon^{(t)})^2=\sum\limits_{c=1}^{N}\rho_c(\epsilon_c^{(t)})^2$.
\end{definition}

Considering Definition~\ref{paraDiv} together with Lemma~\ref{lemma:cons}, if we increase the rounds of consensus, $\epsilon_c^{(t)}$ and $\epsilon^{(t)}$ can take smaller values, as we would intuitively expect.

We next define model dispersion, which measures the degree to which the cluster models deviate from the global average:

\begin{definition} \label{modDisp}
The expected model dispersion across the clusters at time $t$ denoted by $A^{(t)}$ is defined as follows:
\begin{equation}\label{eq:defA}
\hspace{-3mm}\resizebox{.55\linewidth}{!}{$
\begin{aligned}
   A^{(t)} = \mathbb E\left[\sum\limits_{c=1}^N\varrho_{c}\big\Vert\bar{\mathbf w}_c^{(t)}-\bar{\mathbf w}^{(t)}\big\Vert^2\right],
\end{aligned}
$}
\end{equation}
where $\bar{\mathbf w}_c^{(t)}$ is defined in~\eqref{eq14} and $\bar{\mathbf w}^{(t)} = \sum_{c=1}^{N}\varrho_c\bar{\mathbf w}_c^{(t)}$ denotes the global average of the local models at time $t$.
\end{definition}


\begin{proposition}\label{Local_disperseT}
    If {\small $\eta_t=\frac{\gamma}{t+\alpha}$, $\epsilon^{(t)}$} is non-increasing with respect to $t\in \mathcal T_k$, i.e., $\epsilon^{(t+1)}/\epsilon^{(t)} \leq 1$, and $\alpha\geq
    \gamma\beta^2/\mu$, then the following upper bound on the expected model dispersion holds for {\tt TT-HF}:
    \begin{equation} \label{eq:At}
    \hspace{-3mm}\resizebox{.75\linewidth}{!}{$
        \begin{aligned}
           \hspace{-4mm}& A^{(t)}\hspace{-.6mm}\leq\hspace{-.6mm}
             12\left(\varrho^{\mathsf{min}}\right)^{-1}(\Sigma_{t})^2
            \left[\frac{\sigma^2}{\beta^2}+\frac{\delta^2}{\beta^2}+(\epsilon^{(0)})^2\right]\hspace{-1mm}, \hspace{-1mm}~t\in\mathcal{T}_k,\hspace{-4mm}
        \end{aligned}
        $}
    \end{equation}
        where {\footnotesize $\varrho^{\mathsf{min}}=\min_c \varrho_c$ and 
        $
            \Sigma_{t}=\sum\limits_{\ell=t_{k-1}}^{t-1}\beta\eta_\ell\left(\prod_{j=\ell+1}^{t-1}(1+2\eta_j\beta)\right).
        $}
\end{proposition}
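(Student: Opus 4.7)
The plan is to track the cluster-to-global dispersion through a recursion on $\sqrt{A^{(t)}}$ over a single interval $\mathcal T_k$, seeded by the trivial initial condition $A^{(t_{k-1})}=0$: at the end of the $(k{-}1)$th interval the broadcast overrides every $\mathbf w_i^{(t_{k-1})}$ by the common $\hat{\mathbf w}^{(t_{k-1})}$, so every cluster average coincides with $\hat{\mathbf w}^{(t_{k-1})}$. Because the consensus matrices $\mathbf V_c$ are doubly stochastic (Assumption~\ref{assump:cons}), the local aggregation step preserves the cluster mean, so I can write $\bar{\mathbf w}_c^{(t)}=\bar{\mathbf w}_c^{(t-1)}-\eta_{t-1}\bar{\mathbf g}_c^{(t-1)}$ where $\bar{\mathbf g}_c^{(t-1)}:=\sum_{i\in\mathcal S_c}\rho_{i,c}\widehat{\mathbf g}_i^{(t-1)}$. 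Subtracting the global weighted average produces
\[
\bar{\mathbf w}_c^{(t)}-\bar{\mathbf w}^{(t)}=\bigl[\bar{\mathbf w}_c^{(t-1)}-\bar{\mathbf w}^{(t-1)}\bigr]-\eta_{t-1}\bigl[\bar{\mathbf g}_c^{(t-1)}-\bar{\mathbf g}^{(t-1)}\bigr].
\]
Taking the weighted $L^2$ norm $\sqrt{\mathbb E[\sum_c\varrho_c\|\cdot\|^2]}$ of both sides and applying Minkowski's inequality yields the scalar recursion $\sqrt{A^{(t)}}\le\sqrt{A^{(t-1)}}+\eta_{t-1}\sqrt{G^{(t-1)}}$, with $G^{(t-1)}:=\mathbb E\bigl[\sum_c\varrho_c\|\bar{\mathbf g}_c^{(t-1)}-\bar{\mathbf g}^{(t-1)}\|^2\bigr]$.

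\textbf{Bounding $G^{(t-1)}$.} I would decompose each stochastic gradient as $\widehat{\mathbf g}_i^{(t-1)}=\nabla F_i(\bar{\mathbf w}_c^{(t-1)})+[\nabla F_i(\mathbf w_i^{(t-1)})-\nabla F_i(\bar{\mathbf w}_c^{(t-1)})]+\mathbf n_i^{(t-1)}$, so that averaging over cluster $c$ gives the exact cluster gradient $\nabla\hat F_c(\bar{\mathbf w}_c^{(t-1)})$ plus a smoothness correction bounded by $\beta\epsilon_c^{(t-1)}$ via Assumption~\ref{Assump:SmoothStrong}, Lemma~\ref{lemma:cons}, and Definition~\ref{paraDiv}, plus a zero-mean SGD noise whose variance is controlled via Assumption~\ref{assump:SGD_noise}. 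I would then split $\nabla\hat F_c(\bar{\mathbf w}_c^{(t-1)})-\nabla F(\bar{\mathbf w}^{(t-1)})=[\nabla\hat F_c-\nabla F](\bar{\mathbf w}_c^{(t-1)})+[\nabla F(\bar{\mathbf w}_c^{(t-1)})-\nabla F(\bar{\mathbf w}^{(t-1)})]$, using the gradient-diversity bound $\delta$ from Definition~\ref{gradDiv} on the first piece and $\beta$-smoothness of $F$ on the second, the latter producing the crucial term that couples back to $A^{(t-1)}$. After replacing every $(\epsilon^{(t-1)})^2$ by the uniform upper bound $(\epsilon^{(0)})^2$ permitted by the non-increasing hypothesis, and absorbing cross-cluster mismatches through the crude inequality $\sum_c(\cdot)\le(\varrho^{\mathsf{min}})^{-1}\sum_c\varrho_c(\cdot)$, I expect a closed form $G^{(t-1)}\le 4\beta^2 A^{(t-1)}+\beta^2 B$, where $B:=12(\varrho^{\mathsf{min}})^{-1}\bigl[\sigma^2/\beta^2+\delta^2/\beta^2+(\epsilon^{(0)})^2\bigr]$.

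\textbf{Unrolling.} Substituting $\sqrt{G^{(t-1)}}\le 2\beta\sqrt{A^{(t-1)}}+\beta\sqrt B$ into the Minkowski recursion gives $\sqrt{A^{(t)}}\le(1+2\eta_{t-1}\beta)\sqrt{A^{(t-1)}}+\beta\eta_{t-1}\sqrt B$, and iterating from $\sqrt{A^{(t_{k-1})}}=0$ telescopes exactly to $\sqrt{A^{(t)}}\le\sqrt B\,\Sigma_t$, so that squaring produces \eqref{eq:At}. The step-size condition $\alpha\ge\gamma\beta^2/\mu$ would enter to keep $\eta_{t-1}\beta$ small enough that the $(1+2\eta_j\beta)$ products remain well-behaved and that the $\eta_{t-1}^2$ residuals arising from Young's inequality can be absorbed into the leading $\eta_{t-1}\beta$ contribution.

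\textbf{Main obstacle.} The delicate step is the gradient-dispersion inequality $G^{(t-1)}\le 4\beta^2 A^{(t-1)}+\beta^2 B$: it is not a standalone bound but a closed recursion coupling $G^{(t-1)}$ back to $A^{(t-1)}$ through smoothness, and one must keep careful bookkeeping of the weights $\varrho_c$, the per-cluster errors $\epsilon_c^{(t-1)}$, and the SGD variance scaling $\sigma^2/s_c$, and decide at each step whether to apply $\|\sum\cdot\|^2\le N\sum\|\cdot\|^2$, Jensen, Minkowski, or Young, in order to land precisely on the constant $12(\varrho^{\mathsf{min}})^{-1}$ rather than an inflated one.
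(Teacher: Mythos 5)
Your proposal is correct and follows the same overall strategy as the paper's Appendix-B proof: write the coupled one-step dynamics of $\bar{\mathbf w}_c^{(t)}$ and $\bar{\mathbf w}^{(t)}$ (the consensus step preserves the cluster mean, so only the averaged SGD directions appear), decompose the gradient mismatch into SGD noise, a consensus-error correction of size $\beta\epsilon_c^{(t)}$, a gradient-diversity term, and a smoothness term that couples back to the dispersion, then unroll the resulting geometric recursion from the zero initial condition at $t_{k-1}$ to obtain $\Sigma_t$. The one genuine difference is the norm in which you run the recursion: you apply Minkowski directly to $\sqrt{A^{(t)}}=\sqrt{\mathbb E[\sum_c\varrho_c\Vert\bar{\mathbf w}_c^{(t)}-\bar{\mathbf w}^{(t)}\Vert^2]}$, whereas the paper runs the recursion on $\sqrt{\mathbb E[(\sum_c\varrho_c\Vert\bar{\mathbf w}_c^{(t)}-\bar{\mathbf w}^{(t)}\Vert)^2]}$ (via its Fact~1) and only converts to $A^{(t)}$ at the very end through $\sum_c\varrho_c x_c^2\le(\varrho^{\mathsf{min}})^{-1}(\sum_c\varrho_c x_c)^2$. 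Your route is arguably cleaner: the factor $(\varrho^{\mathsf{min}})^{-1}$ is structurally necessary in the paper's norm conversion but superfluous in yours (your Jensen step $(\sum_d\varrho_d x_d)^2\le\sum_d\varrho_d x_d^2$ goes in the favorable direction), so you would in fact obtain the tighter bound $A^{(t)}\le12\,\Sigma_t^2[\sigma^2/\beta^2+\delta^2/\beta^2+(\epsilon^{(0)})^2]$, which implies the stated one since $(\varrho^{\mathsf{min}})^{-1}\ge1$; inserting the factor by hand, as you do, is harmless but unnecessary. Two small remarks: the paper avoids your doubled diversity term by evaluating the cluster gradients at the common point $\bar{\mathbf w}^{(t)}$, where $\sum_d\varrho_d\nabla\hat F_d=\nabla F$ exactly, though either way the constant $12$ survives the final squaring; and, like the paper, you will find that the hypothesis $\alpha\ge\gamma\beta^2/\mu$ is never actually invoked in this particular argument—it is carried along for compatibility with the later theorems rather than to control the $(1+2\eta_j\beta)$ products.
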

\begin{skproof}
Using the definition of $\bar{\mathbf w}_c^{(t+1)}$ from Assumption~\ref{assump:cons} and $\bar{\mathbf w}^{(t+1)}$ from Definition~\ref{modDisp}, we have:
    \begin{equation} \label{eq:w_cT}
       \resizebox{.78\linewidth}{!}{$
        \bar{\mathbf w}_c^{(t+1)}=\bar{\mathbf w}_c^{(t)}
        -\frac{\eta_t}{s_{c}}\sum\limits_{j\in\mathcal S_{c}}\nabla F_j(\mathbf w_j^{(t)})
        -\frac{\eta_t}{s_{c}}\sum\limits_{j\in\mathcal S_{c}}\mathbf n_j^{(t)}, 
        $} \hspace{-3mm}
    \end{equation} 
    \begin{equation} \label{eq:w-T}
    \hspace{-0.6mm}\resizebox{.9\linewidth}{!}{$
        \bar{\mathbf w}^{(t+1)}
       \hspace{-.5mm} =\hspace{-.5mm}
        \bar{\mathbf w}^{(t)}\hspace{-.5mm}
        -\hspace{-.7mm}\sum\limits_{d=1}^N\varrho_{d}\frac{\eta_t}{s_{d}}\hspace{-.9mm}\sum\limits_{j\in\mathcal S_{d}}\hspace{-.9mm}\nabla F_j(\mathbf w_j^{(t)})
       \hspace{-.5mm} -\hspace{-.7mm}\sum\limits_{d=1}^N\varrho_{d}\frac{\eta_t}{s_{d}}\hspace{-.9mm}\sum\limits_{j\in\mathcal S_{d}}\hspace{-.9mm}\mathbf n_j^{(t)}\hspace{-1mm}. 
        $} \hspace{-4mm}
    \end{equation}
    Using~\eqref{eq:w_cT} and~\eqref{eq:w-T}, Assumption~\ref{beta}, Definition \ref{paraDiv}, Definition \ref{gradDiv} and Assumption~\ref{assump:SGD_noise}, and noting that $\eta_t\leq\frac{\mu}{\beta^2}$, we get the following for $t \in \mathcal{T}_k$:
     \begin{equation} \label{eq:fact_x1T}
     \hspace{-1.2mm}\resizebox{0.9\linewidth}{!}{$
     \begin{aligned} 
        &\sqrt{\mathbb E[(\sum\limits_{c=1}^N\varrho_c\Vert\bar{\mathbf w}_c^{(t+1)}-\bar{\mathbf w}^{(t+1)}\Vert)^2]} 
        \leq  
        \tilde{\eta}_t\left(2\sum\limits_{d=1}^N\varrho_{d}\epsilon_{d}^{(t)}+\frac{\delta}{\beta}+2 \frac{\sigma}{\beta}\right)
        \\&
        +(1+2\eta_t\beta)
        \sqrt{\mathbb E[(\sum\limits_{c=1}^N\varrho_c\Vert\bar{\mathbf w}_c^{(t)}-\bar{\mathbf w}^{(t)}\Vert)^2]}.
    \end{aligned}  
    $} \hspace{-4mm}
    \end{equation}
    Recursive expanding~\eqref{eq:fact_x1T}, taking the square of both sides and applying the Cauchy-Schwarz inequality we get:
    \begin{equation} \label{eq:recurseT}
  \resizebox{.92\linewidth}{!}{$ 
    \begin{aligned} 
        &
         \mathbb E\left[(\sum_{c=1}^N\varrho_c\Vert\bar{\mathbf w}_c^{(t)}-\bar{\mathbf w}^{(t)}\Vert)^2\right]
        \leq
        12[\Sigma_{t}]^2
        \left[\frac{\sigma^2}{\beta^2}+\frac{\delta^2}{\beta^2}+(\epsilon^{(0)})^2\right].
    \end{aligned}
        $}\hspace{-6mm}
        \end{equation}
    Thus, we obtain the following relationship:
    \begin{align} \label{eq:chi}
    A^{(t)}
    &
    \leq
    \left(\varrho^{\mathsf{min}}\right)^{-1} \mathbb E\left[\left(\sum_{c=1}^N\varrho_c\Vert\bar{\mathbf w}_c^{(t)}-\bar{\mathbf w}^{(t)}\Vert\right)^2\right].
    \end{align} 
    Finally, after combining~\eqref{eq:recurseT} and~\eqref{eq:chi}, the result of the proposition directly follows (see Appendix B).
\end{skproof}

\subsection{General Convergence Behavior of $\hat{\mathbf{w}}^{(t)}$}
\label{ssec:convAvg}
The bound in~\eqref{eq:At} demonstrates how the expected model dispersion varies in terms of the consensus error ($\epsilon^{(0)}$), the SGD noise ($\sigma^2$), and the local datasets heterogeneity ($\delta$). In the following theorem, we bound the expected one-step decrease in the global loss, as a function of model dispersion:

\begin{theorem} \label{co1}
        If $\eta_t \leq 1/\beta$ $\forall t$,
  the one-step behavior of $\hat{\mathbf w}^{(t)}$ upon using {\tt TT-HF}, at $t\in\mathcal{T}_k$ is given by:
\begin{equation} \label{eq:th1mainRes}
\hspace{-3mm}\resizebox{.93\linewidth}{!}{$
    \begin{aligned}
       &\mathbb E\left[F(\hat{\mathbf w}^{(t+1)})-F(\mathbf w^*)\right]
        \leq
        (1-\mu\eta_{t})\mathbb E[F(\hat{\mathbf w}^{(t)})-F(\mathbf w^*)]
         \\&
        +\underbrace{\frac{\eta_{t}\beta^2}{2}A^{(t)}
        +\frac{1}{2}[\eta_{t}\beta^2(\epsilon^{(t)})^2+\eta_{t}^2\beta\sigma^2+\beta(\epsilon^{(t+1)})^2]}_{(a)},
    \end{aligned}
    $}
\end{equation}
where $A^{(t)}$ is defined in~\eqref{eq:defA}.
\end{theorem}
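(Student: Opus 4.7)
The proof follows the classical descent-lemma route, but with careful bookkeeping of the perturbations introduced by {\tt TT-HF}. The core identities I would exploit are $\mathbf w_i^{(t)} = \bar{\mathbf w}_c^{(t)} + \mathbf e_i^{(t)}$ from~\eqref{eq14} and $\hat{\mathbf w}^{(t)} = \bar{\mathbf w}^{(t)} + \sum_c \varrho_c \mathbf e_{n_c}^{(t)}$, the latter obtained from~\eqref{15} combined with the doubly-stochastic property of $\mathbf V_c$ in Assumption~\ref{assump:cons}(ii), which guarantees $\sum_{i\in\mathcal S_c}\mathbf e_i^{(t)} = 0$ within every cluster. Starting from the $\beta$-smoothness descent inequality
\begin{equation*}
F(\hat{\mathbf w}^{(t+1)}) - F(\hat{\mathbf w}^{(t)}) \leq \langle \nabla F(\hat{\mathbf w}^{(t)}), \hat{\mathbf w}^{(t+1)} - \hat{\mathbf w}^{(t)}\rangle + \tfrac{\beta}{2}\|\hat{\mathbf w}^{(t+1)} - \hat{\mathbf w}^{(t)}\|^2,
\end{equation*}
the strategy is to turn the first-order term into a PL-type contraction via $\mu$-strong convexity and to package the residual perturbations into the four additive terms comprising $(a)$.

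\textbf{Key steps.} Using \eqref{8} and \eqref{eq:ConsCenter} together with the above decompositions, I would first write $\hat{\mathbf w}^{(t+1)} - \hat{\mathbf w}^{(t)}$ as an ideal gradient step $-\eta_t \sum_c \varrho_c s_c^{-1}\sum_{i\in\mathcal S_c} \nabla F_i(\mathbf w_i^{(t)})$, plus a sampled SGD-noise aggregate, plus the change of sampled consensus errors $\sum_c \varrho_c(\mathbf e_{n_c}^{(t+1)} - \mathbf e_{n_c}^{(t)})$. In the inner-product term I would add and subtract $\nabla F(\hat{\mathbf w}^{(t)})$ and apply $\beta$-smoothness of each $F_i$ to bound $\|\nabla F_i(\mathbf w_i^{(t)}) - \nabla F_i(\hat{\mathbf w}^{(t)})\|$ by $\beta\|\mathbf w_i^{(t)} - \hat{\mathbf w}^{(t)}\|$; splitting this position difference through $\bar{\mathbf w}_c^{(t)}$ and $\bar{\mathbf w}^{(t)}$ makes its squared expectation controllable by $(\epsilon^{(t)})^2$ (Definition~\ref{paraDiv}) and $A^{(t)}$ (Definition~\ref{modDisp}). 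Young's inequality with weight $\eta_t$ then isolates a $-\eta_t\|\nabla F(\hat{\mathbf w}^{(t)})\|^2$ piece while leaving residuals of order $\tfrac{\eta_t\beta^2}{2}A^{(t)}$ and $\tfrac{\eta_t\beta^2}{2}(\epsilon^{(t)})^2$, and the Polyak--{\L}ojasiewicz inequality $\|\nabla F(\hat{\mathbf w}^{(t)})\|^2 \geq 2\mu(F(\hat{\mathbf w}^{(t)}) - F(\mathbf w^*))$ produces the $(1 - \mu\eta_t)$ contraction. The quadratic term $\tfrac{\beta}{2}\|\hat{\mathbf w}^{(t+1)} - \hat{\mathbf w}^{(t)}\|^2$ contributes $\tfrac{\eta_t^2\beta\sigma^2}{2}$ from the bounded SGD-noise variance (Assumption~\ref{assump:SGD_noise}) and, crucially, $\tfrac{\beta}{2}(\epsilon^{(t+1)})^2$ from the unscaled consensus displacement at time $t+1$. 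Taking total expectation removes the zero-mean SGD-noise cross terms and, by Assumption~\ref{assump:cons}(ii), also zeros out $\mathbb E_{n_c}[\mathbf e_{n_c}^{(t)}]$ under the uniform cluster sampling.

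\textbf{Main obstacle.} The delicate part is the coefficient bookkeeping: assembling the expansions so that exactly the four contributions in $(a)$ survive with the stated prefactors, while every remaining cross product either vanishes in expectation or is absorbed into a judiciously chosen Young weight. In particular, $\beta(\epsilon^{(t+1)})^2$ must be kept visible rather than folded into an $\eta_t$-scaled residual, because it originates from the $t+1$ consensus step rather than from any gradient update; this forces the Young splitting that balances $A^{(t)}$ and $(\epsilon^{(t)})^2$ against $\|\nabla F(\hat{\mathbf w}^{(t)})\|^2$ to be weighted in $\eta_t$, whereas the splitting of the quadratic term is not. Verifying that the cross-products between the ideal gradient, the SGD noise, and the two consensus-error displacements collapse cleanly, without over-counting or under-counting any one of them, is where a hidden mistake could most easily slip in.
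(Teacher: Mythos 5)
Your plan diverges from the paper's proof in one structural choice that turns out to matter: you expand the $\beta$-smoothness descent inequality directly around $\hat{\mathbf w}^{(t)}$, whereas the paper first derives the one-step recursion for the sampling-independent average $\bar{\mathbf w}^{(t)}$ (using its exact dynamics~\eqref{eq:w-T}, an inner-product lemma based on the identity $-2\mathbf a^\top\mathbf b=-\Vert\mathbf a\Vert^2-\Vert\mathbf b\Vert^2+\Vert\mathbf a-\mathbf b\Vert^2$, and the decomposition $\frac{1}{s_c}\sum_{i}\Vert\mathbf w_i^{(t)}-\bar{\mathbf w}^{(t)}\Vert^2\le\Vert\bar{\mathbf w}_c^{(t)}-\bar{\mathbf w}^{(t)}\Vert^2+(\epsilon_c^{(t)})^2$), and only then converts to $\hat{\mathbf w}$ by a sandwich: $\beta$-smoothness gives $\mathbb E[F(\hat{\mathbf w}^{(t+1)})]\le\mathbb E[F(\bar{\mathbf w}^{(t+1)})]+\tfrac{\beta}{2}(\epsilon^{(t+1)})^2$ and strong convexity gives $\mathbb E[F(\hat{\mathbf w}^{(t)})]\ge\mathbb E[F(\bar{\mathbf w}^{(t)})]$. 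Those sandwich steps are legitimate precisely because $\nabla F(\bar{\mathbf w}^{(t)})$ does not depend on which device is sampled, so the linear terms $\nabla F(\bar{\mathbf w}^{(t)})^\top\sum_c\varrho_c\,\mathbb E[\mathbf e_{n_c}^{(t)}]$ genuinely vanish. Your route breaks at exactly this point: in your cross terms the gradient is evaluated at $\hat{\mathbf w}^{(t)}=\bar{\mathbf w}^{(t)}+\sum_c\varrho_c\mathbf e_{n_c}^{(t)}$, which is itself a function of the sampled indices, so $\mathbb E\big[\langle\nabla F(\hat{\mathbf w}^{(t)}),\mathbf e_{n_c}^{(t)}\rangle\big]$ is an expectation of a product of correlated random quantities and does not vanish merely because $\mathbb E[\mathbf e_{n_c}^{(t)}]=0$. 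Bounding it by Cauchy--Schwarz instead would inject extra $\epsilon^{(t)}$-terms that are not present in~\eqref{eq:th1mainRes}.

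A second, independent problem sits in your quadratic term. With your displacement $\hat{\mathbf w}^{(t+1)}-\hat{\mathbf w}^{(t)}=-\eta_t G+\sum_c\varrho_c(\mathbf e_{n_c}^{(t+1)}-\mathbf e_{n_c}^{(t)})$, the term $\tfrac{\beta}{2}\Vert\hat{\mathbf w}^{(t+1)}-\hat{\mathbf w}^{(t)}\Vert^2$ necessarily contributes $\tfrac{\beta}{2}\Vert\sum_c\varrho_c\mathbf e_{n_c}^{(t)}\Vert^2$ at the \emph{unscaled} level, i.e.\ an additional $\mathcal O(\beta(\epsilon^{(t)})^2)$ that the theorem does not allow (it only admits the $\eta_t$-scaled $\tfrac{\eta_t\beta^2}{2}(\epsilon^{(t)})^2$), so your constants cannot be made to match. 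The same quadratic term also contains $\eta_t^2\Vert\sum_c\varrho_c s_c^{-1}\sum_j\nabla F_j(\mathbf w_j^{(t)})\Vert^2$, which your sketch never disposes of; in the paper this piece is cancelled against the negative $-\tfrac{\eta_t}{2}(1-\eta_t\beta)\Vert\cdot\Vert^2$ term produced by the inner-product identity, and that cancellation is exactly where the hypothesis $\eta_t\le 1/\beta$ is consumed. The repair is to adopt the two-stage structure: carry out your entire estimate on $\bar{\mathbf w}^{(t)}$, then apply the smoothness upper bound at time $t+1$ (producing the isolated $\tfrac{\beta}{2}(\epsilon^{(t+1)})^2$) and the strong-convexity lower bound at time $t$ (valid since $1-\mu\eta_t\ge 0$) to restore $\hat{\mathbf w}^{(t)}$ on the right-hand side.
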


\begin{skproof}
Combining the result of~\eqref{eq:w-T} with $\beta$-smoothness and applying Assumption~\ref{assump:SGD_noise}, we have:
\begin{equation} \label{eq:aveGlo1}
    \hspace{-3mm}\resizebox{.75\linewidth}{!}{$
    \begin{aligned}
    &\mathbb E_t \left[F(\bar{\mathbf w}^{(t+1)})-F(\mathbf w^*)\right]
        \leq F(\bar{\mathbf w}^{(t)}) - F(\mathbf w^*)
        \\&
        -\eta_{t}\nabla F(\bar{\mathbf w}^{(t)})^\top \sum\limits_{c=1}^N\varrho_c\frac{1}{s_c}\sum\limits_{j\in\mathcal{S}_c}\nabla F_j(\mathbf w_j^{(t)})
        \\&
        + \frac{\eta_{t}^2 \beta}{2}\Big
        \Vert\sum\limits_{c=1}^N\varrho_c\frac{1}{s_c}\sum\limits_{j\in\mathcal{S}_c} \nabla F_j(\mathbf w_j^{(t)})
        \Big\Vert^2
        +\frac{\eta_{t}^2 \beta\sigma^2}{2},
\end{aligned}
$}
\end{equation}
where $\mathbb E_t$ denotes the conditional expectation, conditioned on $\bar{\mathbf{w}}^{(t)}$.
Applying the law of total expectation and Assumption~\ref{assump:cons}, since $\eta_t\leq 1/\beta$ we have
\begin{align} \label{ld2}
    &\mathbb E\left[F(\bar{\mathbf w}^{(t+1)})-F(\mathbf w^*)\right]
        \leq
        (1-\mu\eta_{t})\mathbb E[F(\bar{\mathbf w}^{(t)})-F(\mathbf w^*)]
        \nonumber \\&
        \qquad\qquad +\frac{\eta_{t}\beta^2}{2}A^{(t)}
        +\frac{1}{2}(\eta_{t}\beta^2(\epsilon^{(t)})^2+\eta_{t}^2\beta\sigma^2).
\end{align}
Using the smoothness and strong convexity of $F$, we establish the relationship between $\mathbb E[F(\bar{\mathbf w}^{(t)})-F(\mathbf w^*)]$ and $\mathbb E[F(\hat{\mathbf w}^{(t)})-F(\mathbf w^*)]$ to conclude the proof (See Appendix C).
\end{skproof}
 

Theorem~\ref{co1} quantifies the one step behavior of the global model during a given local period $\mathcal{T}_k$. Considering~\eqref{eq:th1mainRes},  the convergence of sequence $\left\{\mathbb E\left[F(\hat{\mathbf w}^{(t+1)})-F(\mathbf w^*)\right]\right\}_{t=1}^{\infty}$ depends on several factors: (i) the characteristics of the loss function (i.e., $\mu, \beta$); (ii) the step size (i.e., $\eta_t$); (iii) the expected model dispersion (i.e., $A^{(t)}$, bounded by Proposition~\ref{Local_disperseT}); and (iv) the SGD noise (i.e., $\sigma^2$). 
 
We aim for {\tt TT-HF} to match the asymptotic convergence behavior of centralized SGD under a diminishing step size, which is $\mathcal{O}(1/t)$ \cite{Bubeck}. From~\eqref{eq:th1mainRes}, to have this desired characteristic, the terms in $(a)$ should be in the order of $\mathcal{O}(\eta_t^2)$, the same as the SGD noise $\sigma^2$.
This implies that $A^{(t)} \leq \mathcal{O}(\eta_t)$ and $\epsilon^{(t)} \leq \mathcal{O}(\eta_t)$. Under the conditions expressed in Proposition~\ref{Local_disperseT}, we have $A^{(t)} \leq \mathcal{O}(\eta_t)$ (verified in the proof of Theorem~\ref{thm:subLin}). Also, since the consensus error can be controlled via the number of D2D rounds (Lemma~\ref{lemma:cons}), it would be sufficient if we choose $\epsilon^{(t)}=\eta_t\phi$, with $\phi\in\mathbb{R}^+$. 
We next build upon this logic to derive a set of conditions under which the convergence rate of $\mathcal{O}(1/t)$ is achieved for $\hat{\mathbf w}^{(t)}$.

\subsection{Sublinear Convergence Rate of $\hat{\mathbf{w}}^{(t)}$}
\label{ssec:sublinear}

We now prove that there exist a configuration of the tunable parameters (i.e., $\eta_t, \tau$ and $\epsilon^{(t)}$) under which {\tt TT-HF} achieves sub-linear convergence with rate of $\mathcal{O}(1/t)$.

\begin{theorem} \label{thm:subLin}
 If $\gamma>1/\mu$ and $\alpha\geq\gamma\beta^2/\mu$,
 under Assumptions \ref{beta},~\ref{assump:cons}, and~\ref{assump:SGD_noise}, upon choosing $\eta_t=\frac{\gamma}{t+\alpha}$ and $\epsilon^{(t)}=\eta_t\phi$, $\forall t$, {\tt TT-HF} achieves the following upper bound of convergence: 
    \begin{align} \label{eq:thm2_result-1-T}
        &\mathbb E\left[(F(\hat{\mathbf w}^{(t)})-F(\mathbf w^*))\right]\leq\frac{\nu}{t+\alpha}, ~~\forall t \in \mathcal{T}_k,
        \end{align}
        where {\small $\nu = \left\{\frac{\beta^2\gamma^2Z}{\mu\gamma-1}, \alpha\left[F(\hat{\mathbf w}^{(0)})-F(\mathbf w^*)\right]\right\}$, $Z=
        \frac{1}{2}[\frac{\sigma^2}{\beta}+\frac{2\phi^2}{\beta}]
        +24\left(\varrho^{\mathsf{min}}\right)^{-1}\beta\gamma(\tau-1)\left(1+\frac{\tau-2}{\alpha}\right)
        \left(1+\frac{\tau-1}{\alpha-1}\right)^{4\beta\gamma}\left[\frac{\sigma^2}{\beta}+\frac{\phi^2}{\beta}+\frac{\delta^2}{\beta}\right].$}
\end{theorem}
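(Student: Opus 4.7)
The plan is to proceed by induction on $t$ with hypothesis $\mathbb{E}[F(\hat{\mathbf{w}}^{(t)})-F(\mathbf{w}^*)]\leq \nu/(t+\alpha)$. The base case $t=0$ is immediate from the second argument of the $\max$, since $\nu\geq\alpha[F(\hat{\mathbf{w}}^{(0)})-F(\mathbf{w}^*)]$. For the inductive step I would invoke the one-step recursion of Theorem~\ref{co1} and show that the additive noise term $(a)$ there is $\mathcal{O}(\eta_t^2)$; a standard Robbins--Monro-style arithmetic then closes the induction using the condition $\mu\gamma>1$.

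To bound term $(a)$, I first substitute $\eta_t=\gamma/(t+\alpha)$ and $\epsilon^{(t)}=\eta_t\phi$, so that the three explicit summands $\eta_t\beta^2(\epsilon^{(t)})^2$, $\eta_t^2\beta\sigma^2$, and $\beta(\epsilon^{(t+1)})^2$ are all manifestly $\mathcal{O}(\eta_t^2)$. The remaining step is to establish $\eta_t A^{(t)}=\mathcal{O}(\eta_t^2)$, which by Proposition~\ref{Local_disperseT} reduces to showing $(\Sigma_t)^2\leq C\,\eta_t^2$ within each interval $\mathcal{T}_k$. Here I would use $1+x\leq e^x$ together with $\sum_{j=\ell+1}^{t-1}1/(j+\alpha)\leq \ln((t-1+\alpha)/(\ell+\alpha))$ to control each product $\prod_{j=\ell+1}^{t-1}(1+2\eta_j\beta)$ by a polynomial factor of the form $(1+(\tau-1)/(\alpha-1))^{2\beta\gamma}$, and then estimate the outer sum (which has at most $\tau-1$ terms) by $\beta\gamma(\tau-1)/(t+\alpha)$ after paying the multiplicative factor $(1+(\tau-2)/\alpha)$ needed to replace $\eta_{t_{k-1}}$ by $\eta_t$ in the leading term. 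Squaring produces exactly the constant structure that appears in $Z$, with the factor $\epsilon^{(0)}=\eta_0\phi$ absorbing into the $\phi^2/\beta$ contribution.

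Having reduced $(a)$ to $\beta^2\gamma^2 Z\,\eta_t^2/2$, the recursion becomes
\begin{equation*}
\mathbb{E}[F(\hat{\mathbf{w}}^{(t+1)})-F(\mathbf{w}^*)]\leq (1-\mu\eta_t)\,\mathbb{E}[F(\hat{\mathbf{w}}^{(t)})-F(\mathbf{w}^*)]+\frac{\beta^2\gamma^2 Z}{2(t+\alpha)^2}.
\end{equation*}
Substituting the inductive hypothesis and using $(t+\alpha)^2\geq (t+\alpha-1)(t+\alpha+1)$, the target bound $\nu/(t+\alpha+1)$ reduces algebraically to the sufficient condition $\nu(\mu\gamma-1)\geq \beta^2\gamma^2 Z$, which holds by the first argument of the $\max$ defining $\nu$; positivity of the left-hand side is precisely what the hypothesis $\gamma>1/\mu$ provides. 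I expect the main obstacle to be the explicit bookkeeping behind the $\Sigma_t$ estimate: tracking the telescoping product across up to $\tau$ steps and expressing it as a clean multiple of $\eta_t$ rather than $\eta_{t_{k-1}}$, so that the constants align exactly with the stated formula for $Z$. Beyond that, the argument is the standard diminishing-step-size induction layered on top of Theorem~\ref{co1} and Proposition~\ref{Local_disperseT}.
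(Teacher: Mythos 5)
Your overall strategy coincides with the paper's: induction on $t$ anchored at $\nu\geq\alpha[F(\hat{\mathbf w}^{(0)})-F(\mathbf w^*)]$, the one-step recursion of Theorem~\ref{co1} combined with the dispersion bound of Proposition~\ref{Local_disperseT}, reduction of the additive term to $\mathcal{O}(\eta_t^2)$, and the closing Robbins--Monro arithmetic yielding $\nu(\mu\gamma-1)\geq\beta^2\gamma^2Z$. The base case, the handling of the three explicit summands under $\epsilon^{(t)}=\eta_t\phi$, and the final algebraic step are all sound.

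The gap is exactly where you anticipated it: the $\Sigma_{t}$ bookkeeping. Estimating the outer sum by ``(at most $\tau-1$ terms) times (largest term)'' gives $\Sigma_{t}\lesssim\beta(\tau-1)\bigl(1+\tfrac{\tau-2}{\alpha}\bigr)\bigl(1+\tfrac{\tau-1}{\alpha-1}\bigr)^{2\beta\gamma}\eta_t$, and squaring then produces a constant that is \emph{quadratic} in $(\tau-1)$ and in $\bigl(1+\tfrac{\tau-2}{\alpha}\bigr)$, multiplied by $\eta_t^2$. The stated $Z$, however, is \emph{linear} in both of these factors and corresponds to the bound $(\Sigma_{t})^2\leq 4\gamma\beta(\tau-1)\bigl(1+\tfrac{\tau-2}{\alpha}\bigr)\bigl(1+\tfrac{\tau-1}{\alpha-1}\bigr)^{4\gamma\beta}\eta_t\beta$ in~\eqref{eq:Sigma_2ndP}, i.e.\ with a single power of $\eta_t$. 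The paper obtains the linear dependence by first bounding the inner sum $\sum_{\ell}\tfrac{1}{\ell+\alpha+\gamma\beta}$ by the logarithm $\ln\bigl(1+\tfrac{t-t_{k-1}}{t_{k-1}-1+\alpha+\gamma\beta}\bigr)$ and then applying $\ln(1+x)\leq 2\sqrt{x}$, so that the square root absorbs half of the $(\tau-1)$ factor before squaring. Your claim that ``squaring produces exactly the constant structure that appears in $Z$'' is therefore not correct: your route still delivers an $\mathcal{O}(1/t)$ rate, but with a constant that is not dominated by the stated $Z$ in general (e.g.\ once $\tau-1$ exceeds roughly $4\alpha$), so the theorem as written, with that specific $Z$, does not follow. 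Inserting the logarithm-plus-$\ln(1+x)\leq2\sqrt{x}$ step repairs this, and the remainder of your argument then goes through unchanged.
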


\begin{skproof}
We carry out the proof by induction and start with the first global aggregation. The condition in~\eqref{eq:thm2_result-1-T} trivially holds when $t = t_0 = 0$, since $
\nu\geq\alpha[F(\hat{\mathbf w}^{(0)})-F(\mathbf w^*)]$.
    Now, assuming that
    $\mathbb E\left[F(\hat{\mathbf w}^{(t_{k-1})})-F(\mathbf w^*)\right]  
        \leq \frac{\nu}{t_{k-1}+\alpha}$
for some $k\geq 1$, we prove that this implies 
\begin{align} \label{eq:thm2_result-1T}
        &\mathbb E\left[F(\hat{\mathbf w}^{(t)})-F(\mathbf w^*)\right]  
        \leq \frac{\nu}{t+\alpha},\ \forall t\in \mathcal{T}_k,
    \end{align}
    and as a result of which $\mathbb E\left[F(\hat{\mathbf w}^{(t_{k})})-F(\mathbf w^*)\right]  
        \leq \frac{\nu}{t_{k}+\alpha}$.
    To prove~\eqref{eq:thm2_result-1T}, we use induction over $t\in \{t_{k-1},\dots,t_k-1\}$.
    The condition trivially holds when $t=t_{k-1}$ from the first induction hypothesis.
    Now, we suppose that it holds for some $t\in \{t_{k-1},\dots,t_k-1\}$, and demonstrate that it holds at $t+1$.
    
    From the result of Theorem~\ref{co1},
using the induction hypothesis, the bound on $A^{(t)}$, $\epsilon^{(t)}=\eta_t\phi$, and the facts that $\eta_{t+1}\leq \eta_t$, $\eta_t\leq \eta_0\leq\frac{\mu}{\beta^2}\leq 1/\beta$ and $\epsilon^{(0)}=\eta_0\phi\leq\phi/\beta$, we get
\begin{equation} \label{eq:thm1_SigT}
    \hspace{-.6mm}\resizebox{.94\linewidth}{!}{$
    \begin{aligned}
        &\mathbb E\left[F(\hat{\mathbf w}^{(t+1)})-F(\mathbf w^*)\right]
        \leq
        \left(1-\mu\eta_{t}\right)\frac{\nu}{t+\alpha} + \frac{\eta_t^2\beta}{2}\left(\sigma^2+2\phi^2\right)
         \\&
        +6\left(\varrho^{\mathsf{min}}\right)^{-1}\eta_t\underbrace{(\Sigma_{t})^2}_{(a)}
        \left(\sigma^2+\phi^2+\delta^2\right)  ,
    \end{aligned}
    $} \hspace{-4mm}
\end{equation}
\vspace{-0.12in}

\noindent where $\Sigma_{t}$ is given in Proposition~\ref{Local_disperseT}.
To bound $(a)$, we first use the fact that 
\vspace{-1mm}
\begin{equation} \label{eq:Sigma_1T}
    \hspace{-6.5mm}\resizebox{.7\linewidth}{!}{$
        \Sigma_{t}
          \leq
          \gamma\beta\underbrace{\Bigg(\prod_{j=t_{k-1}}^{t-1}\Big(1+\frac{2\gamma\beta}{j+\alpha}\Big)\Bigg)}_{(i)}
          \underbrace{\sum_{\ell=t_{k-1}}^{t-1}\frac{1}{\ell+\alpha+\gamma\beta}}_{(ii)}.
          $} \hspace{-7mm}
          \vspace{-2mm}
\end{equation}
Since $\frac{1}{\ell+\alpha+\gamma\beta}$ is decreasing in $\ell$, $(ii)$ can be bounded as
\begin{equation} \label{eq:(ii)}
\resizebox{.8\linewidth}{!}{$
\begin{aligned}
    &\sum_{\ell=t_{k-1}}^{t-1}\frac{1}{\ell+\alpha+\gamma\beta}
    \leq
    \ln\left(1+\frac{t-t_{k-1}}{t_{k-1}-1+\alpha+\gamma\beta}\right).
\end{aligned}
$}
\end{equation}
\vspace{-0.05in}

\noindent Rewriting $(i)$ as {\footnotesize$ \hspace{-2mm}
        \displaystyle\prod_{j=t_{k-1}}^{t-1}\hspace{-1.8mm}\big(1+\frac{2\gamma\beta}{j+\alpha}\big)
        =
        e^{\sum_{j=t_{k-1}}^{t-1}\ln\big(1+\frac{2\gamma\beta}{j+\alpha}\big)}
$}, 
 we get
 \vspace{-3mm}
{\small\begin{equation} \label{eq:ln(i)-t}
    \begin{aligned}
        &\sum\limits_{j=t_{k-1}}^{t-1}\ln(1+\frac{2\gamma\beta}{j+\alpha})
        \leq
        2\gamma\beta \ln\left(1+\frac{t-t_{k-1}}{t_{k-1}-1+\alpha}\right),
    \end{aligned}
\end{equation}}
 \vspace{-3mm}
 
\noindent which yields {\small$
        \prod_{j=t_{k-1}}^{t-1}\left(1+\frac{2\gamma\beta}{j+\alpha}\right)
        \leq
        \left(1+\frac{t-t_{k-1}}{t_{k-1}-1+\alpha}\right)^{2\gamma\beta}
$.}

    Replacing the bounds for $(i)$ and $(ii)$ back in~\eqref{eq:Sigma_1T},
    and using the fact that $\ln(1+x) \leq 2\sqrt{x}$ for $x\geq 0$,
    and performing some algebraic manipulations, we bound $(a)$ in~\eqref{eq:thm1_SigT} as follows:
\begin{equation} \label{eq:Sigma_2ndP}
\hspace{-.6mm}\resizebox{.85\linewidth}{!}{$
    (\Sigma_{t})^2
    \leq
    4\gamma\beta(\tau-1)\left(1+\frac{\tau-2}{\alpha}\right)
    \left(1+\frac{\tau-1}{\alpha-1}\right)^{4\gamma\beta}\eta_t\beta.
    $}\hspace{-3mm}
\end{equation}
Substituting~\eqref{eq:Sigma_2ndP} into~\eqref{eq:thm1_SigT}, we get
\begin{equation} \label{eq:induce_formT}
\begin{aligned}
    & \mathbb E[F(\hat{\mathbf w}^{(t+1)})-F(\mathbf w^*)]
    \leq 
    \left(1-\mu\eta_t\right)\frac{\nu}{t+\alpha}
    +\eta_t^2\beta^2 Z,
\end{aligned}
 \hspace{-1.3mm}
\end{equation}
where $Z$ is given in the statement of the theorem.


The induction is completed by showing that the right hand side of~\eqref{eq:induce_formT} is less than or equal to $\frac{\nu}{t+1+\alpha}$, or equivalently:
    \begin{equation}
        \hspace{-.6mm}\resizebox{.94\linewidth}{!}{$
     \begin{aligned}\label{eq:fin2P}
        &\mu\gamma(t+\alpha)\nu
        +Z \gamma^2\beta^2(t+\alpha)
        +\nu(t+\alpha-1)
        +\frac{\nu}{t+1+\alpha}
        \leq 0,\forall t.
     \end{aligned}
     $} \hspace{-3.5mm}
    \end{equation}
It is sufficient to satisfy \eqref{eq:fin2P} for $t\to\infty $ and $t=0$ since the expression on the left hand side is convex in $t$. Obtaining these limits gives us: $\mu\gamma-1>0$ and $\nu\geq Z\frac{\beta^2\gamma^2}{\mu\gamma-1}$, which completes the induction and thus the proof (See Appendix D).
\end{skproof}

The bound in Theorem~\ref{thm:subLin} reveals the impact of the duration local model training (i.e., $\tau$ encapsulated in $Z$) on the convergence captured in $\nu$; increasing $\tau$ results in a sharp increase of the upper bound. Also, the consensus error $\epsilon^{(t)}$, captured via $\phi$, has a quadratic impact on the upper bound. Fixing all the parameters, increasing $\tau$ requires a smaller value of $\phi$ for a fixed value of $\nu$. This matches well with the intuition behind {\tt TT-HF}, since consensus among the nodes (which decrease $\epsilon^{(t)}$) is incorporated to reduce the global aggregation frequency, and thus saves on costly uplink transmissions.

\begin{remark}\label{rem:D2D_rounds} Using Lemma~\ref{lemma:cons}, the rounds of D2D communications can be tuned to achieve any desired consensus error inside the clusters. In particular, to satisfy the condition in Theorem~\ref{thm:subLin}, it is sufficient to have $
    \Gamma_c^{(t)}=\max\Big\{\log\Big(\frac{\eta_t\phi}{\sqrt{s_c}\Upsilon_c^{(t)}}\Big)/\log\Big(\lambda_c\Big),0\Big\},\forall c$. Sometimes this will yield $\Gamma_c^{(t)}=0$, meaning no current consensus for cluster $c$, and implying consensus formation among nodes is aperiodic.
\end{remark}

\vspace{-1mm}
\section{Numerical Evaluation}
\label{sec:experiments}



\subsection{Experimental Setup}
\noindent \textbf{Network architecture.}
We consider a network of $I = 125$ edge devices partitioned into $N = 25$ clusters, each with $s_c = 5$ devices. The links among devices within each cluster are generated using a random geometric graph~\cite{hosseinalipour2020multi}, tuned such that the clusters have an average spectral radius of $\rho = 0.7$. \\
\textbf{Dataset.}
We consider Fashion-MNIST, a dataset commonly used for image classification. It contains $70$K images, where each image is one of 10 labels of fashion products. \\
\textbf{Local data distributions.}
We partition the images across devices such that each local dataset contains datapoints from only 3 of the 10 labels. The 3 labels are varied across devices. In this way, we consider non-i.i.d. local data distributions.
\\
\textbf{ML models.}
We consider two models: regularized (squared) support vector machine (SVM), and a neural network (NN) with one fully connected hidden layer and 7840 neurons.

\subsection{Results and Discussion}

\textbf{Model improvement from local aggregations:} In Fig.~\ref{fig:SVM/fixed}, we conduct a performance comparison between {\tt TT-HF} and two baselines considering current federated learning (FL) algorithms that do not exploit D2D communications. Both baselines presume full device participation (i.e., all devices conduct uplink transmissions), and thus are 5x more uplink resource-intensive. In one baseline, the global aggregations are performed after each round of training ($\tau = 1$) to replicate centralized training, as an upper bound of performance. In the other baseline, we set $\tau = 20$ based on~\cite{wang2019adaptive}. For {\tt TT-HF}, we set $\tau = 20$ and conduct a fixed number of D2D rounds after every $5$ SGD iterations in all clusters, i.e., $\Gamma^{(t)}_c = \Gamma$ for different $\Gamma$.

Fig.~\ref{fig:SVM/fixed} demonstrates that conducting local D2D communications leads to substantial performance gains in training. 
Also, it shows that increasing $\Gamma$ leads to better performance gains compared to FL with $\tau = 20$, emphasizing the benefit of local consensus in the presence of non-i.i.d. local data distributions. It further shows a diminishing reward of increasing $\Gamma$ as the performance of {\tt TT-HF} approaches that of FL with $\tau = 1$. 


\textbf{Reduction in global aggregation frequency:} In Fig.~\ref{fig:SVM/varied}, we conduct a performance comparison between {\tt TT-HF} and the baselines for increased local model training intervals $\tau$; recall that larger $\tau$ reduces the frequency of uplink communications. We conduct consensus after every $5$ SGD iterations, and increase $\Gamma$ as $\tau$ increases. We see that {\tt TT-HF} outperforms the FL baseline with $\tau = 20$ while utilizing a lower frequency of global aggregations: increasing $\tau$ can be counteracted with increased D2D rounds $\Gamma$ among the nodes. 

\textbf{Improvement in energy and delay:} 
Finally, we consider performance in terms of energy consumption and training delay incurred. In Fig.~\ref{fig:resource/power}, we compare {\tt TT-HF} against (i) FL with full device participation and $\tau = 1$, and (ii) FL with only one device randomly selected from each cluster and $\tau = 20$, for NN.\footnote{Similar results for SVM are observed, omitted due to space limitations.} For {\tt TT-HF}, we increase the local model training interval to $\tau = 40$ and conduct aperiodic D2D consensus rounds according to Remark \ref{rem:D2D_rounds}. We demonstrate the result under various ratios of energy consumption $E_{\textrm{D2D}} / E_{\textrm{Glob}}$ and delays $\Delta_{\textrm{D2D}} / \Delta_{\textrm{Glob}}$ between D2D communications and global aggregations. For uplink transmission, we assumed that each device transmits with a power of $24\textrm{dbm}$ and a delay of $0.25\textrm{s}$~\cite{hmila2019energy}.

In Fig.~\ref{fig:resource/power}(a), we see that {\tt TT-HF} lowers the overall energy consumption for smaller values of ${E_{\textrm{D2D}}}/{E_{\textrm{Glob}}}$. After the ratio reaches a certain threshold, {\tt TT-HF} no longer saves energy, as we would expect. Similarly, in Fig.~\ref{fig:resource/power}(b), the performance gain of {\tt TT-HF} narrows as $\Delta_{\textrm{D2D}} / \Delta_{\textrm{Glob}}$ increases. Ratios of $0.1$ for either of these metrics is significantly larger than what is being observed in 5G~\cite{hmila2019energy}, indicating that {\tt TT-HF} would be effective in practical systems.

\begin{figure}[t]
\begin{subfigure}{0.2\textwidth}
\includegraphics[height=30mm,width=85mm]{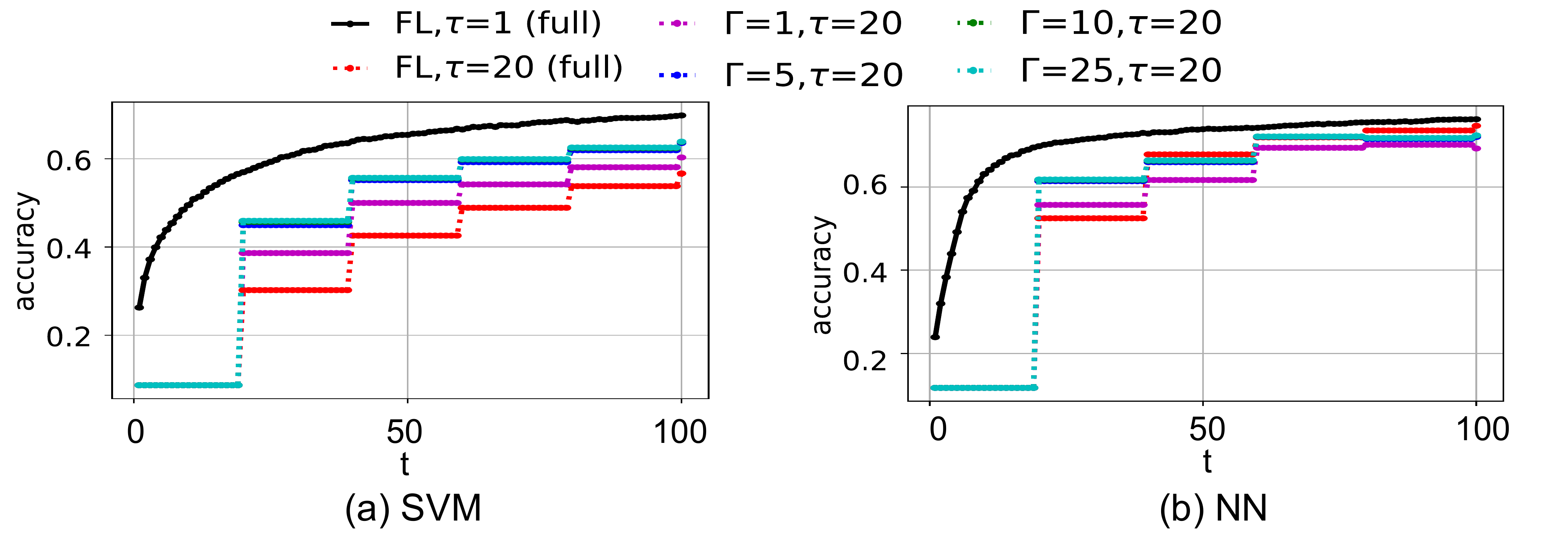}
\end{subfigure}
\vspace{-1.mm}
\caption{Performance comparison between {\tt TT-HF} and baseline methods when varying the number of D2D consensus rounds ($\Gamma$).}
\label{fig:SVM/fixed}
\vspace{-3mm}
\end{figure}

\begin{figure}[t]
\begin{subfigure}{0.2\textwidth}
\includegraphics[height=30mm,width=85mm]{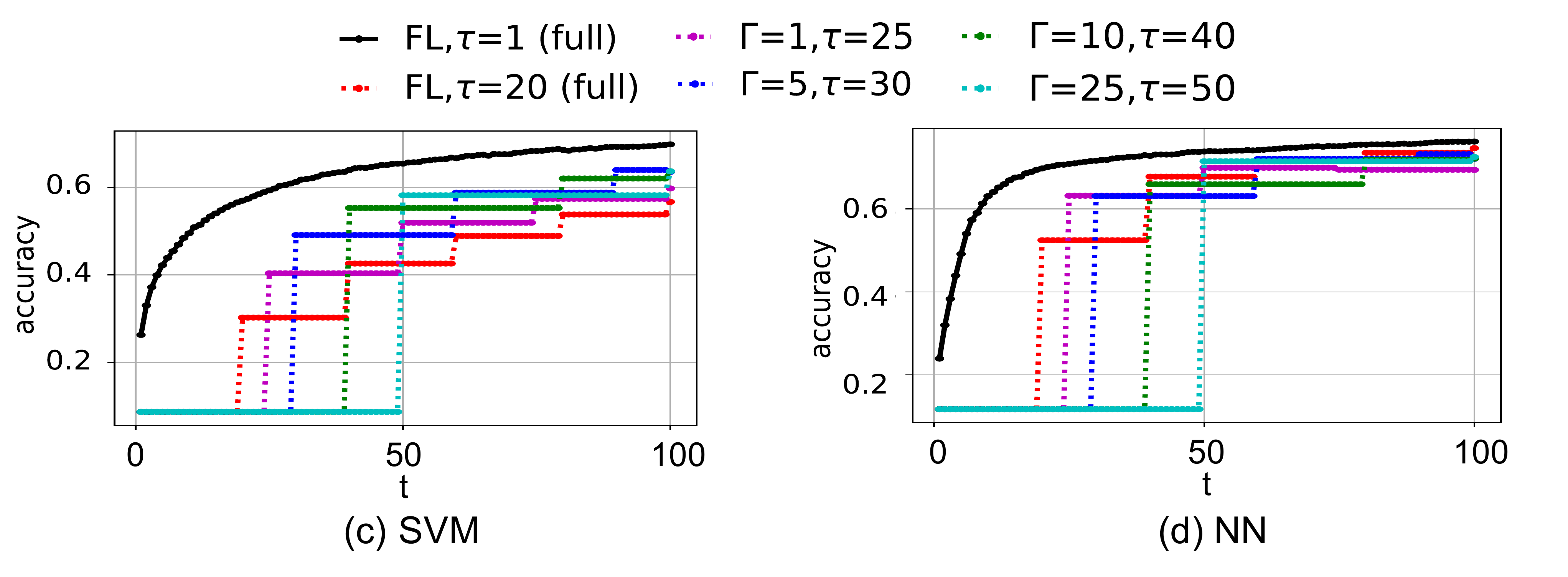}
\end{subfigure}
\vspace{-1.mm}
\caption{Performance comparison between {\tt TT-HF} and baseline methods varying $\tau$ and the number of D2D consensus rounds ($\Gamma$).}
\label{fig:SVM/varied}
\vspace{-4mm}
\end{figure} 

\begin{figure}[t]
\begin{subfigure}{0.2\linewidth}
\includegraphics[height=30mm,width=85mm]{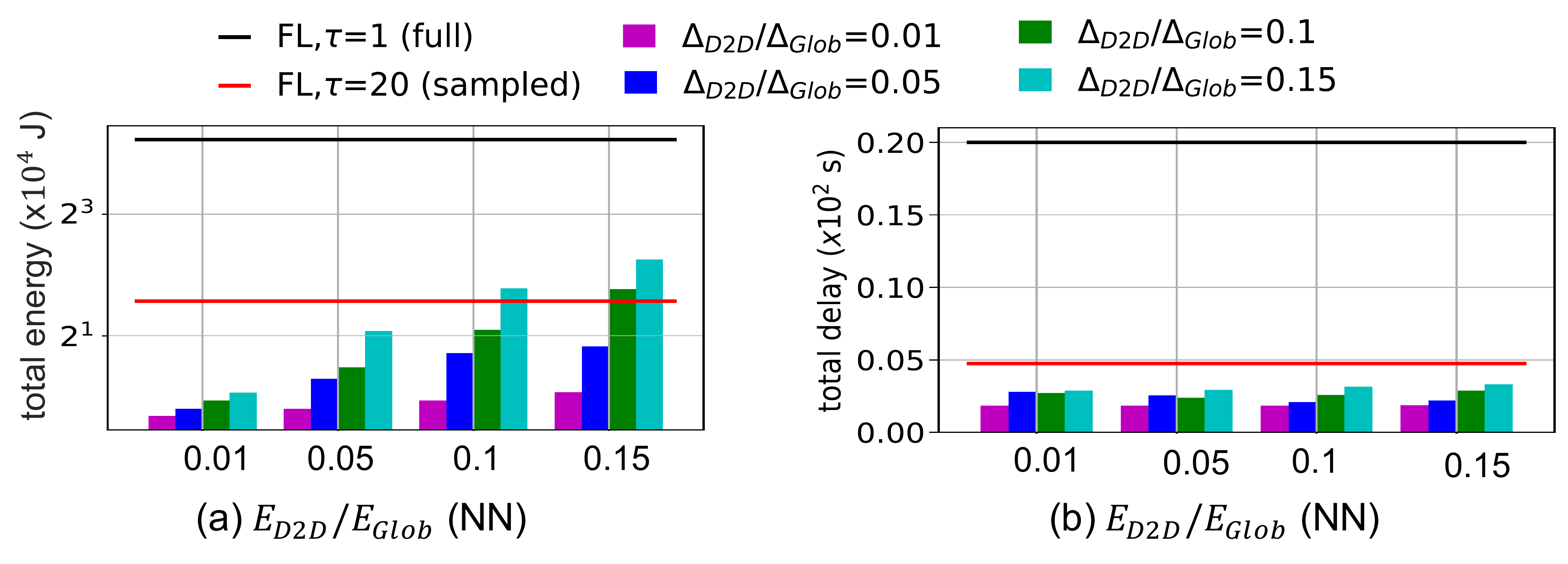}
\end{subfigure}
\vspace{-1.mm}
\caption{Comparing total energy and delay achieved by {\tt TT-HF} versus baselines upon reaching $60\%$ of peak accuracy for different configurations of delay and energy consumption in the case of NN.}
\label{fig:resource/power}
\vspace{-5mm}
\end{figure}

\section{Conclusion} 
\noindent We developed {\tt TT-HF}, a methodology that augments the star topology of conventional federated learning with cooperative consensus among devices in D2D-enabled edge networks. We investigated the convergence behavior of {\tt TT-HF}, revealing the impact of the consensus error, gradient diversity, and global aggregation period on convergence. We then identified a set of conditions under which {\tt TT-HF} converges sublinearly with rate of $\mathcal{O}(1/t)$, coinciding with centralized SGD. Through numerical experiments, we demonstrated the performance gains that can be achieved via {\tt TT-HF} in terms of model accuracy, training time, and network resource utilization.



\vspace{-2mm}
\bibliographystyle{IEEEtran}
\bibliography{ref}

\pagebreak

\begingroup
\let\clearpage\relax 
\onecolumn

\appendices 
\setcounter{lemma}{0}
\setcounter{proposition}{0}
\setcounter{theorem}{0}
\setcounter{definition}{0}
\setcounter{assumption}{0}
\section{Preliminaries and Notations used in the Proofs}\label{app:notations}
In the following Appendices, in order to increase the tractability of the the expressions inside the proofs, we introduce the  the following scaled parameters: (i) strong convexity denoted by $\tilde{\mu}$, normalized gradient diversity by $\tilde{\delta}$, step size by $\tilde{\eta_t}$, SGD variance $\tilde{\sigma}$, and consensus error inside the clusters $\tilde{\epsilon}_c^{(t)}$ and across the network $\tilde{\epsilon}^{(t)}$ inside the cluster as follows:
\begin{itemize}[leftmargin=5mm]
\item  \textbf{Strong convexity}:
 $F$ is $\mu$-strongly convex, i.e.,
\begin{align} 
    F(\mathbf w_1) \geq  F(\mathbf w_2)&+\nabla F(\mathbf w_2)^\top(\mathbf w_1-\mathbf w_2)+\frac{\tilde{\mu}\beta}{2}\Big\Vert\mathbf w_1-\mathbf w_2\Big\Vert^2,~\forall { \mathbf w_1,\mathbf w_2},
\end{align}
where as compared to~Assumption~\ref{Assump:SmoothStrong}, we considered $\tilde{\mu}=\mu/\beta\in(0,1)$.
\item \textbf{Gradient diversity}: The gradient diversity across the device clusters $c$ is measured via two non-negative constants $\delta,\zeta$ that satisfy 
\begin{align} 
    \Vert\nabla\hat F_c(\mathbf w)-\nabla F(\mathbf w)\Vert
    \leq \sqrt{\beta}\tilde{\delta}+ 2\omega\beta \Vert\mathbf w-\mathbf w^*\Vert,~\forall c, \mathbf w,
\end{align}
where as compared to Assumption~\ref{gradDiv}, we presumed $\tilde{\delta}=\delta/\sqrt{\beta}$ and $\omega=\zeta/(2\beta)\in [0,1]$.

\item \textbf{Step size}: The local updates to compute \textit{intermediate updated local model} at the devices is expressed as follows:
\begin{align} 
    {\widetilde{\mathbf{w}}}_i^{(t)} = 
           \mathbf w_i^{(t-1)}-\frac{\tilde{\eta}_{t-1}}{\beta} \widehat{\mathbf g}_{i}^{(t-1)},~t\in\mathcal T_k,
\end{align}
where we used the scaled in the step size, i.e., ${\tilde{\eta}_{t-1}}=\eta_{t-1}{\beta}$. Also, when we consider decreasing step size, we consider scaled parameter $\tilde{\gamma}$ in the step size as follows: $\frac{\gamma}{t+\alpha}=\frac{\tilde{\gamma}/\beta}{t+\alpha}$ indicating that $\tilde{\gamma}=\gamma\beta$.

\item \textbf{Variance of the noise of the estimated gradient through SGD}:
The variance on the SGD noise is bounded as:
\begin{align}
    \mathbb{E}[\Vert{\mathbf n}_{j}^{(t)}\Vert^2]\leq \beta\tilde{\sigma}^2, \forall j,t,
\end{align}
where we consider scaled SGD noise as: $\tilde{\sigma}^2=\sigma^2/\beta$.

\item \textbf{Average of the consensus error inside cluster $c$ and across the network}: $\epsilon_c^{(t)}$ is an upper bound on the average of the consensus error inside cluster $c$ for time $t$, i.e.,
    \begin{align}
        \frac{1}{s_c}\sum\limits_{i\in \mathcal S_c}\Vert \mathbf{e}_i^{(t)}\Vert^2 \leq (\tilde{\epsilon}_c^{(t)})^2/\beta,
    \end{align}
    where we use the scaled consensus error $(\tilde{\epsilon}_c^{(t)})^2=\beta(\epsilon_c^{(t)})^2$.
    Also, in the proofs we use the notation $\epsilon$ to denote the average consensus error across the network defined as $(\epsilon^{(t)})^2=\sum_{c=1}^N\varrho_c(\epsilon_c^{(t)})^2$. When the consensus is assumed to be decreasing over time we use the scaled coefficient $\tilde{\phi}^2=\phi^2/\beta$, resulting in  $(\epsilon^{(t)})^2=\eta_t^2\tilde{\phi}^2\beta $.
\end{itemize}

\section{Proof of Proposition \ref{Local_disperse}} \label{app:Local_disperse}
\begin{proposition} \label{Local_disperse}
    Under Assumptions \ref{beta} and~\ref{assump:SGD_noise}, if $\eta_t=\frac{\gamma}{t+\alpha}$, $\epsilon^{(t)}$ is non-increasing with respect to $t\in \mathcal T_k$, i.e., $\epsilon^{(t+1)}/\epsilon^{(t)} \leq 1$ and $\alpha\geq\beta^2\gamma/\mu$, using {\tt TT-HF} for ML model training, the following upper bound on the expected model dispersion across the clusters holds:
        \begin{align}
            &A^{(t)}\leq
             12\left(\varrho^{\mathsf{min}}\right)^{-1}[\Sigma_{t}]^2
            \left[\frac{\sigma^2}{\beta^2}+\frac{\delta^2}{\beta^2}+(\epsilon^{(0)})^2\right], ~~t\in\mathcal{T}_k,
        \end{align}
        where $\varrho^{\mathsf{min}}=\min_c\varrho_c$ and
        \begin{align}
            &[\Sigma_{t}]^2=\left[\sum\limits_{\ell=t_{k-1}}^{t-1}\beta\eta_\ell\left(\prod_{j=\ell+1}^{t-1}(1+2\eta_j\beta)\right)\right]^2.
        \end{align}
\end{proposition}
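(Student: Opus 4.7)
The plan is to follow the structure hinted at in the sketch but fill in the logical scaffolding. First I would obtain explicit one-step update equations for $\bar{\mathbf w}_c^{(t+1)}$ and $\bar{\mathbf w}^{(t+1)}$ by averaging the local SGD updates in~\eqref{8} over cluster $c$ (using $\rho_{i,c}=1/s_c$) and then over the network (using $\varrho_c=s_c/I$). The key observation is that the consensus step does not change the cluster average, so $\bar{\mathbf w}_c^{(t+1)}$ equals the cluster average of the intermediate iterates $\widetilde{\mathbf w}_i^{(t+1)}$; this yields~\eqref{eq:w_cT} and~\eqref{eq:w-T} directly, with no consensus-error terms entering the mean dynamics.

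Next I would subtract~\eqref{eq:w-T} from~\eqref{eq:w_cT}, take norms, and build a recursion for $\sum_c\varrho_c\Vert\bar{\mathbf w}_c^{(t+1)}-\bar{\mathbf w}^{(t+1)}\Vert$. The decomposition I would use is: (i)~a contraction-like term coming from $\bar{\mathbf w}_c^{(t)}-\bar{\mathbf w}^{(t)}$, handled by adding and subtracting $\nabla\hat F_c(\bar{\mathbf w}_c^{(t)})$ and $\nabla F(\bar{\mathbf w}^{(t)})$ and applying $\beta$-smoothness (Assumption~\ref{beta}) to incur a factor $(1+2\eta_t\beta)$; (ii)~a consensus-error term $2\sum_d\varrho_d\epsilon_d^{(t)}$ arising when one converts gradients at $\mathbf w_j^{(t)}$ into gradients at $\bar{\mathbf w}_c^{(t)}$ via $\beta$-smoothness together with Definition~\ref{paraDiv}; (iii)~a gradient-diversity term $\tfrac{\delta}{\beta}$ from $\Vert\nabla\hat F_c(\bar{\mathbf w}^{(t)})-\nabla F(\bar{\mathbf w}^{(t)})\Vert\leq\delta$; and (iv)~an SGD-noise term, bounded in $L^2$ by $2\sigma/\beta$ after applying Jensen's inequality, the independence of the noises across devices, and Assumption~\ref{assump:SGD_noise}. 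The condition $\alpha\geq\gamma\beta^2/\mu$ enters here to guarantee $\eta_t\leq\mu/\beta^2\leq 1/\beta$, which keeps the contraction coefficient manageable. The result is the one-step inequality~\eqref{eq:fact_x1T}.

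Then I would unroll the recursion from $t_{k-1}$ up to $t$. Since the residual (consensus error, diversity, noise) terms are multiplied by $\tilde\eta_\ell=\eta_\ell\beta$ at step $\ell$ and accumulate a product of $(1+2\eta_j\beta)$ from later steps, the total residual is $\Sigma_t\cdot\big(2\sum_d\varrho_d\epsilon_d^{(t)}+\tfrac{\delta}{\beta}+2\tfrac{\sigma}{\beta}\big)$. The monotonicity assumption $\epsilon^{(t+1)}/\epsilon^{(t)}\leq 1$ lets me replace the time-varying consensus error by $\epsilon^{(0)}$, and Cauchy–Schwarz (applied to $\sum_d\varrho_d\epsilon_d^{(t)}$ with weights $\varrho_d$) bounds this by $\epsilon^{(0)}$. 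At the initial step $\ell=t_{k-1}$ the dispersion vanishes because the global model was just broadcast. Squaring and using Cauchy–Schwarz once more on the $3$-term sum inside the brackets produces the factor $3$ and the $\sigma^2/\beta^2+\delta^2/\beta^2+(\epsilon^{(0)})^2$ structure; together with a factor of $4$ absorbed from the cross-term expansions, this yields the constant $12$ in~\eqref{eq:recurseT}.

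Finally I would pass from $\mathbb E[(\sum_c\varrho_c\Vert\bar{\mathbf w}_c^{(t)}-\bar{\mathbf w}^{(t)}\Vert)^2]$ to $A^{(t)}=\mathbb E[\sum_c\varrho_c\Vert\bar{\mathbf w}_c^{(t)}-\bar{\mathbf w}^{(t)}\Vert^2]$ via~\eqref{eq:chi}, which follows from Cauchy–Schwarz together with $\varrho_c\leq 1$ and $\varrho_c\geq\varrho^{\mathsf{min}}$. The main obstacle I anticipate is the bookkeeping in step (ii) of the second paragraph: carefully writing $\nabla F_j(\mathbf w_j^{(t)})-\nabla F_j(\bar{\mathbf w}_c^{(t)})$ terms, bounding them by $\beta\Vert\mathbf e_j^{(t)}\Vert$, and then passing from a per-cluster sum to $\sum_d\varrho_d\epsilon_d^{(t)}$ via another Cauchy–Schwarz, all while making sure the contraction coefficient stays exactly $(1+2\eta_t\beta)$ so that the product telescopes cleanly into $\Sigma_t$.
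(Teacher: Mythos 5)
Your proposal matches the paper's proof essentially step for step: the same averaged update equations for $\bar{\mathbf w}_c^{(t+1)}$ and $\bar{\mathbf w}^{(t+1)}$ (with consensus leaving the cluster mean untouched), the same one-step recursion with coefficient $(1+2\eta_t\beta)$ and residual $2\sum_d\varrho_d\epsilon_d^{(t)}+\delta/\beta+2\sigma/\beta$, the same unrolling into $\Sigma_t$ using zero dispersion at $t_{k-1}$ and monotonicity of $\epsilon^{(t)}$, the same $(2a+b+2c)^2\le 12(a^2+b^2+c^2)$ step producing the constant $12$, and the same final passage to $A^{(t)}$ via the $(\varrho^{\mathsf{min}})^{-1}$ factor. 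The only cosmetic quibbles are that the last step is not really Cauchy--Schwarz but the elementary bound $x_c\le(\varrho^{\mathsf{min}})^{-1}\sum_{c'}\varrho_{c'}x_{c'}$ for nonnegative $x_c$, and that independence of the SGD noises is not needed (Jensen plus the paper's Fact~1 suffice); otherwise this is the paper's argument.
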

\begin{proof}

We break down the proof into 3 parts: in Part I we find the relationship between $\Vert\bar{\mathbf w}^{(t)}-\mathbf w^*\Vert$ and $\sum\limits_{c=1}^N\varrho_{c}\Vert\bar{\mathbf w}_c^{(t)}-\bar{\mathbf w}^{(t)}\Vert$, which turns out to form a coupled dynamic system, which is solved in Part II. Finally, Part III draws the connection between $A^{(t)}$ and the solution of the coupled dynamic system and obtains the upper bound on $A^{(t)}$.

\textbf{(Part I) Finding the upper bound of $\sum\limits_{c=1}^N\varrho_{c}\Vert\bar{\mathbf w}_c^{(t)}-\bar{\mathbf w}^{(t)}\Vert$}:
Using the definition of $\bar{\mathbf w}^{(t+1)}$ given in Definition~\ref{modDisp}, and the notations introduced in Appendix~\ref{app:notations}, we have:  
\begin{equation}
    \bar{\mathbf w}^{(t+1)}= \bar{\mathbf w}^{(t)}-\frac{\tilde{\eta_t}}{\beta}\sum\limits_{c=1}^N\varrho_{c}\frac{1}{s_{c}}\sum\limits_{j\in\mathcal S_{c}} \widehat{\mathbf g}_{j,t},~t\in\mathcal{T}_k.
\end{equation}

    We use the fact that $\bar{\mathbf w}_c^{(t+1)}$ can be written as follows:
    \begin{align} \label{eq:w_c}
        &\bar{\mathbf w}_c^{(t+1)}=\bar{\mathbf w}_c^{(t)}
        -\frac{\tilde{\eta}_t}{\beta}\frac{1}{s_{c}}\sum\limits_{j\in\mathcal S_{c}}\nabla F_j(\mathbf w_j^{(t)})
        -\frac{\tilde{\eta}_t}{\beta}\frac{1}{s_{c}}\sum\limits_{j\in\mathcal S_{c}}\mathbf n_j^{(t)}.
    \end{align}
    Similarly, $\bar{\mathbf w}^{(t+1)}$ can be written as:
    \begin{align} \label{eq:w-}
        &\bar{\mathbf w}^{(t+1)}=
        \bar{\mathbf w}^{(t)}
        -\frac{\tilde{\eta}_t}{\beta}\sum\limits_{d=1}^N\varrho_{d}\frac{1}{s_{d}}\sum\limits_{j\in\mathcal S_{d}}\nabla F_j(\mathbf w_j^{(t)})
        -\frac{\tilde{\eta}_t}{\beta}\sum\limits_{d=1}^N\varrho_{d}\frac{1}{s_{d}}\sum\limits_{j\in\mathcal S_{d}}\mathbf n_j^{(t)}.
    \end{align}
    Combining~\eqref{eq:w_c} and~\eqref{eq:w-} and performing some algebraic manipulations yields:
    \begin{align} 
        &\bar{\mathbf w}_c^{(t+1)}-\bar{\mathbf w}^{(t+1)}=\bar{\mathbf w}_c^{(t)}-\bar{\mathbf w}^{(t)}
        -\frac{\tilde{\eta}_t}{\beta}\frac{1}{s_{c}}\sum\limits_{j\in\mathcal S_{c}}\mathbf n_{j}^{(t)}
        +\frac{\tilde{\eta}_t}{\beta}\sum\limits_{d=1}^N\varrho_{d}\frac{1}{s_{d}}\sum\limits_{j\in\mathcal S_{d}}\mathbf n_{j}^{(t)}
        \nonumber \\&
        -\frac{\tilde{\eta}_t}{\beta}\frac{1}{s_{c}}\sum\limits_{j\in\mathcal S_{c}}\Big[\nabla F_j(\bar{\mathbf w}_j ^{(t)})-\nabla F_j(\bar{\mathbf w}_c ^{(t)})\Big]
        +\frac{\tilde{\eta}_t}{\beta}\sum\limits_{d=1}^N\varrho_{d}\frac{1}{s_{d}}\sum\limits_{j\in\mathcal S_{d}}\Big[\nabla F_j(\bar{\mathbf w}_j ^{(t)})-\nabla F_j(\bar{\mathbf w}_d^{(t)})\Big]
        \nonumber \\&
        -\frac{\tilde{\eta}_t}{\beta}\Big[\nabla\hat F_c(\bar{\mathbf w}_c^{(t)})-\nabla\hat F_c(\bar{\mathbf w}^{(t)})\Big]
        +\frac{\tilde{\eta}_t}{\beta}\sum\limits_{d=1}^N\varrho_{d}\Big[\nabla\hat F_d(\bar{\mathbf w}_d^{(t)})-\nabla\hat F_d(\bar{\mathbf w}^{(t)})\Big]
       \nonumber \\&
        -\frac{\tilde{\eta}_t}{\beta}\Big[\nabla\hat F_c(\bar{\mathbf w}^{(t)})-\nabla F(\bar{\mathbf w}^{(t)})\Big].
    \end{align}   
    Taking the norm-2 of the both hand sides of the above equality and applying the triangle inequality gives us
    \begin{align} \label{eq:tri_wc}
        &\Vert\bar{\mathbf w}_c^{(t+1)}-\bar{\mathbf w}^{(t+1)}\Vert\leq
        \Vert\bar{\mathbf w}_c^{(t)}-\bar{\mathbf w}^{(t)}\Vert
        +\frac{\tilde{\eta}_t}{\beta}\Vert\frac{1}{s_{c}}\sum\limits_{j\in\mathcal S_{c}}\mathbf n_{j}^{(t)}\Vert
        +\frac{\tilde{\eta}_t}{\beta}\Vert\sum\limits_{d=1}^N\varrho_{d}\frac{1}{s_{d}}\sum\limits_{j\in\mathcal S_{d}}\mathbf n_{j}^{(t)}\Vert
        \nonumber \\&
        +\frac{\tilde{\eta}_t}{\beta}\frac{1}{s_{c}}\sum\limits_{j\in\mathcal S_{c}}\Vert\nabla F_j(\bar{\mathbf w}_j ^{(t)})-\nabla F_j(\bar{\mathbf w}_c ^{(t)})\Vert
        +\frac{\tilde{\eta}_t}{\beta}\sum\limits_{d=1}^N\varrho_{d}\frac{1}{s_{d}}\sum\limits_{j\in\mathcal S_{d}}\Vert\nabla F_j(\bar{\mathbf w}_j ^{(t)})-\nabla F_j(\bar{\mathbf w}_d^{(t)})\Vert
        \nonumber \\&
        +\frac{\tilde{\eta}_t}{\beta}\Vert\nabla\hat F_c(\bar{\mathbf w}_c^{(t)})-\nabla\hat F_c(\bar{\mathbf w}^{(t)})\Vert
        +\frac{\tilde{\eta}_t}{\beta}\sum\limits_{d=1}^N\varrho_{d}\Vert\nabla\hat F_d(\bar{\mathbf w}_d^{(t)})-\nabla\hat F_d(\bar{\mathbf w}^{(t)})\Vert
        \nonumber \\&
        +\frac{\tilde{\eta}_t}{\beta}\Vert\nabla\hat F_c(\bar{\mathbf w}^{(t)})-\nabla F(\bar{\mathbf w}^{(t)})\Vert.
    \end{align}   
    Considering the terms on the right hand side of~\eqref{eq:tri_wc}, using $\beta$-smoothness, we have 
    \begin{align} \label{eq:beta_use}
        \Vert\nabla F_j(\mathbf w_j^{(t)})-\nabla F_j(\bar{\mathbf w}_c^{(t)})\Vert
        \leq
        \beta\frac{1}{s_{c}}\sum\limits_{j\in\mathcal S_{c}}\Vert\mathbf w_j^{(t)}-\bar{\mathbf w}_c^{(t)}\Vert.
    \end{align}
    Moreover, using Definition \ref{paraDiv}, we get
    \begin{align} \label{eq:e_c}
        \frac{1}{s_{c}}\sum\limits_{j\in\mathcal S_{c}}\Vert\mathbf w_j^{(t)}-\bar{\mathbf w}_c^{(t)}\Vert
        &=
        \frac{1}{s_{c}}\sum\limits_{j\in\mathcal S_{c}}\Vert\mathbf e_j^{(t)}\Vert
        \leq
        \sqrt{\frac{1}{s_{c}}\sum\limits_{j\in\mathcal S_{c}}\Vert\mathbf e_j^{(t)}\Vert^2}
        \leq\tilde{\epsilon}_c^{(t)}/\sqrt{\beta}.
    \end{align}
   Replacing \eqref{eq:beta_use} into \eqref{eq:tri_wc}, we further bound the right hand side as
\begin{align} \label{eq:tri_wc2}
        &\Vert\bar{\mathbf w}_c^{(t+1)}-\bar{\mathbf w}^{(t+1)}\Vert\leq
        (1+\tilde{\eta}_t)\Vert\bar{\mathbf w}_c^{(t)}-\bar{\mathbf w}^{(t)}\Vert
                +\tilde{\eta}_t\sum\limits_{d=1}^N\varrho_{d}\Vert\bar{\mathbf w}_d^{(t)}-\bar{\mathbf w}^{(t)}\Vert
        \nonumber \\&
        +\frac{\tilde{\eta}_t}{\beta}\Vert\frac{1}{s_{c}}\sum\limits_{j\in\mathcal S_{c}}\mathbf n_{j}^{(t)}\Vert
        +\frac{\tilde{\eta}_t}{\beta}\Vert\sum\limits_{d=1}^N\varrho_{d}\frac{1}{s_{d}}\sum\limits_{j\in\mathcal S_{d}}\mathbf n_{j}^{(t)}\Vert
        \nonumber \\&
        +\tilde{\eta}_t\frac{1}{s_{c}}\sum\limits_{j\in\mathcal S_{c}}
        \Vert\bar{\mathbf w}_j ^{(t)}-\bar{\mathbf w}_c ^{(t)}\Vert
        +\tilde{\eta}_t\sum\limits_{d=1}^N\varrho_{d}\frac{1}{s_{d}}\sum\limits_{j\in\mathcal S_{d}}
        \Vert\bar{\mathbf w}_j ^{(t)}-\bar{\mathbf w}_d^{(t)}\Vert
        +\frac{\tilde{\eta}_t}{\sqrt{\beta}}\tilde{\delta}.
    \end{align}   
    Using~\eqref{eq:e_c} we have $\frac{1}{s_{d}}\sum\limits_{j\in\mathcal S_{d}}
        \Vert\bar{\mathbf w}_j ^{(t)}-\bar{\mathbf w}_d^{(t)}\Vert\leq\frac{\tilde{\epsilon}_{d}^{(t)}}{\sqrt{\beta}}$, and thus~\eqref{eq:tri_wc2} can be written as
        \begin{align} \label{eq:wc_w}
        &\Vert\bar{\mathbf w}_c^{(t+1)}-\bar{\mathbf w}^{(t+1)}\Vert\leq
        (1+\tilde{\eta}_t)\Vert\bar{\mathbf w}_c^{(t)}-\bar{\mathbf w}^{(t)}\Vert
                +\tilde{\eta}_t\sum\limits_{d=1}^N\varrho_{d}\Vert\bar{\mathbf w}_d^{(t)}-\bar{\mathbf w}^{(t)}\Vert
        \nonumber \\&
        +\frac{\tilde{\eta}_t}{\beta}\Vert\frac{1}{s_{c}}\sum\limits_{j\in\mathcal S_{c}}\mathbf n_{j}^{(t)}\Vert
        +\frac{\tilde{\eta}_t}{\beta}\Vert\sum\limits_{d=1}^N\varrho_{d}\frac{1}{s_{d}}\sum\limits_{j\in\mathcal S_{d}}\mathbf n_{j}^{(t)}\Vert
        +\frac{\tilde{\eta}_t}{\sqrt{\beta}}\left(\tilde{\epsilon}_{c}^{(t)}+\sum\limits_{d=1}^N\varrho_{d}\tilde{\epsilon}_{d}^{(t)}+\tilde{\delta}\right).
    \end{align} 
  Taking the weighted sum $\sum\limits_{c=1}^N\varrho_c$ from the both hand sides of the above inequality gives us
    \begin{align} 
        &\sum\limits_{c=1}^N\varrho_c\Vert\bar{\mathbf w}_c^{(t+1)}-\bar{\mathbf w}^{(t+1)}\Vert\leq
        (1+2\tilde{\eta}_t)\sum\limits_{c=1}^N\varrho_c\Vert\bar{\mathbf w}_c^{(t)}-\bar{\mathbf w}^{(t)}\Vert
        \nonumber \\&
        +\frac{\tilde{\eta}_t}{\beta}\sum\limits_{c=1}^N\varrho_c\Vert\frac{1}{s_{c}}\sum\limits_{j\in\mathcal S_{c}}\mathbf n_{j}^{(t)}\Vert
        +\frac{\tilde{\eta}_t}{\beta}\Vert\sum\limits_{d=1}^N\varrho_{d}\frac{1}{s_{d}}\sum\limits_{j\in\mathcal S_{d}}\mathbf n_{j}^{(t)}\Vert
        +\frac{\tilde{\eta}_t}{\sqrt{\beta}}\left(2\sum\limits_{d=1}^N\varrho_{d}\tilde{\epsilon}_{d}^{(t)}+\tilde{\delta}\right).
    \end{align}  
Multiplying the both hand sides of the above inequality by $\sqrt{\beta}$ followed by taking square and expectation, we get
\begin{align}
        \mathbb E\left[\sqrt{\beta}\sum\limits_{c=1}^N\varrho_c\Vert\bar{\mathbf w}_c^{(t+1)}-\bar{\mathbf w}^{(t+1)}\Vert\right]^2 \leq& 
         \mathbb E\bigg[\sqrt{\beta}(1+2\tilde{\eta}_t)\sum\limits_{c=1}^N\varrho_c\Vert\bar{\mathbf w}_c^{(t)}-\bar{\mathbf w}^{(t)}\Vert
        +\frac{\tilde{\eta}_t}{\sqrt{\beta}} \sum\limits_{c=1}^N\varrho_c\Vert\frac{1}{s_{c}}\sum\limits_{j\in\mathcal S_{c}}\mathbf n_{j}^{(t)}\Vert
        \nonumber \\&
        +\frac{\tilde{\eta}_t}{\sqrt{\beta}}\Vert\sum\limits_{d=1}^N\varrho_{d}\frac{1}{s_{d}}\sum\limits_{j\in\mathcal S_{d}}\mathbf n_{j}^{(t)}\Vert
        +\tilde{\eta}_t\left(2\sum\limits_{d=1}^N\varrho_{d}\tilde{\epsilon}_{d}^{(t)}+\tilde{\delta}\right)\bigg]^2.
    \end{align}
Taking the square roots from the both hand sides and using Fact \ref{fact:1} (See Appendix \ref{app:fact1}) yields:
\begin{align} \label{eq:fact_x2}
      \sqrt{\beta\mathbb E[(\sum\limits_{c=1}^N\varrho_c\Vert\bar{\mathbf w}_c^{(t+1)}-\bar{\mathbf w}^{(t+1)}\Vert)^2]} 
      \leq& 
         (1+2\tilde{\eta}_t)
\sqrt{\beta\mathbb E[(\sum\limits_{c=1}^N\varrho_c\Vert\bar{\mathbf w}_c^{(t)}-\bar{\mathbf w}^{(t)}\Vert)^2]}
        +\tilde{\eta}_t\left(2\sum\limits_{d=1}^N\varrho_{d}\tilde{\epsilon}_{d}^{(t)}+\tilde{\delta}+2 \tilde{\sigma}\right).
    \end{align}
 
\textbf{(Part II) Finding the expression for $A^{(t)}$ from its connection with $\sqrt{\beta\mathbb E[(\sum\limits_{c=1}^N\varrho_c\Vert\bar{\mathbf w}_c^{(t)}-\bar{\mathbf w}^{(t)}\Vert)^2]}$}:
To bound the model dispersion across the clusters, we recursively expand of~\eqref{eq:fact_x2} and yield: 
\begin{align} \label{eq:yt}
    &\sqrt{\beta\mathbb E[(\sum\limits_{c=1}^N\varrho_c\Vert\bar{\mathbf w}_c^{(t)}-\bar{\mathbf w}^{(t)}\Vert)^2]} \leq
    (1+2\tilde{\eta}_{t-1})\sqrt{\beta\mathbb E[(\sum\limits_{c=1}^N\varrho_c\Vert\bar{\mathbf w}_c^{(t-1)}-\bar{\mathbf w}^{(t-1)}\Vert)^2]} 
        +\tilde{\eta}_{t-1}\left(2\sum\limits_{d=1}^N\varrho_{d}\tilde{\epsilon}_{d}^{(t-1)}+\tilde{\delta}+2\tilde{\sigma}\right)
    \nonumber \\&
    \leq 
    \sum_{\ell=t_{k-1}}^{t-1}\tilde{\eta}_\ell\prod_{j=\ell+1}^{t-1}(1+2\tilde{\eta}_j)\left(2\sum\limits_{d=1}^N\varrho_{d}\tilde{\epsilon}_{d}^{(0)}+\tilde{\delta}+2 \tilde{\sigma}\right)
    =\Sigma_t\left(2\sum\limits_{d=1}^N\varrho_{d}\tilde{\epsilon}_{d}^{(0)}+\tilde{\delta}+2 \tilde{\sigma}\right)
     =\Sigma_t\left(2\tilde{\epsilon}^{(0)}+\tilde{\delta}+2 \tilde{\sigma}\right)
    ,
    \end{align}
    where we define $
 \Sigma_{t}
 =\sum_{\ell=t_{k-1}}^{t-1}\tilde{\eta}_\ell\prod_{j=\ell+1}^{t-1}(1+2\tilde{\eta}_j).
    $ and $(a)$ comes from the fact that $\sum\limits_{d=1}^N\varrho_{d}\tilde{\epsilon}_{d}^{(0)}= \tilde{\epsilon}^{(0)}$.
Taking the square of the both hand sides, we get:
    \begin{align} \label{eq:Amid}
        &
 \beta\mathbb E\left[(\sum_{c=1}^N\varrho_c\Vert\bar{\mathbf w}_c^{(t)}-\bar{\mathbf w}^{(t)}\Vert)^2\right]
        \leq
[\Sigma_{t}]^2
\left[2\tilde{\epsilon}^{(0)}+\tilde{\delta}+2 \tilde{\sigma}\right]^2
\leq
12[\Sigma_{t}]^2
\left[\tilde{\sigma}^2+\tilde{\delta}^2+(\tilde{\epsilon}^{(0)})^2\right].
    \end{align}
    Applying the result from \eqref{eq:Amid}, we obtain the final result as  
    \begin{align} 
    \beta A^{(t)}
    &=\beta\mathbb E\left[\sum_{c=1}^N\varrho_c\Vert\bar{\mathbf w}_c^{(t)}-\bar{\mathbf w}^{(t)}\Vert^2\right]
    \leq
    \left(\varrho^{\mathsf{min}}\right)^{-1} \beta\mathbb E\left[(\sum_{c=1}^N\varrho_c\Vert\bar{\mathbf w}_c^{(t)}-\bar{\mathbf w}^{(t)}\Vert)^2\right]
    \nonumber \\&
    \leq
    12\left(\varrho^{\mathsf{min}}\right)^{-1}[\Sigma_{t}]^2
    \left[\tilde{\sigma}^2+\tilde{\delta}^2+(\tilde{\epsilon}^{(0)})^2\right]
    =
    12\left(\varrho^{\mathsf{min}}\right)^{-1}[\Sigma_{t}]^2
    \left[\frac{\sigma^2}{\beta}+\frac{\delta^2}{\beta}+\beta(\epsilon^{(0)})^2\right].
    \end{align}
    This concludes the proofs.
    
\end{proof}

\section{Proof of Theorem \ref{co1}} \label{app:thm1}

\begin{theorem} \label{co1}
        Under Assumptions \ref{beta},~\ref{assump:cons}, and~\ref{assump:SGD_noise}, upon using {\tt TT-HF} for ML model training, if $\eta_t \leq 1/\beta$, $\forall t$, the one-step behavior of $\hat{\mathbf w}^{(t)}$ can be described as follows:
\begin{align*} 
       \mathbb E\left[F(\hat{\mathbf w}^{(t+1)})-F(\mathbf w^*)\right]
        \leq&
        (1-\mu\eta_{t})\mathbb E[F(\hat{\mathbf w}^{(t)})-F(\mathbf w^*)]
        +\frac{\eta_{t}\beta^2}{2}A^{(t)}
        +\frac{1}{2}[\eta_{t}\beta^2(\epsilon^{(t)})^2+\eta_{t}^2\beta\sigma^2+\beta(\epsilon^{(t+1)})^2], ~t\in\mathcal{T}_k,
\end{align*}
where
\begin{align}
     A^{(t)}\triangleq\mathbb E\left[\sum\limits_{c=1}^N\varrho_{c}\Vert\bar{\mathbf w}_c^{(t)}-\bar{\mathbf w}^{(t)}\Vert_2^2\right].
\end{align}
\end{theorem}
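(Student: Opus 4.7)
The plan is to first analyze the one-step descent of the cluster-averaged global model $\bar{\mathbf w}^{(t)}$, and then transfer the bound to the sampled global model $\hat{\mathbf w}^{(t)}$ using the consensus-error bridge between the two. To start, I would invoke the $\beta$-smoothness of $F$ at $\bar{\mathbf w}^{(t+1)}$ with anchor $\bar{\mathbf w}^{(t)}$, then substitute the update rule \eqref{eq:w-T} into both the inner-product and the quadratic terms. Taking conditional expectation with respect to the SGD noise (conditioned on $\bar{\mathbf w}^{(t)}$ and all $\mathbf w_j^{(t)}$) kills the first-order noise terms via $\mathbb{E}[\mathbf n_j^{(t)}]=0$ and contributes at most $\tfrac{\eta_t^2\beta\sigma^2}{2}$ to the quadratic term by Assumption~\ref{assump:SGD_noise}. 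This yields inequality~\eqref{eq:aveGlo1}.

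Next, I would decompose the inner-product term $-\eta_t\nabla F(\bar{\mathbf w}^{(t)})^\top \sum_c \varrho_c \tfrac{1}{s_c}\sum_{j\in\mathcal S_c}\nabla F_j(\mathbf w_j^{(t)})$ by adding and subtracting $\nabla F(\bar{\mathbf w}^{(t)})$ inside the sum. Using $\beta$-smoothness of each $F_j$, the cross error becomes a sum of $\|\mathbf w_j^{(t)}-\bar{\mathbf w}^{(t)}\|$ terms, which split via the triangle inequality into cluster-to-global dispersion $\|\bar{\mathbf w}_c^{(t)}-\bar{\mathbf w}^{(t)}\|$ and intra-cluster consensus error $\|\mathbf e_j^{(t)}\|$. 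Squaring with the standard $2ab\le \eta_t a^2 + b^2/\eta_t$ trick and invoking Definition~\ref{paraDiv} lets us bound these by $A^{(t)}$ and $(\epsilon^{(t)})^2$ respectively. A parallel treatment of the quadratic $\|\cdot\|^2$ term (using $\eta_t\le 1/\beta$ to absorb one factor of $\eta_t\beta$) yields~\eqref{ld2}:
\begin{align*}
    \mathbb{E}[F(\bar{\mathbf w}^{(t+1)})-F(\mathbf w^*)] \leq (1-\mu\eta_t)\mathbb{E}[F(\bar{\mathbf w}^{(t)})-F(\mathbf w^*)] + \tfrac{\eta_t\beta^2}{2}A^{(t)} + \tfrac{1}{2}\bigl(\eta_t\beta^2(\epsilon^{(t)})^2 + \eta_t^2\beta\sigma^2\bigr),
\end{align*}
where strong convexity is applied in the form $\|\nabla F(\bar{\mathbf w}^{(t)})\|^2\ge 2\mu[F(\bar{\mathbf w}^{(t)})-F(\mathbf w^*)]$ to turn $-\eta_t\|\nabla F(\bar{\mathbf w}^{(t)})\|^2$ into the geometric factor $(1-\mu\eta_t)$.

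The final step is to pass from $\bar{\mathbf w}^{(t)}$ to $\hat{\mathbf w}^{(t)}=\sum_c\varrho_c\mathbf w_{n_c}^{(t)}$, where $n_c$ is the sampled representative of cluster $c$. Writing $\mathbf w_{n_c}^{(t)}=\bar{\mathbf w}_c^{(t)}+\mathbf e_{n_c}^{(t)}$ from \eqref{eq14}, we see that $\hat{\mathbf w}^{(t)}-\bar{\mathbf w}^{(t)} = \sum_c \varrho_c \mathbf e_{n_c}^{(t)}$, so Jensen plus Definition~\ref{paraDiv} gives $\mathbb{E}\|\hat{\mathbf w}^{(t)}-\bar{\mathbf w}^{(t)}\|^2 \leq (\epsilon^{(t)})^2$ (after averaging over the uniform cluster sampling). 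Applying $\beta$-smoothness of $F$ yields $F(\hat{\mathbf w}^{(t)})-F(\bar{\mathbf w}^{(t)})\le \tfrac{\beta}{2}\|\hat{\mathbf w}^{(t)}-\bar{\mathbf w}^{(t)}\|^2 + \nabla F(\bar{\mathbf w}^{(t)})^\top(\hat{\mathbf w}^{(t)}-\bar{\mathbf w}^{(t)})$, and a symmetric bound in the reverse direction; the linear term vanishes in expectation over the uniform sampling, leaving a $\tfrac{\beta}{2}(\epsilon^{(t+1)})^2$ contribution at time $t+1$ and $\tfrac{\beta}{2}(\epsilon^{(t)})^2$ at time $t$. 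Inserting these two bridge inequalities into~\eqref{ld2} and regrouping produces exactly~\eqref{eq:th1mainRes}.

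The main obstacle I anticipate is the bookkeeping in the second step: the cross-term $\nabla F(\bar{\mathbf w}^{(t)})^\top \sum_c\varrho_c\tfrac{1}{s_c}\sum_j\nabla F_j(\mathbf w_j^{(t)})$ must be split so that the negative $-\eta_t\|\nabla F(\bar{\mathbf w}^{(t)})\|^2$ survives intact while the remainder is cleanly absorbed into $A^{(t)}$ and $(\epsilon^{(t)})^2$ with coefficients matching the target $\tfrac{\eta_t\beta^2}{2}$ and $\tfrac{\eta_t\beta^2}{2}$; the choice of Young's inequality parameter needs the $\eta_t\le 1/\beta$ hypothesis to avoid losing factors. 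The bridging step is conceptually simple but requires careful use of the sampling expectation so that no stray $A^{(t)}$ term appears in the $(\epsilon^{(t+1)})^2$ coefficient.
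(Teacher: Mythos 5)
Your outline follows the paper's proof almost step for step: smoothness of $F$ at $\bar{\mathbf w}^{(t+1)}$ plus the averaged update dynamics and $\mathbb E[\mathbf n_j^{(t)}]=0$ to get \eqref{eq:aveGlo1}; strong convexity and per-device smoothness to convert the inner-product term into the $(1-\mu\eta_t)$ contraction plus dispersion/consensus penalties; and the two-sided bridge between $\bar{\mathbf w}$ and $\hat{\mathbf w}$ using the unbiasedness of uniform cluster sampling. Three execution details, however, differ from what the paper does and would not reproduce the stated constants. First, the paper does not handle the inner product by adding and subtracting $\nabla F(\bar{\mathbf w}^{(t)})$ and then applying Young's inequality; it uses the exact identity $-2\mathbf a^\top\mathbf b=-\Vert\mathbf a\Vert^2-\Vert\mathbf b\Vert^2+\Vert\mathbf a-\mathbf b\Vert^2$ (Lemma 2 in Appendix E), so that the full $-\frac{\eta_t}{2}\Vert\sum_c\varrho_c\frac{1}{s_c}\sum_j\nabla F_j(\mathbf w_j^{(t)})\Vert^2$ survives and exactly cancels the $\frac{\eta_t^2\beta}{2}\Vert\cdot\Vert^2$ smoothness term under $\eta_t\le 1/\beta$. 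Your Young's-inequality route either sacrifices half of $-\eta_t\Vert\nabla F\Vert^2$ (degrading the contraction to $1-\mu\eta_t/2$... unless you re-expand $\Vert\mathbf g\Vert^2$ and re-absorb, which reintroduces cross terms) or fails to cancel the quadratic term cleanly.

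Second, the split of $\Vert\mathbf w_j^{(t)}-\bar{\mathbf w}^{(t)}\Vert$ into $\Vert\bar{\mathbf w}_c^{(t)}-\bar{\mathbf w}^{(t)}\Vert+\Vert\mathbf e_j^{(t)}\Vert$ via the triangle inequality followed by squaring costs a factor of $2$ on both $A^{(t)}$ and $(\epsilon^{(t)})^2$. The paper instead expands $\Vert\mathbf w_j^{(t)}-\bar{\mathbf w}^{(t)}\Vert^2=\Vert\bar{\mathbf w}_c^{(t)}-\bar{\mathbf w}^{(t)}\Vert^2+\Vert\mathbf e_j^{(t)}\Vert^2+2[\bar{\mathbf w}_c^{(t)}-\bar{\mathbf w}^{(t)}]^\top\mathbf e_j^{(t)}$ and uses $\sum_{j\in\mathcal S_c}\mathbf e_j^{(t)}=0$ to kill the cross term after the intra-cluster average, yielding the exact coefficient $\frac{\eta_t\beta^2}{2}$. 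Third, the reverse bridge at time $t$ must come from \emph{strong convexity} anchored at $\bar{\mathbf w}^{(t)}$, which gives $\mathbb E[F(\hat{\mathbf w}^{(t)})]\ge\mathbb E[F(\bar{\mathbf w}^{(t)})]$ with the nonnegative quadratic simply dropped; a "symmetric" smoothness bound would instead produce $\mathbb E[F(\bar{\mathbf w}^{(t)})]\le\mathbb E[F(\hat{\mathbf w}^{(t)})]+\frac{\beta}{2}(\epsilon^{(t)})^2$ and leave a spurious $(1-\mu\eta_t)\frac{\beta}{2}(\epsilon^{(t)})^2$ term that is not in the theorem. With these three points corrected, your argument is the paper's proof.
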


\begin{proof}
Considering $t \in \mathcal T_k$, using \eqref{8}, \eqref{eq14}, the definition of $\bar{\mathbf{w}}$ given in Definition~\ref{modDisp}, and the fact that $\sum\limits_{i\in\mathcal{S}_c} \mathbf e_{i}^{{(t)}}=0$, $\forall t$, under Assumption~\ref{assump:cons},
the global average of the local models follows the following dynamics:
\begin{align}\label{eq:GlobDyn1}
    \bar{\mathbf w}^{(t+1)}=
    \bar{\mathbf w}^{(t)}
    -\frac{\tilde{\eta}_{t}}{\beta}\sum\limits_{c=1}^N \varrho_c\frac{1}{s_c}\sum\limits_{j\in\mathcal{S}_c} 
    \nabla F_j(\mathbf w_j^{(t)})
              -\frac{\tilde{\eta}_{t}}{\beta}\sum\limits_{c=1}^N \varrho_c\frac{1}{s_c}\sum\limits_{j\in\mathcal{S}_c} 
             \mathbf n_j^{{(t)}},
\end{align} where ${\mathbf n}_{j}^{{(t)}}=\widehat{\mathbf g}^{(t)}_{j}-\nabla F_j(\mathbf w_j^{(t)})$.
On the other hand, the $\beta$-smoothness of the global function $F$ implies
\begin{align} 
    F(\bar{\mathbf w}^{(t+1)}) \leq  F(\bar{\mathbf w}^{(t)})&+\nabla F(\bar{\mathbf w}^{(t)})^\top(\bar{\mathbf w}^{(t+1)}-\bar{\mathbf w}^{(t)})+\frac{\beta}{2}\Big\Vert \bar{\mathbf w}^{(t+1)}-\bar{\mathbf w}^{(t)}\Big\Vert^2.
\end{align}
Replacing the result of~\eqref{eq:GlobDyn1} in the above inequality, taking the conditional expectation (conditioned on the knowledge of $\bar{\mathbf w}^{(t)}$) of the both hand sides, and using the fact that $\mathbb E_t[{\mathbf n}^{(t)}_{j}]=\bf 0$ yields:
\begin{align}
    &\mathbb E_{t}\left[F(\bar{\mathbf w}^{(t+1)})-F(\mathbf w^*)\right]
    \leq F(\bar{\mathbf w}^{(t)})-F(\mathbf w^*)
    -\frac{\tilde{\eta}_{t}}{\beta}\nabla F(\bar{\mathbf w}^{(t)})^\top \sum\limits_{c=1}^N\varrho_c\frac{1}{s_c}\sum\limits_{j\in\mathcal{S}_c}\nabla F_j(\mathbf w_j^{(t)})
    \nonumber \\&
    +\frac{\tilde{\eta}_{t}^2}{2\beta}\Big
    \Vert\sum\limits_{c=1}^N\varrho_c\frac{1}{s_c}\sum\limits_{j\in\mathcal{S}_c} \nabla F_j(\mathbf w_j^{(t)})
    \Big\Vert^2
    + \frac{\tilde{\eta}_{t}^2}{2\beta}\mathbb E_t\left[
    \Big
    \Vert\sum\limits_{c=1}^N\varrho_c\frac{1}{s_c}\sum\limits_{j\in\mathcal{S}_c}\mathbf n_j^{{(t)}}\Big\Vert^2
    \right].
\end{align}
Since $\mathbb E_t[\Vert\mathbf n_i^{{(t)}}\Vert_2^2]\leq \beta\tilde{\sigma}^2$, $\forall i$, we get
\begin{align}\label{eq:aveGlob1}
    \mathbb E_{t}\left[F(\bar{\mathbf w}^{(t+1)})-F(\mathbf w^*)\right]
        \leq& F(\bar{\mathbf w}^{(t)})-F(\mathbf w^*)
        \nonumber \\&
        -\frac{\tilde{\eta}_{t}}{\beta}\nabla F(\bar{\mathbf w}^{(t)})^\top \sum\limits_{c=1}^N\varrho_c\frac{1}{s_c}\sum\limits_{j\in\mathcal{S}_c}\nabla F_j(\mathbf w_j^{(t)})
        +\frac{\tilde{\eta}_{t}^2}{2\beta}\Big
        \Vert\sum\limits_{c=1}^N\varrho_c\frac{1}{s_c}\sum\limits_{j\in\mathcal{S}_c} \nabla F_j(\mathbf w_j^{(t)})
        \Big\Vert^2
        +\frac{\tilde{\eta}_{t}^2\tilde{\sigma}^2}{2}.
\end{align}
Using Lemma \ref{lem1} (see Appendix \ref{app:PL-bound}), we further bound~\eqref{eq:aveGlob1} as follows:
\begin{align} \label{eq:E_t}
    &\mathbb E_{t}\left[F(\bar{\mathbf w}^{(t+1)})-F(\mathbf w^*)\right]
    \leq
        (1-\tilde{\mu}\tilde{\eta}_{t})(F(\bar{\mathbf w}^{(t)})-F(\mathbf w^*))
        \nonumber \\& 
       -\frac{\tilde{\eta}_{t}}{2\beta}(1-\tilde{\eta}_{t})\Big\Vert\sum\limits_{c=1}^N\varrho_c\frac{1}{s_c}\sum\limits_{j\in\mathcal{S}_c} \nabla F_j(\mathbf w_j^{(t)})\Big\Vert^2
       +\frac{\tilde{\eta}_{t}^2\tilde{\sigma}^2}{2}
        +\frac{\tilde{\eta}_{t}\beta}{2}\sum\limits_{c=1}^N\varrho_c \frac{1}{s_c}\sum\limits_{j\in\mathcal{S}_c}\Big\Vert\bar{\mathbf w}^{(t)}-\mathbf w_j^{(t)}\Big\Vert^2
    \nonumber \\&
    \leq
        (1-\tilde{\mu}\tilde{\eta}_{t})(F(\bar{\mathbf w}^{(t)})-F(\mathbf w^*))
        +\frac{\tilde{\eta}_{t}\beta}{2}\sum\limits_{c=1}^N\varrho_c \frac{1}{s_c}\sum\limits_{j\in\mathcal{S}_c}\Big\Vert\bar{\mathbf w}^{(t)}-\mathbf w_j^{(t)}\Big\Vert^2+\frac{\tilde{\eta}_{t}^2\tilde{\sigma}^2}{2},
\end{align}
where the last step follows from $\tilde{\eta}_t\leq 1$. 
To further bound the terms on the right hand side of~\eqref{eq:E_t}, we use the fact that
\begin{align}
        \Vert\mathbf w_i^{(t)}-\bar{\mathbf w}^{(t)}\Vert^2
        =
        \Vert\bar{\mathbf w}_c^{(t)}-\bar{\mathbf w}^{(t)}\Vert^2
        +\Vert \mathbf e_i^{{(t)}}\Vert^2
        +2[\bar{\mathbf w}_c^{(t)}-\bar{\mathbf w}^{(t)}]^\top \mathbf e_i^{{(t)}},
    \end{align} 
    which results in
    \begin{align} \label{eq:wi_w}
        \frac{1}{s_c}\sum\limits_{i\in \mathcal S_c}\Vert\mathbf w_i^{(t)}-\bar{\mathbf w}^{(t)}\Vert_2^2
        \leq
        \Vert\bar{\mathbf w}_c^{(t)}-\bar{\mathbf w}^{(t)}\Vert_2^2
        +\frac{(\tilde{\epsilon}_{c}^{(t)})^2}{\beta}.
    \end{align} 
    Replacing \eqref{eq:wi_w} in \eqref{eq:E_t}  and taking the unconditional expectation from the both hand sides of the resulting expression gives us
    \begin{align} \label{eq:ld}
        &\mathbb E\left[F(\bar{\mathbf w}^{(t+1)})-F(\mathbf w^*)\right]
        \leq
        (1-\tilde{\mu}\tilde{\eta}_{t})\mathbb E[F(\bar{\mathbf w}^{(t)})-F(\mathbf w^*)]
        \nonumber \\&
        +\frac{\tilde{\eta}_{t}\beta}{2}\sum\limits_{c=1}^N\varrho_c \left(\Vert\bar{\mathbf w}_c^{(t)}-\bar{\mathbf w}^{(t)}\Vert_2^2
        +\frac{(\tilde{\epsilon}_{c}^{(t)})^2}{\beta}\right)+\frac{\tilde{\eta}_{t}^2\tilde{\sigma}^2}{2}
        \nonumber \\&
        =
        (1-\tilde{\mu}\tilde{\eta}_{t})\mathbb E[F(\bar{\mathbf w}^{(t)})-F(\mathbf w^*)]
        +\frac{\tilde{\eta}_{t}\beta}{2}A^{(t)}
        +\frac{1}{2}[\tilde{\eta}_{t}(\tilde{\epsilon}^{(t)})^2+\tilde{\eta}_{t}^2\tilde{\sigma}^2],
    \end{align}
where  
\begin{align}
    A^{(t)}\triangleq\mathbb E\left[\sum\limits_{c=1}^N\varrho_c \Vert\bar{\mathbf w}_c^{(t)}-\bar{\mathbf w}^{(t)}\Vert^2\right].
\end{align}

By $\beta$-smoothness of $F(\cdot)$, we have
\begin{align} \label{eq:39}
    &F(\hat{\mathbf w}^{(t)})-F(\mathbf w^*) 
    \leq F(\bar{\mathbf w}^{(t)})-F(\mathbf w^*)
    +\nabla F(\bar{\mathbf w}^{(t)})^\top\Big(\hat{\mathbf w}^{(t)}-\bar{\mathbf w}^{(t)}\Big)+\frac{\beta}{2}\Vert\hat{\mathbf w}^{(t)}-\bar{\mathbf w}^{(t)}\Vert^2
    \nonumber \\&
    \leq F(\bar{\mathbf w}^{(t)})-F(\mathbf w^*)+\nabla F(\bar{\mathbf w}^{(t)})^\top\sum\limits_{c=1}^N\varrho_c \mathbf e_{s_c}^{{(t)}}
    +\frac{\beta}{2}\sum\limits_{c=1}^N\varrho_c\Vert \mathbf e_{s_c}^{{(t)}}\Vert^2.
\end{align}
Taking the expectation with respect to the device sampling from both hand sides of \eqref{eq:39}, since the sampling is conducted uniformly at random, we obtain
\begin{align} 
    \mathbb E_t\left[F(\hat{\mathbf w}^{(t)})-F(\mathbf w^*)\right]  
    \leq& F(\bar{\mathbf w}^{(t)})-F(\mathbf w^*)+\nabla F(\bar{\mathbf w}^{(t)})^\top\sum\limits_{c=1}^N\varrho_c \underbrace{\mathbb    E_t\left[\mathbf e_{s_c}^{{(t)}}\right]}_{=0}
    +\frac{\beta}{2}\sum\limits_{c=1}^N\varrho_c\mathbb E_t\left[\Vert \mathbf e_{s_c}^{{(t)}}\Vert^2\right].
\end{align}
Taking the total expectation from both hand sides of the above inequality yields:
\begin{align} \label{eq:hat-bar}
    &\mathbb E\left[F(\hat{\mathbf w}^{(t)})-F(\mathbf w^*)\right]  
    \leq \mathbb E\left[F(\bar{\mathbf w}^{(t)})-F(\mathbf w^*)\right]
    +\frac{(\tilde{\epsilon}^{(t)})^2}{2}.
\end{align}
Replace \eqref{eq:ld} into \eqref{eq:hat-bar}, we have
\begin{align} \label{eq:-}
    \mathbb E\left[F(\hat{\mathbf w}^{(t+1)})-F(\mathbf w^*)\right]
        \leq&
        (1-\tilde{\mu}\tilde{\eta}_{t})\mathbb E[F(\bar{\mathbf w}^{(t)})-F(\mathbf w^*)]+\frac{\tilde{\eta}_{t}\beta}{2}A^{(t)}
        \nonumber \\&
        +\frac{1}{2}[\tilde{\eta}_{t}(\tilde{\epsilon}^{(t)})^2+\tilde{\eta}_{t}^2\tilde{\sigma}^2+(\tilde{\epsilon}^{(t+1)})^2].
\end{align}
On the other hands, using the strong convexity of $F(\cdot)$, we have
\begin{align} \label{eq:^}
    &F(\hat{\mathbf w}^{(t)})-F(\mathbf w^*) 
    \geq F(\bar{\mathbf w}^{(t)})-F(\mathbf w^*)
    +\nabla F(\bar{\mathbf w}^{(t)})^\top\Big(\hat{\mathbf w}^{(t)}-\bar{\mathbf w}^{(t)}\Big)+\frac{\mu}{2}\Vert\hat{\mathbf w}^{(t)}-\bar{\mathbf w}^{(t)}\Vert^2
    \nonumber \\&
    \geq F(\bar{\mathbf w}^{(t)})-F(\mathbf w^*)+\nabla F(\bar{\mathbf w}^{(t)})^\top\sum\limits_{c=1}^N\varrho_c \mathbf e_{s_c}^{{(t)}}.
\end{align}
Taking the expectation with respect to the device sampling from the both hand sides of \eqref{eq:^}, since the sampling is conducted uniformly at random, we obtain
\begin{align} 
    &\mathbb E_t\left[F(\hat{\mathbf w}^{(t)})-F(\mathbf w^*)\right]  
    \geq 
    F(\bar{\mathbf w}^{(t)})-F(\mathbf w^*)
    +\nabla F(\bar{\mathbf w}^{(t)})^\top\sum\limits_{c=1}^N\varrho_c \underbrace{\mathbb    E_t\left[\mathbf e_{s_c}^{{(t)}}\right]}_{=0}.
\end{align}
Taking the total expectation from both hand sides of the above inequality yields:
\begin{align} \label{eq:bar-hat}
    &\mathbb E\left[F(\hat{\mathbf w}^{(t)})-F(\mathbf w^*)\right]  
    \geq \mathbb E\left[F(\bar{\mathbf w}^{(t)})-F(\mathbf w^*)\right].
\end{align}
Finally, replacing \eqref{eq:bar-hat} into \eqref{eq:-}, we obtain
\begin{align}
    &\mathbb E\left[F(\hat{\mathbf w}^{(t+1)})-F(\mathbf w^*)\right]
        \leq
        (1-\tilde{\mu}\tilde{\eta}_{t})\mathbb E[F(\hat{\mathbf w}^{(t)})-F(\mathbf w^*)]
        +\frac{\tilde{\eta}_{t}\beta}{2}A^{(t)}
        +\frac{1}{2}[\tilde{\eta}_{t}(\tilde{\epsilon}^{(t)})^2+\tilde{\eta}_{t}^2\tilde{\sigma}^2+(\tilde{\epsilon}^{(t+1)})^2]
        \nonumber
        \nonumber \\&
        =
        (1-\mu\eta_{t})\mathbb E[F(\hat{\mathbf w}^{(t)})-F(\mathbf w^*)]
        +\frac{\eta_{t}\beta^2}{2}A^{(t)}
        +\frac{1}{2}[\eta_{t}\beta^2(\epsilon^{(t)})^2+\eta_{t}^2\beta\sigma^2+\beta(\epsilon^{(t+1)})^2].
\end{align}
This concludes the proof.

\end{proof}

\section{Proof of Theorem \ref{thm:subLin}} \label{app:subLin}
\begin{theorem} \label{thm:subLin}
Define $Z\triangleq
    \frac{1}{2}[\frac{\sigma^2}{\beta}+\frac{2\phi^2}{\beta}]
    +24\left(\varrho^{\mathsf{min}}\right)^{-1}\beta\gamma(\tau-1)\left(1+\frac{\tau-2}{\alpha}\right)
    \left(1+\frac{\tau-1}{\alpha-1}\right)^{4\beta\gamma}\left[\frac{\sigma^2}{\beta}+\frac{\phi^2}{\beta}+\frac{\delta^2}{\beta}\right]$. Also, assume $\gamma>1/\mu$ and $\alpha\geq\beta^2\gamma/\mu$. Upon using {\tt TT-HF} for ML model training under Assumptions \ref{beta},~\ref{assump:cons}, and~\ref{assump:SGD_noise}, if $\eta_t=\frac{\gamma}{t+\alpha}$ and $\epsilon^{(t)}=\eta_t\phi$, $\forall t$, we have: 
    \begin{align} \label{eq:thm2_result-1-A}
        &\mathbb E\left[(F(\hat{\mathbf w}^{(t)})-F(\mathbf w^*))\right]\leq\frac{\nu}{t+\alpha}, ~~\forall t,
    \end{align}
    where $\nu \triangleq \max \left\{\frac{\beta^2\gamma^2Z}{\mu\gamma-1}, \alpha\left[F(\hat{\mathbf w}^{(0)})-F(\mathbf w^*)\right]\right\}$.
\end{theorem}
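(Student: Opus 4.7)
The proof will proceed by induction on the time index $t$. The base case $t = 0$ is immediate from the choice $\nu \geq \alpha\bigl[F(\hat{\mathbf w}^{(0)}) - F(\mathbf w^*)\bigr]$. Because of the block structure of {\tt TT-HF} — a sequence of local training intervals $\mathcal T_k$ separated by global aggregations — I would organize the argument as a nested induction: an outer induction over the global epoch index $k$ establishing $\mathbb E[F(\hat{\mathbf w}^{(t_k)}) - F(\mathbf w^*)] \leq \nu/(t_k+\alpha)$, and an inner induction over $t \in \{t_{k-1}, \ldots, t_k-1\}$ that propagates the bound within one epoch. The inner step is where the real work lies; the outer step is a clean hand-off since $\hat{\mathbf w}^{(t_k)}$ agrees with the sample-averaged model at the aggregation instant.

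For the inductive step inside $\mathcal T_k$, I would plug the dispersion bound on $A^{(t)}$ from Proposition~\ref{Local_disperseT} together with $\epsilon^{(t)} = \eta_t\phi$ into the one-step descent inequality from Theorem~\ref{co1}. After using $\eta_{t+1} \leq \eta_t \leq \mu/\beta^2 \leq 1/\beta$ and $\epsilon^{(0)} \leq \phi/\beta$, this yields a recursion of the form
\begin{equation*}
\mathbb E[F(\hat{\mathbf w}^{(t+1)}) - F(\mathbf w^*)] \leq (1-\mu\eta_t)\tfrac{\nu}{t+\alpha} + \tfrac{\eta_t^2\beta}{2}(\sigma^2+2\phi^2) + 6(\varrho^{\mathsf{min}})^{-1}\eta_t(\Sigma_t)^2(\sigma^2+\phi^2+\delta^2),
\end{equation*}
so the goal becomes to bound $(\Sigma_t)^2$ by a multiple of $\eta_t\beta$ with a constant that depends only on $\tau, \alpha, \gamma, \beta$.

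The main obstacle is controlling $(\Sigma_t)^2 = \bigl(\sum_{\ell=t_{k-1}}^{t-1}\beta\eta_\ell \prod_{j=\ell+1}^{t-1}(1+2\eta_j\beta)\bigr)^2$, which contains both a product over an interval of length up to $\tau-1$ and a sum of the decaying step sizes. My plan is to split $\Sigma_t \leq \gamma\beta \cdot \bigl(\prod_j (1+\tfrac{2\gamma\beta}{j+\alpha})\bigr) \cdot \bigl(\sum_\ell \tfrac{1}{\ell+\alpha+\gamma\beta}\bigr)$, then bound the product by taking logs and using $\ln(1+x) \leq x$ to get $\bigl(1+\tfrac{t-t_{k-1}}{t_{k-1}-1+\alpha}\bigr)^{2\gamma\beta} \leq \bigl(1+\tfrac{\tau-1}{\alpha-1}\bigr)^{2\gamma\beta}$, and bound the sum by $\ln\bigl(1+\tfrac{t-t_{k-1}}{t_{k-1}-1+\alpha+\gamma\beta}\bigr)$. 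The clever step is then to apply $\ln(1+x) \leq 2\sqrt{x}$ to the sum, so that after squaring, one factor of the sum gets absorbed into an $\eta_t$, yielding $(\Sigma_t)^2 \leq 4\gamma\beta(\tau-1)\bigl(1+\tfrac{\tau-2}{\alpha}\bigr)\bigl(1+\tfrac{\tau-1}{\alpha-1}\bigr)^{4\gamma\beta}\eta_t\beta$. Tracking the $\tau$-dependence carefully through this chain is the delicate part, since the $2\gamma\beta$ exponent doubles to $4\gamma\beta$ upon squaring.

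Substituting this back produces $\mathbb E[F(\hat{\mathbf w}^{(t+1)}) - F(\mathbf w^*)] \leq (1-\mu\eta_t)\tfrac{\nu}{t+\alpha} + \eta_t^2\beta^2 Z$ with $Z$ as defined in the statement. To close the induction I need to show this is at most $\nu/(t+1+\alpha)$. Multiplying through by $(t+\alpha)^2(t+1+\alpha)/\gamma$ and rearranging yields a polynomial inequality in $t$; by convexity it suffices to verify it in the two limits $t \to \infty$ and $t = 0$. The $t \to \infty$ limit gives the condition $\mu\gamma - 1 > 0$, and the $t = 0$ limit gives $\nu \geq Z\beta^2\gamma^2/(\mu\gamma-1)$, both of which are guaranteed by the hypotheses and the chosen $\nu$. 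Combining with the trivial base case completes the inner induction, and since the statement at $t_k$ serves directly as the hypothesis for the next epoch's induction, the outer induction closes and the bound holds for all $t$.
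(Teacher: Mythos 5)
Your proposal is correct and follows essentially the same route as the paper's proof: the same nested induction over global aggregations and local iterations, the same substitution of Proposition~\ref{Local_disperseT} and $\epsilon^{(t)}=\eta_t\phi$ into Theorem~\ref{co1}, the same splitting of $\Sigma_t$ into a product and a sum with the $\ln(1+x)\leq 2\sqrt{x}$ absorption of one logarithmic factor into $\eta_t$, and the same closing argument via convexity in $t$ checked at $t=0$ and $t\to\infty$. No gaps.
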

\begin{proof}

We carry out the proof by induction. We start by considering the first global aggregation, i.e., $k=1$. Note that the condition in~\eqref{eq:thm2_result-1-A} trivially holds at the beginning of this global aggregation $t=t_0=0$ since $
\nu\geq\alpha\left[F(\hat{\mathbf w}^{(0)})-F(\mathbf w^*)\right]$.
    Now, assume that
    \begin{align} 
        &\mathbb E\left[F(\hat{\mathbf w}^{(t_{k-1})})-F(\mathbf w^*)\right]  
        \leq \frac{\nu}{t_{k-1}+\alpha}
    \end{align}
for some $k\geq 1$. We prove that this implies 
\begin{align} \label{eq:thm2_result-1}
        &\mathbb E\left[F(\hat{\mathbf w}^{(t)})-F(\mathbf w^*)\right]  
        \leq \frac{\nu}{t+\alpha},\ \forall t\in \mathcal{T}_k,
    \end{align}
    and as a result $\mathbb E\left[F(\hat{\mathbf w}^{(t_{k})})-F(\mathbf w^*)\right]  
        \leq \frac{\nu}{t_{k}+\alpha}$.
    To prove~\eqref{eq:thm2_result-1}, we use induction over $t\in \{t_{k-1}+1,\dots,t_k\}$.
    Clearly, the condition holds for $t=t_{k-1}$ from the induction hypothesis.
    Now, we assume that it also holds for some $t\in \{t_{k-1},\dots,t_k-1\}$, and aim to show that it holds at $t+1$.
    
    From the result of Theorem~\ref{co1}, considering $\tilde{\epsilon}^{(t)}=\tilde{\eta}_t\tilde{\phi}$, we get
    \begin{align} \label{eq:thm1_temp}
        &\mathbb E\left[F(\hat{\mathbf w}^{(t+1)})-F(\mathbf w^*)\right]
        \leq
        (1-\tilde{\mu}\tilde{\eta}_{t})\frac{\nu}{t+\alpha}
        +\frac{\tilde{\eta}_{t}\beta}{2}A^{(t)}
        +\frac{1}{2}[\tilde{\eta}_{t}^3\tilde{\phi}^2+\tilde{\eta}_{t}^2\tilde{\sigma}^2+\tilde{\eta}_{t+1}^2\tilde{\phi}^2].
    \end{align}
Using the induction hypothesis and the bound on $A^{(t)}$, we can further upper bound~\eqref{eq:thm1_temp} as
\begin{align} \label{eq:induction_main}
    &\mathbb E\left[F(\hat{\mathbf w}^{(t+1)})-F(\mathbf w^*)\right]
    \leq
    \left(1-\tilde{\mu}\tilde{\eta}_{t}\right)\frac{\nu}{t+\alpha}
    +6\left(\varrho^{\mathsf{min}}\right)^{-1}\tilde{\eta}_t[\Sigma_{t}]^2
    \left[\tilde{\sigma}^2+(\tilde{\epsilon}^{(0)})^2+\tilde{\delta}^2\right]   
    +\frac{1}{2}[\tilde{\eta}_{t}^3\tilde{\phi}^2+\tilde{\eta}_{t}^2\tilde{\sigma}^2+\tilde{\eta}_{t+1}^2\tilde{\phi}^2].
\end{align}
Since $\tilde{\eta}_{t+1}\leq \tilde{\eta}_t$, $\tilde{\eta}_t\leq \tilde{\eta}_0\leq\tilde{\mu}\leq 1$ and $\tilde{\epsilon}^{(0)}=\tilde{\eta}_0\tilde{\phi}\leq\tilde{\phi}$, we further upper bound~\eqref{eq:induction_main} as
\begin{align} \label{eq:thm1_Sig}
    &\mathbb E\left[F(\hat{\mathbf w}^{(t+1)})-F(\mathbf w^*)\right]
    \leq
    \left(1-\tilde{\mu}\tilde{\eta}_{t}\right)\frac{\nu}{t+\alpha}
    +6\left(\varrho^{\mathsf{min}}\right)^{-1}\tilde{\eta}_t\underbrace{[\Sigma_{t}]^2}_{(a)}
\left[\tilde{\sigma}^2+\tilde{\phi}^2+\tilde{\delta}^2\right]    
    + \frac{\tilde{\eta}_t^2}{2}[\tilde{\sigma}^2+2\tilde{\phi}^2].
\end{align}
To bound the instance of $[{\Sigma}_{t}]^2$, i.e., $(a)$ in~\eqref{eq:thm1_Sig}, we first use the fact that
\begin{align}
    \Sigma_{t}
  =&\sum_{\ell=t_{k-1}}^{t-1}
  \eta_\ell
  \left[\prod_{j=\ell+1}^{t-1}(1+2\tilde{\eta}_j)\right]
  \leq
  \left[\prod_{j=t_{k-1}}^{t-1}(1+2\tilde{\eta}_j)\right]
\sum_{\ell=t_{k-1}}^{t-1}\frac{\tilde{\eta}_\ell}{1+\tilde{\eta}_\ell}.
\end{align}
Also, with the choice of step size $\tilde{\eta}_\ell=\frac{\tilde{\gamma}}{\ell+\alpha}$, we get
\begin{align} \label{eq:Sigma_1}
  \Sigma_{t}
  \leq
  \tilde{\gamma}\underbrace{\left[\prod_{j=t_{k-1}}^{t-1}\left(1+\frac{2\tilde{\gamma}}{j+\alpha}\right)\right]}_{(i)}
  \underbrace{\sum_{\ell=t_{k-1}}^{t-1}\frac{1}{\ell+\alpha+\tilde{\gamma}}}_{(ii)}.
\end{align}
To bound $(ii)$, since $\frac{1}{\ell+\alpha+\tilde{\gamma}}$ is a decreasing function with respect to $\ell$, we have
\begin{align}
    \sum_{\ell=t_{k-1}}^{t-1}\frac{1}{\ell+\alpha+\tilde{\gamma}}
    \leq
    \int_{t_{k-1}-1}^{t-1}\frac{1}{\ell+\alpha+\tilde{\gamma}}\mathrm d\ell
    =
    \ln\left(1+\frac{t-t_{k-1}}{t_{k-1}-1+\alpha+\tilde{\gamma}}\right),
\end{align}
where we used the fact that $\alpha>1-\tilde{\gamma}$ (implied by $\alpha>1$).

To bound $(i)$, we first rewrite it as follows: 
    \begin{align} \label{eq:(i)}
        \prod_{j=t_{k-1}}^{t-1}\left(1+\frac{2\tilde{\gamma}}{j+\alpha}\right)
        =
        e^{\sum_{j=t_{k-1}}^{t-1}\ln\big(1+\frac{2\tilde{\gamma}}{j+\alpha}\big)}
    \end{align}
To bound~\eqref{eq:(i)}, we use the fact that $\ln(1+\frac{2\tilde{\gamma}}{j+\alpha})$ is a decreasing function with respect to $j$, and $\alpha >1$, to get
\begin{align} \label{eq:ln(i)}
    &\sum_{j=t_{k-1}}^{t-1}\ln(1+\frac{2\tilde{\gamma}}{j+\alpha})
    \leq
    \int_{t_{k-1}-1}^{t-1}\ln(1+\frac{2\tilde{\gamma}}{j+\alpha})\mathrm dj
    \nonumber \\&
    \leq
    2\tilde{\gamma}\int_{t_{k-1}-1}^{t-1}\frac{1}{j+\alpha}\mathrm dj
    =
    2\tilde{\gamma} \ln\left(1+\frac{t-t_{k-1}}{t_{k-1}-1+\alpha}\right).
\end{align}
Considering ~\eqref{eq:(i)} and~\eqref{eq:ln(i)} together, we  bound $(i)$ as follows:
    \begin{align}
        \prod_{j=t_{k-1}}^{t-1}\left(1+\frac{2\tilde{\gamma}}{j+\alpha}\right)
        \leq
        \left(1+\frac{t-t_{k-1}}{t_{k-1}-1+\alpha}\right)^{2\tilde{\gamma}}.
    \end{align}

    Using the results obtained for bounding $(i)$ and $(ii)$ back in~\eqref{eq:Sigma_1}, we get:
    \begin{align} \label{eq:Sigma1_bound}
        \Sigma_{t}
        \leq
        \tilde{\gamma}\ln\left(1+\frac{t-t_{k-1}}{t_{k-1}-1+\alpha+\tilde{\gamma}}\right)
        \left(1+\frac{t-t_{k-1}}{t_{k-1}-1+\alpha}\right)^{2\tilde{\gamma}}.
    \end{align}
    Since $\ln(1+x)\leq\ln(1+x+2\sqrt{x})=\ln((1+\sqrt{x})^2)=2\ln(1+\sqrt{x})\leq 2\sqrt{x}$ for $x\geq 0$,
    we can further bound~\eqref{eq:Sigma1_bound} as follows:
    \begin{align} \label{eq:Sig}
        &\Sigma_{t}
        \leq
        2\tilde{\gamma}\sqrt{\frac{t-t_{k-1}}{t_{k-1}-1+\alpha+2\tilde{\gamma}}}
        \left(1+\frac{t-t_{k-1}}{t_{k-1}+\alpha-1}\right)^{2\tilde{\gamma}}
        \nonumber \\&
        \leq
        2\tilde{\gamma}\sqrt{\frac{t-t_{k-1}}{t_{k-1}+\alpha}}
        \left(1+\frac{t-t_{k-1}}{t_{k-1}+\alpha-1}\right)^{2\tilde{\gamma}},
    \end{align}
    where in the last inequality we used $\tilde{\gamma}\geq \frac{\tilde{\mu}}{\beta}\tilde{\gamma} >1$.

Taking the square from the both hand sides of~\eqref{eq:Sig} followed by multiplying the both hand sides with $[t+\alpha]$ gives us:
\begin{align}
    [t+\alpha] [\Sigma_{t}]^2
    &\leq
    4\tilde{\gamma}^2\frac{[t-t_{k-1}][t+\alpha]}{t_{k-1}+\alpha}
    \left(1+\frac{t-t_{k-1}}{t_{k-1}+\alpha-1}\right)^{4\tilde{\gamma}}
    \nonumber \\&
    \leq
    4\tilde{\gamma}^2(\tau-1)\left(1+\frac{\tau-2}{t_{k-1}+\alpha}\right)
    \left(1+\frac{\tau-1}{t_{k-1}+\alpha-1}\right)^{4\tilde{\gamma}}
    \nonumber \\&
    \leq
    4\tilde{\gamma}^2(\tau-1)\left(1+\frac{\tau-2}{\alpha}\right)
    \left(1+\frac{\tau-1}{\alpha-1}\right)^{4\tilde{\gamma}},
\end{align}
which implies
\begin{align} \label{eq:Sigma_2nd}
    [\Sigma_{t}]^2
    \leq
    4\tilde{\gamma}(\tau-1)\left(1+\frac{\tau-2}{\alpha}\right)
    \left(1+\frac{\tau-1}{\alpha-1}\right)^{4\tilde{\gamma}}\tilde{\eta}_t.
\end{align}
Replacing \eqref{eq:Sigma_2nd} into \eqref{eq:thm1_Sig}, we get
\begin{align} \label{eq:induce_form}
    \mathbb E[F(\hat{\mathbf w}^{(t+1)})-F(\mathbf w^*)]
    \leq
    \left(1-\tilde{\mu}\tilde{\eta}_t\right)\frac{\nu}{t+\alpha}
    +\tilde{\eta}_t^2Z,
\end{align}
where we have defined
\begin{align}
    Z\triangleq
    \frac{1}{2}[\tilde{\sigma}^2+2\tilde{\phi}^2]
    +24\left(\varrho^{\mathsf{min}}\right)^{-1}\tilde{\gamma}(\tau-1)\left(1+\frac{\tau-2}{\alpha}\right)
    \left(1+\frac{\tau-1}{\alpha-1}\right)^{4\tilde{\gamma}}\left[\tilde{\sigma}^2+\tilde{\phi}^2+\tilde{\delta}^2\right].  
\end{align}

Now, from~\eqref{eq:induce_form}, to complete the induction, we aim to show that 
\begin{align}\label{eq:lastBeforeInd}
    \mathbb E[F(\hat{\mathbf w}^{(t+1)})-F(\mathbf w^*)]
    \leq
    \left(1-\tilde{\mu}\tilde{\eta}_t\right)\frac{\nu}{t+\alpha}
    +\tilde{\eta}_t^2Z
    \leq
    \frac{\nu}{t+1+\alpha}.
\end{align}
We transform the condition in~\eqref{eq:lastBeforeInd} through the set of following algebraic steps to an inequality condition on a convex function:
\begin{align}\label{eq:longtrasform}
    &
     \left(-\frac{\tilde{\mu}}{\tilde{\eta}_t^2} \right)\frac{\nu}{t+\alpha}
    +\frac{Z}{\tilde{\eta_t}}
    +\frac{\nu}{t+\alpha}\frac{1}{\tilde{\eta_t}^3}
    \leq\frac{\nu}{t+1+\alpha}\frac{1}{\tilde{\eta_t}^3}
    \nonumber \\&
    \Rightarrow
     \left(-\frac{\tilde{\mu}}{\tilde{\eta}_t} \right)\frac{\nu}{t+\alpha}\frac{1}{\tilde{\eta}_t}
    +\frac{Z}{\tilde{\eta_t}}
    +\left(\frac{\nu}{t+\alpha}-\frac{\nu}{t+1+\alpha}\right)\frac{1}{\tilde{\eta_t}^3}
    \leq0
    \nonumber \\&
    \Rightarrow
     \left(-\frac{\tilde{\mu}}{\tilde{\eta}_t} \right)\frac{\nu}{\tilde{\gamma}}
    +\frac{Z}{\tilde{\eta_t}}
    +\left(\frac{\nu}{t+\alpha}-\frac{\nu}{t+1+\alpha}\right)\frac{(t+\alpha)^3}{\tilde{\gamma}^3}
    \leq0
    \nonumber \\&
    \Rightarrow
     \left(-\frac{\tilde{\mu}}{\tilde{\eta}_t} \right)\frac{\nu}{\tilde{\gamma}}
    +\frac{Z}{\tilde{\eta_t}}
    +\frac{\nu}{(t+\alpha)(t+\alpha+1)}\frac{(t+\alpha)^3}{\tilde{\gamma}^3}
    \leq0
    \nonumber \\&
    \Rightarrow
     \left(-\frac{\tilde{\mu}}{\tilde{\eta}_t} \right)\frac{\nu}{\tilde{\gamma}}
    +\frac{Z}{\tilde{\eta_t}}
    +\frac{\nu}{t+\alpha+1}\frac{(t+\alpha)^2}{\tilde{\gamma}^3}
    \leq0
    \nonumber \\&
    \Rightarrow
      \tilde{\gamma}^2\left(-\frac{\tilde{\mu}}{\tilde{\eta}_t} \right)\nu
    +\frac{Z}{\tilde{\eta_t}} \tilde{\gamma}^3
    +\frac{(t+\alpha)^2}{t+\alpha+1}\nu
    \leq0
    \nonumber \\&
    \Rightarrow
      \tilde{\gamma}^2\left(-\frac{\tilde{\mu}}{\tilde{\eta}_t} \right)\nu
    +\frac{Z}{\tilde{\eta_t}} \tilde{\gamma}^3
    +\left(\frac{(t+\alpha+1)(t+\alpha-1)}{t+\alpha+1}\nu+\frac{\nu}{t+\alpha+1}\right)
    \leq 0,
\end{align}
where the last condition in~\eqref{eq:longtrasform} can be written as:
\begin{align}\label{eq:finTransform}
    \tilde{\gamma}^2\left(-\frac{\tilde{\mu}}{\tilde{\eta}_t}\right)\nu
    +\frac{Z}{\tilde{\eta}_t}\tilde{\gamma}^3
    +\nu[t+\alpha-1]
    +\frac{\nu}{t+1+\alpha}
    \leq 0.
\end{align}
Since the above condition needs to be satisfied $\forall t\geq 0$ and the expression on the left hand  side of the inequality is a convex function with respect to $t$  ($1/\eta_t$ is linear in $t$ and $\frac{1}{t+1+\alpha}$ is convex),
 it is sufficient to satisfy this condition for $t\to\infty $ and $t=0$. To obtain these limits, we first express~\eqref{eq:finTransform} as follows:
 \begin{align}\label{eq:fin2}
    -\tilde{\mu}\tilde{\gamma}(t+\alpha)\nu
    +Z \tilde{\gamma}^2(t+\alpha)
    +\nu[t+\alpha-1]
    +\frac{\nu}{t+1+\alpha}
    \leq 0.
 \end{align}
 Upon $t\to\infty$ considering the dominant terms yields
 \begin{align}\label{eq:condinf}
     &-\tilde{\gamma}\tilde{\mu}\nu t
    +Z \tilde{\gamma}^2t
    +\nu t
    \leq0
    \nonumber \\&
    \Rightarrow
    \left[1-\tilde{\gamma}\tilde{\mu}\right]\nu t
    +Z \tilde{\gamma}^2 t
    \leq 0.
 \end{align}
To satisfy~\eqref{eq:condinf}, the necessary condition is given by:
\begin{equation}
    \tilde{\mu}\tilde{\gamma}-1>0,
\end{equation}
 \begin{align} \label{eq:nu1}
     \nu \geq \frac{\tilde{\gamma}^2Z_2}{\tilde{\mu}\tilde{\gamma}-1}.
 \end{align}
 Also, upon $t\rightarrow 0$, from~\eqref{eq:fin2} we have
 \begin{align}
    &-\tilde{\mu}\tilde{\gamma}\alpha\nu
    +Z \tilde{\gamma}^2\alpha
    +\nu[\alpha-1]
    +\frac{\nu}{1+\alpha}\leq0
    \nonumber \\&
    \Rightarrow
    \nu\left(\alpha(\tilde{\mu}\tilde{\gamma}-1)+\frac{\alpha}{1+\alpha}\right)
    \geq
  \tilde{\gamma}^2 Z\alpha,
 \end{align}
which implies:
\begin{align} \label{eq:nu2}
    \nu \geq \frac{\tilde{\gamma}^2Z_2}{(\tilde{\mu}\tilde{\gamma}-1)+\frac{1}{1+\alpha}}.
\end{align}
Compared with~\eqref{eq:nu2}, ~\eqref{eq:nu1} imposes a stricter condition on $\nu$. Therefore, when 
\begin{align}
    \nu \geq Z\frac{\beta^2\gamma^2}{\mu\gamma-1},
\end{align}
we complete the induction and thus the proof.

\end{proof}

\section{Proof of Lemma \ref{lem:consensus-error}} \label{app:consensus-error}

\begin{lemma} \label{lem:consensus-error}
If the consensus matrix $\mathbf{V}_{{c}}$ satisfies Assumption~\ref{assump:cons}, then after performing $\Gamma^{(t)}_{{c}}$ rounds of consensus in cluster $\mathcal{S}_c$, the consensus error $\mathbf e_i^{(t)}$ is upper-bounded as:
\begin{equation} 
   \Vert \mathbf e_i^{(t)} \Vert  \hspace{-.5mm}  \leq (\lambda_{{c}})^{\Gamma^{(t)}_{{c}}}
\sqrt{s_c}\underbrace{\max_{j,j'\in\mathcal S_c}\Vert\tilde{\mathbf w}_j^{(t)}-\tilde{\mathbf w}_{j'}^{(t)}\Vert}_{\triangleq \Upsilon^{(t)}_{{c}}},
~ \forall i\in \mathcal{S}_c,
\end{equation}
where each $\lambda_{{c}}$ is a constant such that $1 > \lambda_{{c}} \geq \rho \left(\mathbf{V}_{{c}}-\frac{\textbf{1} \textbf{1}^\top}{s_c} \right)$.
\end{lemma}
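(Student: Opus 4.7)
The plan is to cast the per-round D2D update in matrix form and then invoke the spectral properties of $\mathbf{V}_c$ guaranteed by Assumption~\ref{assump:cons}. First, I will stack the per-device vectors $\mathbf{z}_i^{(t')}$ into a matrix $\mathbf{Z}^{(t')}\in\mathbb{R}^{s_c\times M}$ whose $i$-th row is $\mathbf{z}_i^{(t')\top}$. The update rule \eqref{eq:ConsCenter} then reads $\mathbf{Z}^{(t'+1)}=\mathbf{V}_c\mathbf{Z}^{(t')}$, so after $\Gamma_c^{(t)}$ rounds $\mathbf{Z}^{(\Gamma_c^{(t)})}=\mathbf{V}_c^{\,\Gamma_c^{(t)}}\mathbf{Z}^{(0)}$ with $\mathbf{Z}^{(0)}$ the matrix of intermediate models $\widetilde{\mathbf w}_j^{(t)}$.

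Next I will exploit the key algebraic identity that follows from Assumption~\ref{assump:cons}: letting $\mathbf{P}=\tfrac{1}{s_c}\mathbf{1}\mathbf{1}^\top$, since $\mathbf{V}_c\mathbf{1}=\mathbf{1}$ and $\mathbf{V}_c$ is symmetric, one has $\mathbf{V}_c\mathbf{P}=\mathbf{P}\mathbf{V}_c=\mathbf{P}^2=\mathbf{P}$. A short binomial expansion then yields $(\mathbf{V}_c-\mathbf{P})^{k}=\mathbf{V}_c^{k}-\mathbf{P}$ for every $k\geq 1$. Applied to $\mathbf{Z}^{(0)}$ this gives
\begin{equation}
\mathbf{Z}^{(\Gamma_c^{(t)})}-\mathbf{1}\bar{\mathbf w}_c^{(t)\top}=\bigl(\mathbf{V}_c^{\,\Gamma_c^{(t)}}-\mathbf{P}\bigr)\mathbf{Z}^{(0)}=(\mathbf{V}_c-\mathbf{P})^{\Gamma_c^{(t)}}\bigl(\mathbf{Z}^{(0)}-\mathbf{1}\bar{\mathbf w}_c^{(t)\top}\bigr),
\end{equation}
where the second equality uses $(\mathbf{V}_c-\mathbf{P})\mathbf{1}=\mathbf{0}$, which annihilates the mean component.

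From here the bound is straightforward. Since $\mathbf{V}_c-\mathbf{P}$ is symmetric, its operator norm equals $\rho(\mathbf{V}_c-\mathbf{P})\leq\lambda_c<1$, so taking Frobenius norms (submultiplicative against the spectral norm) gives $\|\mathbf{Z}^{(\Gamma_c^{(t)})}-\mathbf{1}\bar{\mathbf w}_c^{(t)\top}\|_F\leq\lambda_c^{\Gamma_c^{(t)}}\|\mathbf{Z}^{(0)}-\mathbf{1}\bar{\mathbf w}_c^{(t)\top}\|_F$. For any single row $i$, $\|\mathbf e_i^{(t)}\|^2$ is dominated by the full squared Frobenius norm, so
\begin{equation}
\|\mathbf e_i^{(t)}\|^2\leq\lambda_c^{2\Gamma_c^{(t)}}\sum_{j\in\mathcal S_c}\bigl\|\widetilde{\mathbf w}_j^{(t)}-\bar{\mathbf w}_c^{(t)}\bigr\|^2.
\end{equation}
Finally, because $\bar{\mathbf w}_c^{(t)}$ is the average of the $\widetilde{\mathbf w}_j^{(t)}$'s, the triangle inequality yields $\|\widetilde{\mathbf w}_j^{(t)}-\bar{\mathbf w}_c^{(t)}\|\leq\max_{j'}\|\widetilde{\mathbf w}_j^{(t)}-\widetilde{\mathbf w}_{j'}^{(t)}\|\leq\Upsilon_c^{(t)}$; summing over $s_c$ devices and taking a square root delivers the claimed bound $\|\mathbf e_i^{(t)}\|\leq\lambda_c^{\Gamma_c^{(t)}}\sqrt{s_c}\,\Upsilon_c^{(t)}$.

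The only mildly subtle step is the identity $(\mathbf{V}_c-\mathbf{P})^k=\mathbf{V}_c^k-\mathbf{P}$; everything else is a routine combination of spectral-norm contraction and the definition of $\Upsilon_c^{(t)}$. I do not anticipate any real obstacle, only the need to be careful about where the spectral-radius bound from Assumption~\ref{assump:cons} enters, and about the fact that $\lambda_c$ controls the operator norm on the subspace orthogonal to $\mathbf{1}$ (which is exactly the subspace into which $\mathbf{Z}^{(0)}-\mathbf{1}\bar{\mathbf w}_c^{(t)\top}$ lives).
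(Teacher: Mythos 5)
Your proposal is correct and follows essentially the same route as the paper's proof: both cast the D2D rounds in matrix form, observe that the centered iterate is acted on by $\left(\mathbf{V}_c-\frac{\mathbf{1}\mathbf{1}^\top}{s_c}\right)^{\Gamma_c^{(t)}}$ whose spectral norm is at most $\lambda_c^{\Gamma_c^{(t)}}$, bound a single row by the Frobenius norm, and bound the initial dispersion by $s_c\,(\Upsilon_c^{(t)})^2$. The only cosmetic differences are that you make the identity $(\mathbf{V}_c-\mathbf{P})^k=\mathbf{V}_c^k-\mathbf{P}$ explicit and bound each $\Vert\widetilde{\mathbf w}_j^{(t)}-\bar{\mathbf w}_c^{(t)}\Vert$ directly by $\Upsilon_c^{(t)}$, whereas the paper passes through $\frac{1}{s_c}\sum_{j,j'}\Vert\widetilde{\mathbf w}_j^{(t)}-\widetilde{\mathbf w}_{j'}^{(t)}\Vert^2$; both are equivalent.
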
 
\begin{proof}
    The evolution of the devices' parameters can be described by~\eqref{eq:ConsCenter} as:
  \begin{equation}
      \mathbf{W}^{(t)}_{{c}}= \left(\mathbf{V}_{{c}}\right)^{\Gamma^{(t)}_{{c}}} \widetilde{\mathbf{W}}^{(t)}_{{c}}, ~t\in\mathcal T_k,
  \end{equation}
  where 
  \begin{align}
      \mathbf{W}^{(t)}_{{c}}=\left[\mathbf{w}^{(t)}_{{c_1}},\mathbf{w}^{(t)}_{{c_2}},\dots,\mathbf{w}^{(t)}_{{s_c}}\right]^\top
  \end{align}
  and 
  \begin{align}
      \widetilde{\mathbf{W}}^{(t)}_{{c}}=\left[\tilde{\mathbf{w}}^{(t)}_{{c_1}},\tilde{\mathbf{w}}^{(t)}_{{c_2}},\dots,\tilde{\mathbf{w}}^{(t)}_{{s_c}}\right]^\top.
  \end{align}
  
  Let matrix $\overline{\mathbf{W}}^{(t)}_{{c}}$ denote be the matrix with rows given by the average model parameters across the cluster, it can be represented as:
  \begin{align}
      \overline{\mathbf{W}}^{(t)}_{{c}}=\frac{\mathbf 1_{s_c} \mathbf 1_{s_c}^\top\widetilde{\mathbf{W}}^{(t)}_{{c}}}{s_c}.
  \end{align}
  We then define $\mathbf E^{(t)}_{{c}}$ as 
  \begin{align}
      \mathbf E^{(t)}_{{c}}= {\mathbf{W}}^{(t)}_{{c}}-\overline{\mathbf{W}}^{(t)}_{{c}}
      =[\left(\mathbf{V}_{{c}}\right)^{\Gamma^{(t)}_{{c}}}-\mathbf 1\mathbf 1^\top/s_c] [\widetilde{\mathbf{W}}^{(t)}_{{c}}-\overline{\mathbf{W}}^{(t)}_{{c}}]
      ,
  \end{align}
  so that $[\mathbf E^{(t)}_{{c}}]_{i,:}=\mathbf e_{i}^{(t)}$, where $[\mathbf E^{(t)}_{{c}}]_{i,:}$ is the $i$th row of $\mathbf E^{(t)}_{{c}}$.
  
   Therefore, using Assumption \ref{assump:cons}, we can bound the consensus error as
  \begin{align} \label{eq:consensus-bound-1}
      &\Vert\mathbf e_{i}^{(t)}\Vert^2\leq
      \mathrm{trace}((\mathbf E^{(t)}_{{c}})^{\top}\mathbf E^{(t)}_{{c}})
      \\&\nonumber
      =
      \mathrm{trace}\Big(
      [\widetilde{\mathbf{W}}^{(t)}_{{c}}{-}\overline{\mathbf{W}}^{(t)}_{{c}}]^\top
      [\left(\mathbf{V}_{{c}}\right)^{\Gamma^{(t)}_{{c}}}{-}\mathbf 1\mathbf 1^\top/s_c]^2 [\widetilde{\mathbf{W}}^{(t)}_{{c}}{-}\overline{\mathbf{W}}^{(t)}_{{c}}]
        \Big)
        \\&
        \leq (\lambda_{{c}})^{2\Gamma^{(t)}_{{c}}}
        \sum_{j=1}^{s_c}\Vert\tilde{\mathbf w}_j^{(t)}-\bar{\mathbf w}_c^{(t)}\Vert^2
        \nonumber
        \\&
        \leq (\lambda_{{c}})^{2\Gamma^{(t)}_{{c}}}
        \frac{1}{s_c}\sum_{j,j'=1}^{s_c}\Vert\tilde{\mathbf w}_j^{(t)}-\tilde{\mathbf w}_{j'}^{(t)}\Vert^2
        \nonumber
        \\&
        \nonumber
        \leq (\lambda_{{c}})^{2\Gamma^{(t)}_{{c}}}
        s_c\max_{j,j'\in\mathcal S_c}\Vert\tilde{\mathbf w}_j^{(t)}-\tilde{\mathbf w}_{j'}^{(t)}\Vert^2.
    \end{align}
  The result of the Lemma directly follows.
\end{proof}
\section{Proof of Lemma \ref{lem1}} \label{app:PL-bound}
 
\begin{lemma} \label{lem1}
Under Assumption \ref{beta}, we have
    \begin{align*}
        &-\frac{\tilde{\eta}_{t}}{\beta}\nabla F(\bar{\mathbf w}^{(t)})^\top \sum\limits_{c=1}^N\varrho_c\frac{1}{s_c}\sum\limits_{j\in\mathcal{S}_c} \nabla F_j(\mathbf w_j^{(t)})
        \leq
        -\tilde{\mu}\tilde{\eta}_{t}(F(\bar{\mathbf w}^{(t)})-F(\mathbf w^*))
        \nonumber \\&
        -\frac{\tilde{\eta}_{t}}{2\beta}\Big\Vert\sum\limits_{c=1}^N\varrho_c\frac{1}{s_c}\sum\limits_{j\in\mathcal{S}_c} \nabla F_j(\mathbf w_j^{(t)})\Big\Vert^2
        +\frac{\tilde{\eta}_{t}\beta}{2}\sum\limits_{c=1}^N\varrho_c \frac{1}{s_c}\sum\limits_{j\in\mathcal{S}_c}\Big\Vert\bar{\mathbf w}^{(t)}-\mathbf w_j^{(t)}\Big\Vert^2.
    \end{align*}
\end{lemma}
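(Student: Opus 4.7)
The plan is to apply the polarization identity $-2\mathbf{a}^\top\mathbf{b} = -\|\mathbf{a}\|^2 - \|\mathbf{b}\|^2 + \|\mathbf{a}-\mathbf{b}\|^2$ to the inner product on the left-hand side, with $\mathbf{a} = \nabla F(\bar{\mathbf w}^{(t)})$ and $\mathbf{b} = G \triangleq \sum_c \varrho_c \frac{1}{s_c} \sum_{j\in\mathcal{S}_c} \nabla F_j(\mathbf{w}_j^{(t)})$. This immediately rewrites the left-hand side as
\begin{equation*}
-\frac{\tilde{\eta}_t}{2\beta}\Vert\nabla F(\bar{\mathbf w}^{(t)})\Vert^2 - \frac{\tilde{\eta}_t}{2\beta}\Vert G\Vert^2 + \frac{\tilde{\eta}_t}{2\beta}\Vert\nabla F(\bar{\mathbf w}^{(t)})-G\Vert^2,
\end{equation*}
which already supplies the $-\frac{\tilde{\eta}_t}{2\beta}\Vert G\Vert^2$ term appearing in the target bound.

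Next I would handle the two remaining terms separately. For the first, I would invoke the Polyak--Łojasiewicz inequality, which follows from $\mu$-strong convexity of $F$: $\Vert\nabla F(\bar{\mathbf w}^{(t)})\Vert^2 \geq 2\mu(F(\bar{\mathbf w}^{(t)})-F(\mathbf{w}^*))$. Substituting gives $-\frac{\tilde{\eta}_t}{2\beta}\Vert\nabla F(\bar{\mathbf w}^{(t)})\Vert^2 \leq -\frac{\mu\tilde{\eta}_t}{\beta}(F(\bar{\mathbf w}^{(t)})-F(\mathbf{w}^*)) = -\tilde{\mu}\tilde{\eta}_t(F(\bar{\mathbf w}^{(t)})-F(\mathbf{w}^*))$, using the convention $\tilde{\mu}=\mu/\beta$ from Appendix~\ref{app:notations}. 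This yields the first term on the right-hand side.

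For the cross-term $\Vert\nabla F(\bar{\mathbf w}^{(t)})-G\Vert^2$, I would use the fact that $\nabla F(\bar{\mathbf w}^{(t)}) = \sum_c \varrho_c \frac{1}{s_c}\sum_{j\in\mathcal{S}_c}\nabla F_j(\bar{\mathbf w}^{(t)})$ by definition~\eqref{eq:2},\eqref{eq:c}, so the difference is the weighted average $\sum_c \varrho_c \frac{1}{s_c}\sum_{j\in\mathcal{S}_c}[\nabla F_j(\bar{\mathbf w}^{(t)})-\nabla F_j(\mathbf{w}_j^{(t)})]$. Applying Jensen's inequality (convexity of $\Vert\cdot\Vert^2$) with the convex weights $\{\varrho_c/s_c\}$ bounds the squared norm by $\sum_c \varrho_c \frac{1}{s_c}\sum_{j\in\mathcal{S}_c}\Vert\nabla F_j(\bar{\mathbf w}^{(t)})-\nabla F_j(\mathbf{w}_j^{(t)})\Vert^2$. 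Then $\beta$-smoothness of each $F_j$ (Assumption~\ref{beta}) yields $\Vert\nabla F_j(\bar{\mathbf w}^{(t)})-\nabla F_j(\mathbf{w}_j^{(t)})\Vert^2 \leq \beta^2 \Vert\bar{\mathbf w}^{(t)}-\mathbf{w}_j^{(t)}\Vert^2$, so $\frac{\tilde{\eta}_t}{2\beta}\Vert\nabla F(\bar{\mathbf w}^{(t)})-G\Vert^2 \leq \frac{\tilde{\eta}_t\beta}{2}\sum_c \varrho_c \frac{1}{s_c}\sum_{j\in\mathcal{S}_c}\Vert\bar{\mathbf w}^{(t)}-\mathbf{w}_j^{(t)}\Vert^2$, which is the last term in the bound.

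There is no real obstacle here: the proof is a compact three-step chain (polarization identity $\to$ PL on the first term $\to$ Jensen plus smoothness on the third term), and combining the three yields exactly the claimed inequality. The only minor care point is to verify the weights $\varrho_c/s_c$ sum to one (they do, since $\sum_c\varrho_c = 1$ and $\sum_{j\in\mathcal{S}_c}\frac{1}{s_c} = 1$) so that Jensen's inequality applies directly.
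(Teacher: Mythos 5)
Your proposal is correct and follows essentially the same route as the paper's proof in Appendix F: the polarization identity $-2\mathbf a^\top\mathbf b=-\Vert\mathbf a\Vert^2-\Vert\mathbf b\Vert^2+\Vert\mathbf a-\mathbf b\Vert^2$, the PL consequence of $\mu$-strong convexity for the first term, and Jensen's inequality combined with $\beta$-smoothness of each $F_j$ for the cross-term. Your explicit check that the weights $\varrho_c/s_c$ sum to one is a nice touch that the paper leaves implicit.
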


\begin{proof}  Since $-2 \mathbf a^\top \mathbf b = -\Vert \mathbf a\Vert^2-\Vert \mathbf b \Vert^2 + \Vert \mathbf a- \mathbf b\Vert^2$ holds for any two vectors $\mathbf a$ and $\mathbf b$ with real elements, we have
    \begin{align} \label{14}
        &-\frac{\tilde{\eta}_{t}}{\beta}\nabla F(\bar{\mathbf w}^{(t)})^\top
        \sum\limits_{c=1}^N\varrho_c\frac{1}{s_c}\sum\limits_{j\in\mathcal{S}_c} \nabla F_j(\mathbf w_j^{(t)})
        \nonumber \\&
        =\frac{\tilde{\eta}_{t}}{2\beta}\bigg[-\Big\Vert\nabla F(\bar{\mathbf w}^{(t)})\Big\Vert^2
        -\Big\Vert\sum\limits_{c=1}^N\varrho_c\frac{1}{s_c}\sum\limits_{j\in\mathcal{S}_c} \nabla F_j(\mathbf w_j^{(t)})\Big\Vert^2
        \nonumber \\&
        +\Big\Vert\nabla F(\bar{\mathbf w}^{(t)})-\sum\limits_{c=1}^N\varrho_c\frac{1}{s_c}\sum\limits_{j\in\mathcal{S}_c} \nabla F_j(\mathbf w_j^{(t)})\Big\Vert^2\bigg].
    \end{align}
    Since $\Vert\cdot\Vert^2$ is a convex function, using Jenson's inequality, we get: $\Vert \sum_{i=1}^j {c_j} \mathbf a_j  \Vert^2 \leq \sum_{i=1}^j {c_j} \Vert  \mathbf a_j  \Vert^2 $, where $\sum_{i=1}^j {c_j} =1$. Using this fact in~\eqref{14} yields
    \begin{align} \label{20}
        &-\frac{\tilde{\eta}_{t}}{\beta}\nabla F(\bar{\mathbf w}^{(t)})^\top \sum\limits_{c=1}^N\varrho_c\frac{1}{s_c}\sum\limits_{j\in\mathcal{S}_c} \nabla F_j(\mathbf w_j^{(t)})
        \nonumber \\&
        \leq\frac{\tilde{\eta}_{t}}{2\beta}\bigg[-\Big\Vert\nabla F(\bar{\mathbf w}^{(t)})\Big\Vert^2
        -\Big\Vert\sum\limits_{c=1}^N\varrho_c\frac{1}{s_c}\sum\limits_{j\in\mathcal{S}_c} \nabla F_j(\mathbf w_j^{(t)})\Big\Vert^2
        \nonumber \\&
        +\sum\limits_{c=1}^N\varrho_c\frac{1}{s_c}\sum\limits_{j\in\mathcal{S}_c}\Big\Vert\nabla F_j(\bar{\mathbf w}^{(t)})-\nabla F_j(\mathbf w_j^{(t)})\Big\Vert^2\bigg].
    \end{align}
    
   Using $\mu$-strong convexity of $F(.)$, we get: $\Big\Vert\nabla F(\bar{\mathbf w}^{(t)})\Big\Vert^2 \geq 2\tilde{\mu}\beta (F(\bar{\mathbf w}^{(t)})-F(\mathbf w^*))$. Also, using $\beta$-smoothness of $F_j(\cdot)$ we get $\Big\Vert\nabla F_j(\bar{\mathbf w}^{(t)})-\nabla F_j(\mathbf w_j^{(t)})\Big\Vert^2 \leq \beta^2 \Vert\bar{\mathbf w}^{(t)}-\mathbf w_j^{(t)}\Big\Vert^2$, $\forall j$. Using these facts in~\eqref{20} yields:
    \begin{align}
        &-\frac{\tilde{\eta}_{t}}{\beta}\nabla F(\bar{\mathbf w}^{(t)})^\top \sum\limits_{c=1}^N\varrho_c\frac{1}{s_c}\sum\limits_{j\in\mathcal{S}_c} \nabla F_j(\mathbf w_j^{(t)})
        \leq
        -\tilde{\mu}\tilde{\eta}_{t}(F(\bar{\mathbf w}^{(t)})-F(\mathbf w^*))
        \nonumber \\&
        -\frac{\tilde{\eta}_{t}}{2\beta}\Big\Vert\sum\limits_{c=1}^N\varrho_c\frac{1}{s_c}\sum\limits_{j\in\mathcal{S}_c} \nabla F_j(\mathbf w_j^{(t)})\Big\Vert^2
        +\frac{\tilde{\eta}_{t}\beta}{2}\sum\limits_{c=1}^N\varrho_c \frac{1}{s_c}\sum\limits_{j\in\mathcal{S}_c}\Big\Vert\bar{\mathbf w}^{(t)}-\mathbf w_j^{(t)}\Big\Vert^2,
    \end{align}
    which concludes the proof.
\end{proof}

\section{Proof of Fact \ref{fact:1}} \label{app:fact1}
\begin{fact}\label{fact:1} For an arbitrary set of $n$ random variables $x_1,\cdots,x_n$, we have: 
 \begin{equation}
     \sqrt{\mathbb E\left[\left(\sum\limits_{i=1}^{n} x_i\right)^2\right]}\leq \sum\limits_{i=1}^{n} \sqrt{\mathbb E[x_i^2]}.
 \end{equation}
\end{fact}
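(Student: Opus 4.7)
The plan is to recognize Fact~\ref{fact:1} as the triangle inequality for the $L^2(\Omega)$ norm $\|x\|_{L^2} \triangleq \sqrt{\mathbb{E}[x^2]}$, i.e., Minkowski's inequality with exponent $2$. I would prove it by induction on $n$, with the base case $n=2$ being the only nontrivial calculation; the extension to general $n$ is then immediate.

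For the base case, I would start by expanding the square inside the expectation:
\begin{equation*}
\mathbb{E}[(x_1+x_2)^2] = \mathbb{E}[x_1^2] + 2\,\mathbb{E}[x_1 x_2] + \mathbb{E}[x_2^2].
\end{equation*}
The key tool is the Cauchy--Schwarz inequality in $L^2$, which gives $\mathbb{E}[x_1 x_2] \leq \sqrt{\mathbb{E}[x_1^2]}\sqrt{\mathbb{E}[x_2^2]}$. Substituting this bound into the expansion yields
\begin{equation*}
\mathbb{E}[(x_1+x_2)^2] \leq \mathbb{E}[x_1^2] + 2\sqrt{\mathbb{E}[x_1^2]\,\mathbb{E}[x_2^2]} + \mathbb{E}[x_2^2] = \left(\sqrt{\mathbb{E}[x_1^2]} + \sqrt{\mathbb{E}[x_2^2]}\right)^2,
\end{equation*}
and taking square roots of both sides establishes the claim for $n=2$.

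For the inductive step, assuming the inequality holds for some $n \geq 2$, I would let $y = \sum_{i=1}^{n} x_i$ and apply the base case to the two random variables $y$ and $x_{n+1}$:
\begin{equation*}
\sqrt{\mathbb{E}\left[\left(\sum_{i=1}^{n+1} x_i\right)^2\right]} = \sqrt{\mathbb{E}[(y + x_{n+1})^2]} \leq \sqrt{\mathbb{E}[y^2]} + \sqrt{\mathbb{E}[x_{n+1}^2]}.
\end{equation*}
Invoking the induction hypothesis on $\sqrt{\mathbb{E}[y^2]}$ then yields the bound $\sum_{i=1}^{n+1} \sqrt{\mathbb{E}[x_i^2]}$, completing the induction.

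There is no real obstacle in this proof; the only subtlety is being careful that Cauchy--Schwarz requires finite second moments, which we may implicitly assume (and which is justified in the uses of Fact~\ref{fact:1} throughout the paper, where the $x_i$ are norms of bounded-variance quantities). Alternatively, one could cite Minkowski's inequality directly, but writing out the two-variable case explicitly keeps the proof self-contained and transparent.
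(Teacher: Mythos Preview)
Your proof is correct and uses essentially the same idea as the paper: bound the cross terms $\mathbb{E}[x_i x_j]$ via Cauchy--Schwarz and recognize the resulting expression as a perfect square. The only difference is organizational---you proceed by induction from the $n=2$ case, whereas the paper expands $\bigl(\sum_i x_i\bigr)^2$ for general $n$ in one step and applies Cauchy--Schwarz to all cross terms simultaneously.
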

\begin{proof} The proof can be carried out through the following set of algebraic manipulations:
    \begin{align}
        &\sqrt{\mathbb E\left[\left(\sum\limits_{i=1}^{n} x_i\right)^2\right]}
        =
        \sqrt{\sum\limits_{i=1}^{n} \mathbb E[x_i^2] +\sum\limits_{i=1}^{n}\sum\limits_{j=1,j\neq i}^{n}\mathbb E [x_i x_j]}
        \nonumber \\&
        \overset{(a)}{\leq}
        \sqrt{\sum\limits_{i=1}^{n} \mathbb E[x_i^2] +\sum\limits_{i=1}^{n}\sum\limits_{j=1,j\neq i}^{n}\sqrt{\mathbb E [x_i^2] \mathbb E[x_j^2]]}}
        =\sqrt{\Big(\sum\limits_{i=1}^{n} \sqrt{\mathbb E[x_i^2]}\Big)^2}
        =
        \sum\limits_{i=1}^{n} \sqrt{\mathbb E[x_i^2]},
    \end{align}
    where $(a)$ is due to the fact that $\mathbb E[XY] \leq \sqrt{\mathbb E[X^2]\mathbb E[ Y^2]}$ resulted from Cauchy-Schwarz inequality.
\end{proof}

\endgroup

\end{document}